\definecolor{dgreen}{rgb}{0.00,0.49,0.00}
\definecolor{dblue}{rgb}{0,0.08,0.75}
\definecolor{dred}{rgb}{0.75,0.0,0.0}
\newcommand{\xmark}{\ding{55}}
\let\mathsf\relax    % just to avoid a message in the log file
\DeclareRobustCommand{\mathsf}[1]{\text{\normalfont\sffamily#1}}
\newcommand{\X}{{\mathcal X}}
\newcommand{\Y}{{\mathcal Y}}
\newcommand{\EE}{\mathbb{E}}
\renewcommand{\H}{\mathcal{H}}
\newcommand{\hh}{{\H}}
\newcommand{\R}{\mathbb{R}}
\newcommand{\N}{\mathbb{N}}
\newcommand{\la}{\lambda}
\newcommand{\eqals}[1]{\begin{align*}#1\end{align*}}
\newcommand{\eqal}[1]{\begin{align}#1\end{align}}
\renewcommand{\eqals}[1]{\eqal{#1}}
\newcommand{\msf}[1]{\mathsf{#1}}
\newcommand{\mbf}[1]{\mathbf{#1}}
\newcommand{\diag}{\ensuremath{\text{\rm diag}}}
\newcommand{\tr}{\ensuremath{\text{\rm Tr}}}
\renewcommand{\vec}{\ensuremath{\text{\rm vec}}}
\newcommand{\argmin}{\operatornamewithlimits{argmin}}
\newcommand{\vol}{\operatorname{vol}}
\renewcommand{\paragraph}[1]{\vspace{1em}\noindent{\bfseries #1}.}
\declaretheorem[name=Theorem,refname=Thm.]{theorem}
\declaretheorem[name=Lemma,sibling=theorem]{lemma}
\declaretheorem[name=Proposition,refname=Prop.,sibling=theorem]{proposition}
\declaretheorem[name=Remark]{remark}
\declaretheorem[name=Corollary,refname=Cor.,sibling=theorem]{corollary}
\declaretheorem[name=Definition,refname=Def.]{definition}
\declaretheorem[name=Assumption,refname=Asm.]{assumption}
\declaretheorem[name=Example]{example}
\crefname{assumption}{Assumption}{Assumptions}
\crefname{equation}{}{}
\crefname{figure}{Fig.}{Fig.}
\crefname{table}{Table}{Tables}
\crefname{section}{Sec.}{Sec.}
\crefname{theorem}{Thm.}{Thm.}
\crefname{lemma}{Lemma}{Lemmas}
\crefname{corollary}{Cor.}{Cor.}
\crefname{example}{Example}{Examples}
\crefname{remark}{Remark}{Remarks}
\crefname{algorithm}{Alg.}{Algorightms}
\crefname{appendix}{Appendix}{Appendices}
\crefname{subappendix}{Appendix}{Appendices}
\crefname{subsubappendix}{Appendix}{Appendices}
\newcommand{\psd}{\mathbb{S}_+}
\newcommand{\eps}{\varepsilon}
\newcommand{\codecomment}{\mathbin{/\mkern-4mu/}}
\newcommand{\mm}{\msf{M}}
\newcommand{\pp}[2]{f({#1}\,;\, {#2})}
\providecommand{\scal}[2]{\left\langle{#1},{#2}\right\rangle}
\providecommand{\nor}[1]{\left\|{#1}\right\|}
\newcommand{\polylog}{\operatorname{polylog}}
\newcommand{\bd}{\begin{definition}}
\newcommand{\ed}{\end{definition}}
\newlist{enumdef}{enumerate}{1} % set up a dedicated enumeration env.
\setlist[enumdef]{label=\upshape(\alph*),ref=\upshape\thedefinition(\alph*)}
\newcommand{\bi}{\begin{itemize}}
\newcommand{\ei}{\end{itemize}}
\newcommand{\ba}{\begin{assumption}}
\newcommand{\ea}{\end{asssumption}}
\newlist{enumasm}{enumerate}{1} % set up a dedicated enumeration env.
\setlist[enumasm]{label=\upshape(\alph*),ref=\upshape\theass(\alph*)}
\newcommand{\br}{\begin{remark}}
\newcommand{\er}{\end{remark}}
\newlist{enumrem}{enumerate}{1} % set up a dedicated enumeration env.
\setlist[enumrem]{label=\upshape(\alph*),ref=\upshape\theremark(\alph*)}
\newcommand{\bp}{\begin{proposition}}
\newcommand{\ep}{\end{proposition}}
\newlist{enumprop}{enumerate}{1} % set up a dedicated enumeration env.
\setlist[enumprop]{label=\upshape(\alph*),ref=\upshape\theproposition(\alph*)}
\newcommand{\blm}{\begin{lemma}}
\newcommand{\elm}{\end{lemma}}
\newlist{enumlm}{enumerate}{1} % set up a dedicated enumeration env.
\setlist[enumlm]{label=\upshape(\alph*),ref=\upshape\thelemma(\alph*)}
\newcommand{\bt}{\begin{theorem}}
\newcommand{\et}{\end{theorem}}
\newlist{enumthm}{enumerate}{1} % set up a dedicated enumeration env.
\setlist[enumthm]{label=\upshape(\alph*),ref=\upshape\thetheorem(\alph*)}
\newcommand{\bcor}{\begin{corollary}}
\newcommand{\ecor}{\end{corollary}}
\newlist{enumcor}{enumerate}{1} % set up a dedicated enumeration env.
\setlist[enumcor]{label=\upshape(\alph*),ref=\upshape\thecorollary(\alph*)}
\newcommand{\bex}{\begin{example}}
\newcommand{\eex}{\end{example}}
\newlist{enumex}{enumerate}{1} % set up a dedicated enumeration env.
\setlist[enumex]{label=\upshape(\alph*),ref=\upshape\theexample(\alph*)}
\crefname{page}{page}{page}
\newcommand{\citep}{\cite}
\title{PSD Representations for Effective Probability Models}
\author{%
  Alessandro Rudi${}^\star$ $\qquad$ Carlo Ciliberto${}^{\dagger}$\\
  $ $ \\
  ${}^{\star}$ Inria, École normale supérieure, CNRS, PSL Research University, Paris, France\\
  ${}^{\dagger}$ Department of Computer Science, University College London, London, UK \\
  \texttt{alessandro.rudi@inria.fr} $\qquad$ \texttt{c.ciliberto@ucl.ac.uk} \\
  % \And
  % Coauthor \\
  % Affiliation \\
  % Address \\
  % \texttt{email} \\
  % \AND
  % Coauthor \\
  % Affiliation \\
  % Address \\
  % \texttt{email} \\
  % \And
  % Coauthor \\
  % Affiliation \\
  % Address \\
  % \texttt{email} \\
  % \And
  % Coauthor \\
  % Affiliation \\
  % Address \\
  % \texttt{email} \\
}
\date{}
\begin{document}

\maketitle

\begin{abstract}
\noindent Finding a good way to model probability densities is key to probabilistic inference. An ideal model should be able to concisely approximate any probability while being also compatible with two main operations: multiplications of two models (product rule) and marginalization with respect to a subset of the random variables (sum rule). In this work, we show that a recently proposed class of positive semi-definite (PSD) models for non-negative functions is particularly suited to this end. In particular, we characterize both approximation and generalization capabilities of PSD models, showing that they enjoy strong theoretical guarantees. Moreover, we show that we can perform efficiently both sum and product rule in closed form via matrix operations, enjoying the same versatility of mixture models. Our results open the way to applications of PSD models to density estimation, decision theory and inference.
% Preliminary empirical evaluation supports our findings.
\end{abstract}

\section{Introduction}
\label{sec:intro}

Modeling probability distributions is a key task for many applications in machine learning \cite{murphy2012machine,bishop2006}. To this end, several strategies have been proposed in the literature, such as adopting mixture models (e.g. Gaussian mixtures) \cite{bishop2006}, exponential models \cite{sriperumbudur2017density}, implicit generative models \cite{goodfellow2014generative,kingma2013auto} or kernel (conditional) mean embeddings \cite{muandet2016kernel}. An ideal probabilistic model should have two main features: $i)$ efficiently perform key operations for probabilistic inference, such as sum rule (i.e. marginalization) and product rule \cite{bishop2006} and, $ii)$ concisely approximate a large class of probabilities. Finding models that satisfy these two conditions is challenging and current methods tend to tackle only one of the two. Exponential and implicit generative models have typically strong approximation properties
(see e.g. \cite{sriperumbudur2017density,singh2018nonparametric}) but cannot easily perform operations such as marginalization. On the contrary, mixture models are designed to efficient integrate and multiply probabilities, but tend to require a large number of components to approximate complicated distributions.

In principle, mixture models would offer an appealing strategy to model probability densities since they allow for efficient computations when performing key operations such as sum and product rule. However, these advantages in terms of computations, come as a disadvantage in terms of expressiveness: even though mixture models are universal approximators (namely they can approximate arbitrarily well any probability density), they require a significant number $n$ of observations and of components to do that. Indeed it is known that models that are non-negative mixtures of non-negative components lead to learning rates that suffer from the curse of dimensionality. For example, when approximating probabilities on $\R^d$, kernel density estimation (KDE) \cite{parzen1962estimation} with non-negative components has rates as slow as $n^{-2/(4+d)}$ (see, e.g. \cite[page 100]{wand1994kernel}), and cannot improve with the regularity (smoothness) of the target probability (see e.g. \cite[Thm. 16.1]{devroye2012combinatorial} for impossibility results in the one dimensional case). 

In the past decades, this limitation has been overcome by removing either the non-negativity of the weights of the mixture (leading to RBF networks \cite{shawe2000introduction}\cite{tsybakov2008introduction}), or the non-negativity of the components used in the mixture (leading to KDE with oscillating kernel \cite{tsybakov2008introduction}), or both. On the positive side this allows to achieve a learning rate of $n^{-\beta/(2\beta + d)}$, for $\beta$-times differentiable densities on $\R^d$. Note that such rate is {\em minimax optimal} \cite{goldenshluger2014adaptive} and overcomes the curse of dimensionality when $\beta \geq d$. Additionally the resulting model is very {\em concise}, since only $m = O(n^{d/\beta})$ centers are necessary to achieve the optimal rate \cite{turner2021statistical}. However, on the negative side, {\em the resulting model is not a probability}, since it may attain negative values, and so it cannot be used where a proper probability is required.

In this paper, we show that the positive semidefinite (PSD) models, recently proposed in \cite{marteau2020non}, offer a way out of this dichotomy. By construction PSD models generalize mixture models by allowing also for negative weights, while still guaranteeing that the resulting function is non-negative everywhere. Here, we prove that they get the best of both worlds: expressivity (optimal learning and approximation rates with concise models) and flexibility (exact and efficient sum and product rule).

\begin{table}[t]
\resizebox{\textwidth}{!}{  
    \centering
    \begin{tabular}{rcccccc}
    % \toprule
          & \multirow{2}{*}{\bfseries Non-negative} &  {\bfseries Sum} & {\bfseries Product} & {\bfseries Concise} & {\bfseries Optimal} & {\bfseries Efficient} \\
          & & {\bfseries Rule} & {\bfseries Rule} & {\bfseries Approximation} & {\bfseries Learning} & {\bfseries Sampling} \\
         \midrule\\[-0.5em]
          {\bfseries Linear Models} & {\color{dred}\xmark} &  {\color{dgreen}\checkmark} & {\color{dgreen}\checkmark} & {\color{dgreen}\checkmark} & {\color{dgreen}\checkmark} & {\color{dred}\xmark} \\[0.5em]
          {\bfseries Mean Embeddings}& {\color{dred}\xmark} &  {\color{dgreen}\checkmark} & {\color{dgreen}\checkmark} & {\color{dgreen}\checkmark} & {\color{dgreen}\checkmark} & {\color{dred}\xmark}  \\[0.5em]
         {\bfseries Mixture Models} & {\color{dgreen}\checkmark} &  {\color{dgreen}\checkmark} & {\color{dgreen}\checkmark} & {\color{dred}\xmark} & {\color{dred}\xmark} & {\color{dgreen}\checkmark} \\[0.5em]
         {\bfseries Exponential Models} & {\color{dgreen}\checkmark} &  {\color{dred}\xmark} & {\color{dgreen}\checkmark} & {\color{dgreen}\checkmark} & {\color{dgreen}\checkmark} & {\color{dred}\xmark} \\[0.5em]
{\bfseries PSD Models}& {\color{dgreen}\checkmark} &  {\color{dgreen}\checkmark} & {\color{dgreen}\checkmark} & {\color{dgreen}\checkmark} & {\color{dgreen}\checkmark} & {\color{dgreen}\checkmark}\\[-0.1em]
& (see \cite{marteau2020non}) & (\cref{prop:marginalization}) &  (\cref{prop:multiplication}) & (\cref{thm:approximation}) & (\cref{thm:learning}) & (see \cite{marteau2021sampling}) 
\\[1em]
    \end{tabular}
}

    \caption{Summary of the main desirable properties for a probability model.}
    \label{tab:summary}
\end{table}

\paragraph{Contributions} 
The main contributions of this paper are:
\begin{enumerate}[label={\em \roman*)}]
\item Showing that PSD models can perform exact sum and product rules in terms of efficient matrix operations (\cref{sec:operations}).
\item Characterizing the approximation and learning properties of PSD models with respect to a large family of probability densities (\cref{sec:approximation}).
\item Providing a ``compression'' method to control the number of components of a PSD model (introduced in \cref{sec:compression} and analyzed in \cref{sec:effect-compression}). 
% and study its approximation error (\cref{sec:effect-compression});
\item Discussing a number of possible applications of PSD models (\cref{sec:applications}).
\end{enumerate}

\paragraph{Summary of the results} \cref{tab:summary} summarizes all the desirable properties that a probability model should have and which of them are satisfied by state-of-the-art estimators. Among these we have: {\em non-negativity}, the model should be point-wise non-negative; {\em sum and product rule}, the model should allow for efficient computation of such operations between probabilities; {\em concise approximation}, a model with the optimal number of components $m = O(\varepsilon^{-d/\beta})$ is enough to approximate with error $\varepsilon$ a non-negative function in a wide family of smooth $\beta$-times differentiable functions \cite{turner2021statistical}; {\em optimal learning}, the model achieves optimal learning rates of order $n^{-\beta/(2\beta + d)}$  when learning the unknown target probability from i.i.d. samples \cite{tsybakov2008introduction}; {\em efficient sampling} the model allows to extract i.i.d. samples without incurring in the curse of dimensionality in terms of computational complexity. We note that for PSD models, this last property has been recently investigated in \cite{marteau2021sampling}. We consider under the umbrella of {\em linear models} all the models which corresponds to a mixture $\sum_{j=1}^m w_j f_j(x)$ of functions $f_j:\X\to\R$, with either $w_j \geq0, ~ \forall j=1,\dots,m$ or $f_j\geq0, ~ \forall j=1,\dots,m$ or both. We denote by {\em mixture models} the mixture models defined as above, where $w_j \geq 0, f_j \geq 0, ~ \forall j=1,\dots,m$. By {\em mean embeddings}, we denote the kernel mean embedding estimator framework from \cite{smola2007hilbert} see also \cite{muandet2016kernel}). With {\em exponential models} we refer to models of the form $\exp(\sum_{j=1}^m w_j f_j(x))$ with $w_j \in \R, f_j:X \to \R$ \cite{sriperumbudur2017density}. With {\em PSD models} we refer to the framework introduced in \cite{marteau2020non} and studied in this paper. We note that this work contributes in showing that the latter are the only probability models to satisfy all requirements (we recall that non-negativity and efficient sampling have been shown respectively in \cite{marteau2020non} and \cite{marteau2021sampling}).

\paragraph{Notation}
We denote by $\R^d_{++}$ the space vectors in $\R^d$ with positive entries,  $\R^{n \times d}$ the space of $n \times d$ matrices, $\psd^n=\psd(\R^n)$ the space of positive semidefinite $n \times n$ matrices. Given a vector $\eta\in\R^d$, we denote $\diag(\eta)\in\R^{d \times d}$ the diagonal matrix associated to $\eta$. We denote by $A \circ B$ and $A\otimes B$ respectively the entry-wise and Kronecker product between two matrices $A$ and $B$. We denote by $\|A\|_{F}, \|A\|, \det(A), \vec(A)$ and $A^\top$ respectively the Frobenius norm, the operator norm (i.e.  maximum singular value), the determinant, the (column-wise) vectorization of a matrix and the (conjugate) transpose of $A$. With some abuse of notation, where clear from context we write element-wise products and division of vectors $u,v\in\R^{d}$ as $uv, u/v$. Given two matrices $X\in\R^{n\times d_1}, Y\in\R^{n \times d_2}$ with same number of rows, we denote by $[X,Y]\in\R^{n \times (d_1 + d_2)}$ their concatenation row-wise. The term $\mathbf{1}_n\in\R^n$ denotes the vector with all entries equal to $1$.

\section{PSD Models}
Following \cite{marteau2020non}, in this work we consider the family of positive semi-definite (PSD) models, namely non-negative functions parametrized by a feature map $\phi:\X\to\hh$ from an input space $\X$ to a suitable feature space $\hh$ (a separable Hilbert space e.g. $\R^q$) and a linear operator $\mm\in\psd(\hh)$, of the form 
\eqal{\label{eq:psd-models-general}
    \pp{x}{\mm,\phi} = \phi(x)^\top \mm~\phi(x).
}
PSD models offer a general way to parametrize non-negative functions (since $\mm$ is positive semidefinite, $\pp{x}{\mm,\phi}\geq0$ for any $x\in\X$) and enjoy several additional appealing properties discussed in the following. 
%
%\paragraph{Gaussian non-negative models}
%
In this work, we will focus on a special family of models of the form \cref{eq:psd-models-general} to parametrize probability densities over $\X = \R^d$. In particular, we will consider the case where: $i)$ $\phi = \phi_\eta:\R^d\to\hh_\eta$ is a feature map associated to the Gaussian kernel \cite{scholkopf2002} $k_\eta(x,x') = \phi_\eta(x)^\top\phi_\eta(x) = e^{-(x-x')^\top\diag(\eta)(x-x')}$, with $\eta\in\R_{++}^d$ and, $ii)$ the operator $\mm$ lives in the span of $\phi(x_1),\dots, \phi(x_n)$ for a given set of points $(x_i)_{i=1}^n$, namely there exists $A\in\psd^n$ such that $\mm = \sum_{ij} A_{ij} \phi(x_i) \phi(x_j)^\top$. We define a {\em Gaussian PSD model} by specializing the definition in \cref{eq:psd-models-general} as
\eqal{\label{eq:psd-models-gaussian}
    \pp{x}{A, X, \eta} = \sum_{i,j=1}^n A_{i,j} k_\eta(x_i,x)k_\eta(x_j,x), \qquad \forall \, x \in \R^d
}
in terms of the coefficient matrix $A\in\psd^n$, the base points matrix $X\in\R^{n\times d}$, whose $i$-th row corresponds to the point $x_i$ for each $i=1,\dots,n$ and kernel parameter $\eta$. In the following, given two base point matrices $X\in\R^{n\times d}$ and $X'\in\R^{m \times d}$, we denote by $K_{X,X',\eta} \in \R^{n \times m}$ the kernel matrix with entries $(K_{X,X',\eta})_{ij} = k_{\eta}(x_i,x_j')$ where $x_i, x'_j$ are the $i$-th and $j$-th rows of $X, X'$ respectively. When clear from context, in the following we will refer to Gaussian PSD models as PSD models.

\begin{remark}[PSD models generalize Mixture models]\label{rem:psd-models-generalized-mixtures}
Mixture models (a mixture of Gaussian distributions) are a special case of PSD models. Let $A = \diag(a)$ be a diagonal matrix of $n$ positive weights $a\in\R_{++}^n$. We have $\pp{x}{A,X,\eta/2} = \sum_{i=1}^n A_{ii} k_{\eta/2}(x_i,x)^2 = \sum_{i=1}^n a_i k_{\eta}(x_i,x)$.
\end{remark}

\begin{remark}[PSD models allow negative weights]\label{rem:negative-weights}
From \cref{eq:psd-models-gaussian}, we immediately see that PSD models generalize mixture models by allowing also for negative weights: e.g., $\pp{\cdot}{A, X, \eta}$ with $A = (1, -\frac{1}{2}; -\frac{1}{2}, \frac{1}{4}) \in \psd^2, \eta = 1, X = (x_1;x_2)$ and $x_1 = 0, x_2 = 2$, corresponding to $\pp{x}{A, X, \eta} = e^{-2x^2} + \frac{1}{4}e^{-2(x-2)^2} - \frac{1}{e}e^{-2(x-1)^2}$, i.e. a mixture of Gaussians with also negative weights.
\end{remark}

\subsection{Operations with PSD models}
\label{sec:operations}
In \cref{sec:approximation} we will show that PSD models can approximate a wide class of probability densities, significantly outperforming mixture models. Here we show that this improvement does not come at the expenses of computations. In particular, we show that PSD models enjoy the same flexibility of mixture models: $i)$ they are closed with respect to key operations such as marginalization and multiplication and $ii)$ these operations can be performed efficiently in terms of matrix sums/products. The derivation of the results reported in the following is provided in \cref{app:operations}. They follow from well-known properties of the Gaussian function. 

\paragraph{Evaluation}
Evaluating a PSD model in a point $x_0\in\X$ corresponds to $\pp{x=x_0}{A,X,\eta} = K_{X,x_0,\eta}^\top A K_{X,x_0,\eta}$. Moreover, partially evaluating a PSD in one variable yields
\eqal{\label{eq:partial-evaluation}
    \pp{x,y=y_0}{A,[X,Y],(\eta_1,\eta_2)} ~=~ \pp{x}{B,X,\eta_1} \quad \textrm{with}\quad B = A \circ (K_{Y,y_0,\eta_2}K_{Y,y_0,\eta_2}^\top).
}
Note that $\pp{x}{B,X,\eta_1}$ is still a PSD model since $B$ is positive semidefinite.

\paragraph{Sum Rule (Marginalization and Integration)} 
The integral of a PSD model can be computed as
\eqal{\label{eq:integration}
\int \pp{x}{A,X,\eta}~ dx = c_{2\eta}~ \tr(A ~K_{X,X,\frac{\eta}{2}}) \qquad \textrm{with} \qquad c_\eta = \int k_{\eta}(0,x)~dx,
}
where $c_\eta = \pi^{d/2}\det(\diag(\eta))^{-1/2}$. This is particularly useful to model probabiliy densities with PSD models. Let $Z = \int \pp{x}{A,X,\eta} dx$, then the function $\pp{x}{A/Z,X,\eta} = \frac{1}{Z}\pp{x}{A,X,\eta}$ is a probability density. Integrating only one variable of a PSD model we obtain the sum rule.
\begin{restatable}[Sum Rule -- Marginalization]{proposition}{PMarginalization}\label{prop:marginalization}
Let $X\in\R^{n\times d}$, $Y\in\R^{n\times d'}$, $A\in\psd(\R^n)$ and $\eta\in\R_{++}^{d}, \eta'\in\R_{++}^{d'}$. Then, the following integral is a PSD model
\eqal{\label{eq:marginalization}
     \int ~\pp{x,y}{A,[X,Y],(\eta,\eta')} ~dx  = \pp{y}{B,Y,\eta'}, \qquad \textrm{with} \qquad B = c_{2\eta} ~A \circ K_{X,X,\frac{\eta}{2}},
}
\end{restatable}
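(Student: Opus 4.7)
The plan is to reduce the claim to a single Gaussian integral computation, exploiting the product structure of the Gaussian kernel and the Schur product theorem.

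First, I would exploit the factorization of the Gaussian kernel over direct sums of variables: since $\diag(\eta,\eta')$ is block diagonal, one has $k_{(\eta,\eta')}((x_i,y_i),(x,y))=k_\eta(x_i,x)\,k_{\eta'}(y_i,y)$. Substituting into the definition \cref{eq:psd-models-gaussian} and exchanging sum and integral (the sum is finite, so this is unproblematic) gives
\[
\int \pp{x,y}{A,[X,Y],(\eta,\eta')}\,dx \;=\; \sum_{i,j=1}^n A_{ij}\, k_{\eta'}(y_i,y)\,k_{\eta'}(y_j,y) \int k_\eta(x_i,x)\,k_\eta(x_j,x)\,dx .
\]
Hence everything reduces to evaluating the one integral $I_{ij}:=\int k_\eta(x_i,x)\,k_\eta(x_j,x)\,dx$.

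Next I would compute $I_{ij}$ via the standard ``product of two Gaussians'' identity. Setting $u=x_i-x$ and $v=x_j-x$ and using the polarization-type identity $u^\top\diag(\eta)u+v^\top\diag(\eta)v = 2\bigl(\tfrac{u+v}{2}\bigr)^\top\diag(\eta)\bigl(\tfrac{u+v}{2}\bigr)+\tfrac{1}{2}(u-v)^\top\diag(\eta)(u-v)$ (which holds entrywise since $\diag(\eta)$ is diagonal), the exponent in $k_\eta(x_i,x)k_\eta(x_j,x)$ splits into an $x$-dependent Gaussian at the midpoint $\bar x_{ij}=(x_i+x_j)/2$ with parameter $2\eta$, plus an $x$-independent Gaussian in $x_i-x_j$ with parameter $\eta/2$:
\[
k_\eta(x_i,x)\,k_\eta(x_j,x) \;=\; k_{\eta/2}(x_i,x_j)\; k_{2\eta}(\bar x_{ij},x).
\]
By translation invariance of Lebesgue measure, $\int k_{2\eta}(\bar x_{ij},x)\,dx = c_{2\eta}$, where $c_{2\eta}=\pi^{d/2}\det(\diag(2\eta))^{-1/2}$ from \cref{eq:integration}. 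Thus $I_{ij} = c_{2\eta}\,k_{\eta/2}(x_i,x_j) = c_{2\eta}\,(K_{X,X,\eta/2})_{ij}$.

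Substituting back, the marginal becomes $\sum_{i,j} B_{ij}\,k_{\eta'}(y_i,y)k_{\eta'}(y_j,y)$ with $B_{ij}=c_{2\eta}\,A_{ij}(K_{X,X,\eta/2})_{ij}$, i.e. $B=c_{2\eta}\,A\circ K_{X,X,\eta/2}$. The last thing to check is that $B\in\psd^n$, so that the right-hand side is a legitimate Gaussian PSD model: this follows immediately from the Schur (Hadamard) product theorem, since $A\in\psd^n$ by assumption, $K_{X,X,\eta/2}\in\psd^n$ as a Gaussian Gram matrix, and $c_{2\eta}>0$. The only step that requires care is the polarization identity underlying the Gaussian product rule, but it is a routine algebraic calculation in $\R^d$ once one notes that $\diag(\eta)$ commutes with the splitting.
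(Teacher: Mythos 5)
Your proof is correct and follows essentially the same route as the paper's: factor the joint kernel over the two blocks of variables, apply the Gaussian product identity $k_\eta(x_i,x)k_\eta(x_j,x)=k_{\eta/2}(x_i,x_j)\,k_{2\eta}\big(\tfrac{x_i+x_j}{2},x\big)$, and integrate out the $x$-dependent factor to obtain $c_{2\eta}$. The explicit appeal to the Schur product theorem to confirm $B\in\psd^n$ is a welcome addition the paper leaves implicit.
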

The result above shows that we can efficiently marginalize a PSD model with respect to one variable by means of an entry-wise multiplication between two matrices.

\begin{remark}[Integration and marginalization on the hypercube]\label{rem:integration-on-hypercube}
The integrals in \cref{eq:integration,eq:marginalization} can be performed also on when $\X$ is a hypercube $H = \prod_{t=1}^d [a_t,b_t]$ rather than the entire space $\R^d$. This leads to a closed form, where the matrix $K_{X,X,\frac{\eta}{2}}$ is replaced by a suitable $K_{X,X,\frac{\eta}{2},H}$ that can be computed with same number of operations (the full form of such matrix is reported in \cref{app:operations}).  
\end{remark}

\paragraph{Product Rule (Multiplication)}
Multiplying two probabilities is key to several applications in probabilistic inference \cite{bishop2006}. The family of PSD models is closed with respect to this operation. 
\begin{restatable}[Multiplication]{proposition}{PMultiplication}\label{prop:multiplication}
Let $X\in\R^{n\times d_1}$, $Y\in\R^{n\times d_2}$, $Y'\in\R^{m\times d_2}$, $Z\in\R^{m\times d_3}$, $A\in\psd^n$, $B\in\psd^{m}$ and $\eta_1\in\R_{++}^{d_1}$, $\eta_2,\eta_2'\in\R_{++}^{d_2}$, $\eta_3\in\R_{++}^{d_3}$. Then
\eqal{\label{eq:multiplication}
    \pp{x,y}{A,[X,Y],(\eta_1,\eta_2)}
    \pp{y,z}{B,[Y',Z],(\eta_2',\eta_3)} = 
    \pp{x,y,z}{C,W,\eta},
} 
is a PSD model, where $C = (A \otimes B) \circ \left(\vec(K_{Y',Y,\widetilde{\eta}_2})\vec(K_{Y',Y,\widetilde{\eta}_2})^\top\right)$, with $\tilde\eta_2 = \tfrac{\eta_2\eta_2'}{\eta_2+\eta_2'}$, base matrix $W = [X\otimes \mathbf{1}_m, ~ (Y\tfrac{\eta_2}{\eta_2 + \eta_2'})\otimes\mbf{1}_m +  \mathbf{1}_n\otimes (Y'\tfrac{\eta_2'}{\eta_2 + \eta_2'}),~\mathbf{1}_n\otimes Z]$ and $\eta = \big(\eta_1,\eta_2+\eta_2',\eta_3\big)$.
% and $K_{Y',Y,\tilde\eta_2}\in\R^{m \times n}$ the kernel matrix of $(Y',Y)$ pairs, namely $(K_{Y',Y,\tilde\eta_2})_{ij} = k_{\tilde\eta_2}(y_i',y_j)$, for $i\in[m]$ and $j\in[n]$.
\end{restatable}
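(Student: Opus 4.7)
The plan is to reduce the multiplication to a single application of the Gaussian product identity (completing the square) in the shared variable $y$, and then to recognize the resulting quadruple sum as a PSD model indexed by pairs $(i,k)$. The identity we need is the standard one: for any $\alpha,\beta\in\R_{++}^{d_2}$ and $u,v\in\R^{d_2}$,
\eqals{
k_\alpha(u,y)\,k_\beta(v,y) \;=\; k_{\tilde\alpha\beta}(u,v)\;k_{\alpha+\beta}(\mu(u,v),y), \qquad \tilde\alpha\beta = \tfrac{\alpha\beta}{\alpha+\beta},\;\; \mu(u,v) = \tfrac{\alpha u+\beta v}{\alpha+\beta}.
}
This is obtained by writing $\alpha(y-u)^2+\beta(v-y)^2$, completing the square in $y$, and collecting the $y$-independent remainder into a kernel between $u$ and $v$.

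First I would expand the left-hand side of \cref{eq:multiplication} using the definition in \cref{eq:psd-models-gaussian}, which yields a quadruple sum over indices $(i,j,k,l)$ of terms
\eqals{
A_{ij} B_{kl}\,k_{\eta_1}(x_i,x)k_{\eta_1}(x_j,x)\,k_{\eta_2}(y_i,y)k_{\eta_2}(y_j,y)\,k_{\eta_2'}(y'_k,y)k_{\eta_2'}(y'_l,y)\,k_{\eta_3}(z_k,z)k_{\eta_3}(z_l,z).
}
Second, I apply the Gaussian identity to the pairs $\{k_{\eta_2}(y_i,y),k_{\eta_2'}(y'_k,y)\}$ and $\{k_{\eta_2}(y_j,y),k_{\eta_2'}(y'_l,y)\}$ separately, producing factors $k_{\tilde\eta_2}(y_i,y'_k)$ and $k_{\tilde\eta_2}(y_j,y'_l)$ that depend only on the base points, together with $k_{\eta_2+\eta_2'}(\mu_{ik},y)$ and $k_{\eta_2+\eta_2'}(\mu_{jl},y)$ where $\mu_{ik} = \tfrac{\eta_2}{\eta_2+\eta_2'} y_i + \tfrac{\eta_2'}{\eta_2+\eta_2'} y'_k$.

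Third, I reindex the double pair $(i,k)$ as a single index running over $\{1,\dots,nm\}$ and collect terms, which exhibits the product as $\pp{x,y,z}{C,W,\eta}$. The base point associated to index $(i,k)$ is $(x_i,\mu_{ik},z_k)$; writing this as the matrix $W$ with rows stacked so that $i$ varies slowest amounts to exactly the Kronecker expression stated in the proposition. The coefficient indexed by $((i,k),(j,l))$ is $A_{ij}B_{kl}\cdot k_{\tilde\eta_2}(y_i,y'_k)\,k_{\tilde\eta_2}(y_j,y'_l)$, which is precisely the $((i,k),(j,l))$-entry of $(A\otimes B)\circ\bigl(\vec(K_{Y',Y,\tilde\eta_2})\vec(K_{Y',Y,\tilde\eta_2})^\top\bigr)$ once one fixes the column-major vectorization convention and the standard Kronecker-product ordering. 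Finally, $C$ is PSD because $A\otimes B$ is PSD (Kronecker product of PSDs) and $\vec(K_{Y',Y,\tilde\eta_2})\vec(K_{Y',Y,\tilde\eta_2})^\top$ is a rank-one PSD matrix, so their Hadamard product is PSD by the Schur product theorem.

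The main obstacle is purely bookkeeping: carefully aligning the vectorization convention (columns of $K_{Y',Y,\tilde\eta_2}$ are indexed by $i$, rows by $k$) with the Kronecker convention used for $A\otimes B$ and for the three block-columns of $W$, so that the $(i,k)\leftrightarrow(j,l)$ pairing in the quadruple sum matches the promised formula. Beyond this indexing check and the one-line application of the Gaussian product identity, no new analytical ingredient is required.
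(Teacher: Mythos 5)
Your proposal is correct and follows essentially the same route as the paper's proof: expand the product into a quadruple sum, apply the Gaussian product identity in the shared variable $y$ to the pairs $(i,k)$ and $(j,l)$, and reindex via $\mathfrak{i}(i,k)=(i-1)m+k$ to identify the coefficient matrix $C$ and base matrix $W$. Your explicit justification of positive semidefiniteness of $C$ via the Kronecker and Schur product theorems is a welcome detail the paper leaves implicit.
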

We note that, despite the heavy notation, multiplying two PSD is performed via simple operations such as tensor and entry-wise product between matrices. In particular, we note that $X\otimes\mathbf{1}_m\in\R^{nm\times d_1}$ and $\mathbf{1}_n\otimes Z\in\R^{nm\times d_3}$ correspond respectively to the $nm\times d_1$-matrix containing $m$ copies of each row of $X$ and the $nm\times d_3$-matrix containing $n$ copies of $Z$. Finally, $(Y\eta)\otimes\mathbf{1}_m +  \mathbf{1}_n\otimes (Y\eta')$ is the $nm\times d_2$ base point matrix containing all possible sums $(\eta y_i + \eta' y_j')_{i,j=1}^{n,m}$ of points from $Y$ and $Y'$.

\paragraph{Reduction}
As observed above, when performing operations with PSD models, the resulting base point matrix might be of the form $X \otimes \mathbf{1}_m$ (e.g. if we couple the multiplication in \cref{eq:multiplication} with marginalization as in a Markov transition, see the corollary below). In these cases we can reduce the PSD model to have only $n$ base points (rather than $nm$), as follows
\eqal{\label{eq:reduction}
    \pp{x}{A, X\otimes\mathbf{1}_m, \eta} = \pp{x}{B,X,\eta} \qquad \textrm{with} \qquad B = (I_m \otimes \mathbf{1}_n^\top) A (I_m \otimes \mathbf{1}_n),
}
where $A\in\psd^{nm}$ and $I_m\in\R^{m\times m}$ is the $m\times m$ identity matrix. The reduction operation is useful to avoid the dimensionality of the PSD model grow unnecessarily. This is for instance the case of a Markov transition.

\begin{restatable}[Markov Transition]{corollary}{CMarkovTransition}\label{cor:markov}
Let $X\in\R^{n\times d_1}$, $Y\in\R^{n\times d_2}$, $Y'\in\R^{m\times d_2}$, $A\in\psd^n$, $B\in\psd^{m}$ and $\eta_1\in\R_{++}^{d_1}$, $\eta_2,\eta_2'\in\R_{++}^{d_2}$. Then
\eqal{
    \int \pp{x,y}{A,[X,Y],(\eta_1,\eta_2)}\pp{y}{B,Y',\eta_2'}~dy = \pp{x}{C,X,\eta_1},
}
with $C\in\psd^n$ obtained by applying in order, \cref{prop:multiplication}, \cref{prop:marginalization} and reduction \cref{eq:reduction}. 
\end{restatable}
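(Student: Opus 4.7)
The plan is to check that the three-step composition prescribed in the statement is valid at each stage and produces the claimed PSD model on $x$.

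First, I would apply \cref{prop:multiplication} to the integrand $\pp{x,y}{A,[X,Y],(\eta_1,\eta_2)} \cdot \pp{y}{B,Y',\eta_2'}$. Since the second factor has no $z$-variable, this corresponds to the degenerate case $d_3=0$ (equivalently $Z$ empty) of that proposition. The output is a PSD model on $(x,y)$ with kernel parameter $(\eta_1, \eta_2+\eta_2')$, base matrix
\[
W' \,=\, \bigl[X \otimes \mathbf{1}_m, \; (Y \tfrac{\eta_2}{\eta_2+\eta_2'}) \otimes \mathbf{1}_m + \mathbf{1}_n \otimes (Y' \tfrac{\eta_2'}{\eta_2+\eta_2'})\bigr] \in \R^{nm \times (d_1+d_2)},
\]
and coefficient matrix $C_1 \in \psd^{nm}$ (a Hadamard product of $A \otimes B$ with the rank-one PSD matrix $\vec(K)\vec(K)^\top$, hence PSD by the Schur product theorem combined with the fact that Kronecker products of PSDs are PSD). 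The structural fact I need to carry forward is that the $x$-block of $W'$ is exactly $X \otimes \mathbf{1}_m$.

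Next, I would integrate out the $y$-variable by invoking \cref{prop:marginalization} on this intermediate model. This yields a PSD model $\pp{x}{C_2, X \otimes \mathbf{1}_m, \eta_1}$ with $C_2 \in \psd^{nm}$ given by the Hadamard product rule of \eqref{eq:marginalization}; the PSD property is again preserved via the Schur product theorem. Crucially, the $x$-block of the base matrix is untouched and remains $X \otimes \mathbf{1}_m$, which is precisely the pattern required by the reduction formula. Finally, applying the reduction \eqref{eq:reduction} collapses the $m$-fold row repetition and produces $\pp{x}{C, X, \eta_1}$ with $C \in \psd^n$ given by the congruence in \eqref{eq:reduction}; any such congruence $C = T^\top C_2 T$ preserves positive semidefiniteness.

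The only non-routine verification is the structural claim bridging Steps 1 and 3: that the $x$-block of the base matrix after multiplication is literally of the form $X \otimes \mathbf{1}_m$, not merely some arbitrary $nm \times d_1$ matrix. This is precisely what \cref{prop:multiplication} guarantees through its explicit description of the base matrix. Once this observation is in place, the corollary follows by direct composition of the closed-form identities already established, without any further computation.
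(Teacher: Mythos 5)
Your proposal is correct and follows exactly the route the paper takes: multiplication (in the degenerate case with no $z$-variable), then marginalization over $y$, then reduction exploiting that the $x$-block of the product's base matrix is literally $X\otimes\mathbf{1}_m$. The added remarks on why positive semidefiniteness survives each step (Schur product, Kronecker product, congruence) are sound and merely make explicit what the paper leaves implicit.
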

We remark that the result of a Markov transition retains the same base point matrix $X$ and parameters $\eta$ of the transition kernel. This is thanks to the reduction operation in \cref{eq:reduction}, which avoids the resulting matrix $C$ to be $nm\times nm$. This fact is particularly useful in applications that require multiple Markov transitions, such as in hidden Markov models (see also \cref{sec:applications}).

\subsection{Compression of a PSD model}
\label{sec:compression}

From \cref{prop:multiplication} we note that the multiplication operation can rapidly yield a large number of base points and thus incur in high computational complexity in some settings. It might therefore be useful to have a method to reduce the number of base points while retaining essentially the same model. To this purpose, here we propose a dimensionality reduction strategy. In particular, given a set of points $\tilde{x}_1, \dots, \tilde{x}_m \in \R^d$, we leverage the representation of the PSD model in terms of reproducing kernel Hilbert spaces \cite{aronszajn1950theory} to use powerful sketching techniques as {\em Nystr\"om projection} (see, e.g., \cite{williams2001using}), to project the PSD model on a new PSD model now based only on the new points. Given $A \in \psd^n$, $X \in \R^{n\times d}$, $\eta \in \R^d_{++}$. Let $\tilde{X} \in \R^{m \times d}$ be the base point matrix whose $j$-rows corresponds to the point $\tilde{x}_j$. The {\em compression} of $\pp{\cdot}{A,X,\eta}$ corresponds to
\eqal{\label{eq:compression}
      \pp{x}{\tilde{A},\tilde{X},\eta} \qquad \textrm{with} \qquad \tilde{A} = BAB^\top \in \psd^m, \quad B = K_{\widetilde{X},\widetilde{X},\eta}^{-1}K_{\widetilde{X},X,\eta} \in \R^{m \times n},
}
which it is still a PSD model since the matrix $BAB^\top \in \psd^m$. We not that even with a rather simple strategy to choose the new base points $\tilde{x}_1,\dots,\tilde{x}_m$ -- such as uniform sampling -- compression is an effective tool to reduce the computational complexity of costly operations. In particular, in \cref{sec:effect-compression} we show that a compressed model with only $O(t \polylog(1/\eps))$ centers (instead of $t^2$) can $\eps$-approximate the product of two PSD models with $t$ points each.

\section{Representation power of PSD models}\label{sec:approximation}

In this section we study the theoretical properties of Gaussian PSD models. We start by showing that they admit concise approximations of the target density (in the sense discussed in the introduction to this paper and of \cref{tab:summary}). We then proceed to studying the setting in which we aim to learn an unknown probability from i.i.d. samples. We conclude the section by characterizing the approximation properties of the compression operation introduced in \cref{sec:compression}.

\subsection{Approximation properties of Gaussian PSD models}
\label{sec:approximation-results}

We start the section recalling that Gaussian PSD models are universal approximators for probability densities. In particular, the following result restates \citep[Thm. 2]{marteau2020non} for the case of probabilities.

\begin{proposition}[Universal consistency -- Thm. 2 in \cite{marteau2020non}]
\label{prop:universality}
The Gaussian PSD family is a universal approximator for probabilities that admit a density.
\end{proposition}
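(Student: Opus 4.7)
The plan is to reduce the problem to a standard density statement for translates of Gaussian kernels. First, I would observe that any rank-one coefficient matrix $A = v v^\top$ with $v \in \R^n$ already yields a Gaussian PSD model that is the pointwise square of a linear combination of Gaussians:
\[
 \pp{x}{v v^\top, X, \eta} \;=\; \left(\sum_{i=1}^n v_i \, k_\eta(x_i, x)\right)^{\!2}.
\]
So squared linear combinations of Gaussians form a subfamily of Gaussian PSD models, and it suffices to approximate arbitrary probability densities with such squares (any positive normalizing constant can be absorbed back into $v$, or equivalently into $A$).

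Given a target probability density $p$ on $\R^d$, the natural move is to work with $\sqrt{p}$, which lies in $L^2(\R^d)$ with $\|\sqrt{p}\|_{L^2}^2 = \int p = 1$. The key analytic input I would invoke is that the linear span of $\{k_\eta(x,\cdot) : x \in \R^d\}$ is dense in $L^2(\R^d)$: since the Fourier transform of the Gaussian kernel is strictly positive everywhere, any $\varphi \in L^2$ orthogonal to every translate $k_\eta(x,\cdot)$ must satisfy $\hat k_\eta \, \hat\varphi = 0$ a.e., hence $\varphi = 0$. Consequently, for any $\eps > 0$ there exist base points $X \in \R^{n \times d}$ and coefficients $v \in \R^n$ such that $g_\eps(x) := \sum_{i=1}^n v_i \, k_\eta(x_i, x)$ obeys $\|g_\eps - \sqrt{p}\|_{L^2} \le \eps$.

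Setting $q_\eps = g_\eps^2 = \pp{\cdot}{v v^\top, X, \eta}$, the algebraic identity $a^2 - b^2 = (a-b)(a+b)$ and Cauchy--Schwarz give
\[
 \|q_\eps - p\|_{L^1} \;=\; \|(g_\eps - \sqrt{p})(g_\eps + \sqrt{p})\|_{L^1} \;\le\; \|g_\eps - \sqrt{p}\|_{L^2} \, \|g_\eps + \sqrt{p}\|_{L^2} \;\le\; \eps(2+\eps),
\]
and the same estimate yields $\left|\int q_\eps - 1\right| \le \eps(2+\eps)$. Normalizing by $Z_\eps = \int q_\eps$ then produces $\tilde p_\eps = \pp{\cdot}{v v^\top / Z_\eps, X, \eta}$, which is a bona fide probability density and still a Gaussian PSD model; a final triangle inequality gives $\|\tilde p_\eps - p\|_{L^1} \to 0$ as $\eps \to 0$.

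The only substantive ingredient is the $L^2$-density of translates of a single Gaussian kernel; this is standard (and is precisely what underlies the restated result from \cite{marteau2020non}). Everything else is routine: squaring a good $L^2$-approximant of $\sqrt{p}$ yields a good $L^1$-approximant of $p$, and normalization preserves the $L^1$ convergence because $Z_\eps \to 1$.
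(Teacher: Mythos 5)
Your argument is correct. Note, however, that the paper does not actually prove \cref{prop:universality}: it is a restatement of Thm.~2 of \cite{marteau2020non}, and the only in-text justification offered is the remark that Gaussian PSD models contain Gaussian mixtures, which are already universal. Your route is therefore a genuinely self-contained alternative, and it is sound: the rank-one reduction $\pp{x}{vv^\top,X,\eta}=(\sum_i v_i k_\eta(x_i,x))^2$ is exactly \cref{eq:psd-models-gaussian} with $A=vv^\top\in\psd^n$; the density of $\Span\{k_\eta(x,\cdot)\}$ in $L^2(\R^d)$ via strict positivity of the Gaussian's Fourier transform is standard (for $\varphi\in L^2$ and $v\in L^1\cap L^2$ the convolution theorem applies, so orthogonality to all translates forces $\hat\varphi\,\hat v=0$ a.e.\ and hence $\varphi=0$); and the passage from an $L^2$-approximation of $\sqrt{p}$ to an $L^1$-approximation of $p$ via $a^2-b^2=(a-b)(a+b)$ and Cauchy--Schwarz, followed by normalization (valid since $Z_\eps\geq 1-\eps(2+\eps)>0$ for small $\eps$), is exactly right. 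It is worth pointing out that this factor-and-Hölder step is the same device the paper uses quantitatively in Step~4 of the proof of \cref{thm:Meps}, where $p=\sum_i f_i^2$ is assumed via \cref{asm:psd-model} and each $f_i$ is mollified into $\hh_\eta$; your qualitative version replaces the smoothness hypotheses by the bare fact that $\sqrt{p}\in L^2$ and that Gaussian translates are dense, which is precisely what a universality (as opposed to rate) statement requires. The only caveat is that ``universal approximator'' is not given a formal metric in the proposition; your proof establishes it in $L^1$ (total variation), which is the natural reading for densities.
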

The result above is not surprising since Gaussian PSD models generalize classical Gaussian mixtures (see \cref{rem:psd-models-generalized-mixtures}), which are known to be universal \cite{bishop2006}. We now introduce a mild assumption that will enable us to complement \cref{prop:universality} with approximation and learning results. In the rest of the section we assume that $\X = (-1,1)^d$ (or more generally an open bounded subset of $\R^d$ with Lipschitz boundary). Here $L^\infty(\X)$ and $L^2(\X)$ denote respectively the space of essentially bounded and  square-integrable functions over $\X$, while $W^\beta_2(\X)$ denotes the {\em Sobolev space} of functions whose weak derivatives up to order $\beta$ are square-integrable on $\X$ (see \cite{adams2003sobolev} or \cref{sect:notation} for more details). 
\begin{assumption}\label{asm:psd-model}
Let $\beta > 0, q \in \N$. There exists $f_1,\dots, f_q \in W^\beta_2(\X) \cap L^\infty(\X)$, such that the density $p:\X\to\R$ satisfies
\eqals{
    p(x) = \sum_{j=1}^q f_j(x)^2, \qquad \forall x \in \X.
    }
\end{assumption}
The assumption above is quite general and satisfied by a wide family of probabilities, as discussed in the following proposition. The proof is reported in \cref{proof:prop:when-asm-psd-holds}

\begin{restatable}
[Generality of \cref{asm:psd-model}]{proposition}{PWhenAsmPSDHolds}\label{prop:when-asm-psd-holds}
The assumption above is satisfied by
\begin{enumerate}[label=(\alph*),itemsep=-0.1em,topsep=0em]
    \item\label{a} any probability density $p$ that $\beta$-times differentiable and strictly positive on $[-1,1]^d$. %and, when restricted on $\X$, satisfies $p|_X \in W^\beta_2(\X) \cap L^\infty(\X) \cap C(\bar{X})$.
    \item\label{b} any exponential model $p(x) = e^{-v(x)}$ with $v \in W^\beta_2(\X) \cap L^\infty(\X)$,
    \item any mixture model of Gaussians or, more generally, of exponential models from (b),
    \item any $p$ that is $\beta+2$-times differentiable on $[-1,1]^d$, with a finite set of zeros, all in $(-1,1)^d$, and with positive definite Hessian in each zero. E.g. $p(x) \propto x^2 e^{-x^2}$.
\end{enumerate}
Moreover when $p$ is $\beta$-times differentiable over $[-1,1]^d$, then it belongs to $W^\beta_2(\X) \cap L^\infty(\X)$.
\end{restatable}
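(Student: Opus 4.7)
\textbf{Proof plan for \cref{prop:when-asm-psd-holds}.}

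The strategy is to exhibit, in each of the four cases, an explicit decomposition $p = \sum_{j=1}^q f_j^2$ and then verify $f_j \in W^\beta_2(\X)\cap L^\infty(\X)$. The final ``moreover'' statement is used repeatedly, so I would prove it first: if $p$ is $\beta$-times differentiable on the compact set $[-1,1]^d$, then its classical derivatives up to order $\beta$ agree with the weak ones, are continuous on a compact set, and hence bounded and square-integrable on the bounded open set $\X\subset[-1,1]^d$; this gives $p\in W^\beta_2(\X)\cap L^\infty(\X)$ immediately.

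For \emph{(a)} I would take $q=1$ and $f_1=\sqrt{p}$. Since $p$ is strictly positive on the compact set $[-1,1]^d$ it is bounded away from $0$, so $\sqrt{\cdot}$ is $C^\infty$ on the range of $p$ and $f_1$ inherits $\beta$-times differentiability from $p$ by the chain rule; then apply the ``moreover''. For \emph{(b)}, take $f_1=e^{-v/2}$, which is bounded above and below by positive constants because $v\in L^\infty$. To place $f_1\in W^\beta_2$ I would invoke the standard Nemytskii/composition estimate: on a bounded Lipschitz domain, $W^\beta_2\cap L^\infty$ is a Banach algebra, and for any smooth $F:\R\to\R$ (here $F(t)=e^{-t/2}$) the map $u\mapsto F\circ u$ sends $W^\beta_2\cap L^\infty$ into itself; this is a routine Faà di Bruno expansion followed by products of $W^\beta_2\cap L^\infty$ factors. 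For \emph{(c)}, a mixture $p=\sum_k \alpha_k q_k$ with $\alpha_k\ge 0$ and each $q_k=\sum_j f_{k,j}^2$ decomposes as $p=\sum_{k,j}(\sqrt{\alpha_k}\,f_{k,j})^2$; Gaussian densities are a special case of (b) with quadratic $v$.

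The main obstacle is \emph{(d)}, which I would handle by a localization argument based on the Morse lemma. Let $x_1,\dots,x_L\in(-1,1)^d$ be the zeros of $p$. Since $p\in C^{\beta+2}$ with $p(x_\ell)=0$, $\nabla p(x_\ell)=0$ and $\nabla^2 p(x_\ell)\succ 0$, the Morse lemma (in its parametric, finite-regularity form) yields, for each $\ell$, an open neighborhood $U_\ell$ of $x_\ell$ and a $C^\beta$-diffeomorphism $\varphi_\ell:U_\ell\to V_\ell\subset\R^d$ with $\varphi_\ell(x_\ell)=0$ and
\begin{equation*}
p(x) \;=\; \sum_{i=1}^d \varphi_\ell(x)_i^{\,2}, \qquad x\in U_\ell.
\end{equation*}
Pick smooth $\psi_0,\dots,\psi_L\in C^\infty(\X)$ with $\sum_{\ell=0}^L \psi_\ell^2\equiv 1$, $\mathrm{supp}(\psi_\ell)\subset U_\ell$ for $\ell\ge 1$, and $\mathrm{supp}(\psi_0)\subset\X\setminus\{x_1,\dots,x_L\}$ (so $p$ is bounded below by a positive constant on $\mathrm{supp}(\psi_0)$). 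Then
\begin{equation*}
p \;=\; \bigl(\psi_0\sqrt{p}\bigr)^2 \;+\; \sum_{\ell=1}^L\sum_{i=1}^d \bigl(\psi_\ell\,(\varphi_\ell)_i\bigr)^2.
\end{equation*}
On $\mathrm{supp}(\psi_0)$ the function $\sqrt{p}$ is $C^{\beta+2}$ as in case (a), and $\psi_0\sqrt{p}$ extends by $0$ to a $C^{\beta+2}$ function on $\X$; each $\psi_\ell(\varphi_\ell)_i$ is $C^\beta$ with compact support in $U_\ell$ and extends by $0$ to a $C^\beta$ function on $\X$. The ``moreover'' statement then places every summand in $W^\beta_2(\X)\cap L^\infty(\X)$, completing the proof.

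The main technical difficulty is the Morse-lemma step in \emph{(d)}: one needs the finite-regularity version with explicit bookkeeping that $p\in C^{\beta+2}$ produces a $C^\beta$ change of coordinates (the usual loss of two derivatives), which is exactly what yields $\varphi_\ell\in C^\beta$ and hence each $f_j\in W^\beta_2$. Everything else---the algebra/composition estimate in (b) and the partition-of-unity gluing in (d)---is standard once this regularity accounting is in place.
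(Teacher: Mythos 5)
Your proof is correct, and parts (a)--(c) together with the final ``moreover'' statement follow essentially the paper's own route: the ``moreover'' is the paper's \cref{prop:inclusion-Cb-in-Wb-Linfy}; (a) is handled by a smooth square root of a strictly positive function (the paper is slightly more careful, composing the Whitney extension of $p$ with a globally defined $C^\infty$ modification of $\sqrt{\cdot}$ built from a bump function, rather than arguing locally on the range of $p$); (b) uses a composition theorem (the paper cites Sickel's result for analytic outer functions on $W^\beta_2\cap L^\infty$, which also covers fractional $\beta$, whereas your Banach-algebra/Fa\`a di Bruno argument implicitly assumes $\beta\in\N$ --- harmless here, but worth flagging); and (c) is the identical rescaling of the summands. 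The genuine divergence is in (d): the paper simply extends $p$ by Whitney's theorem and invokes Corollary~2 of \cite{rudi2020finding}, which supplies the sum-of-squares decomposition with the two-derivative loss as a black box, while you reconstruct that decomposition from scratch via the finite-regularity Morse lemma at each nondegenerate zero plus a smooth partition of unity by squares ($\sum_\ell\psi_\ell^2\equiv 1$), gluing the local charts $(\varphi_\ell)_i$ to the global $\sqrt{p}$ away from the zeros. Your version is self-contained and makes the mechanism (and the origin of the $\beta+2\to\beta$ regularity drop) transparent; the paper's is shorter and delegates exactly this bookkeeping to the cited corollary. The one point to nail down in your write-up is that $\psi_0$ must vanish identically on a neighborhood of each zero (not merely have support avoiding the zeros as points), so that $\psi_0\sqrt{p}$ extended by zero is genuinely $C^\beta$ and $p$ is bounded below on the compact closure of $\operatorname{supp}\psi_0$ in $[-1,1]^d$; with the standard construction $\psi_\ell\equiv 1$ near $x_\ell$ this is automatic.
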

We note that in principle the class ${\cal C}_{\beta,d} = W^\beta_2(\X) \cap L^\infty(\X)$ is larger than the Nikolskii class usually considered in density estimation \cite{goldenshluger2014adaptive} when $\beta \in \N$, but \cref{asm:psd-model} imposes a restriction on it. However, \cref{prop:when-asm-psd-holds} shows that the restriction imposed by \cref{asm:psd-model} is quite mild, since it the resulting space includes all probabilities that are $\beta$-times differentiable and strictly positive (it also contains probabilities that are not strictly positive but have some zeros, see \cite{rudi2020finding}). We can now proceed to the main result of this work, which characterizes the approximation capabilities of PSD models. 

\begin{restatable}[Conciseness of PSD Approximation]{theorem}{TApproximation}\label{thm:approximation}
Let $p$ satisfy \cref{asm:psd-model}. Let $\eps > 0$. There exists a Gaussian PSD model of dimension $m \in \N$, i.e., $\hat{p}_m(x) = \pp{x}{A_m,X_m,\eta_m}$, with $A_m \in \psd^m$ and $X_m \in \R^{m\times d}$ and $\eta_m \in \R^{d}_{++}$, such that
\eqal{
\|p - \hat{p}_m\|_{L^2(\X)}  \leq \eps, \quad \textrm{with} \quad m = O(\eps^{-d/\beta} (\log \tfrac{1}{\eps})^{d/2}). 
}
\end{restatable}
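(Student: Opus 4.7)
The plan is to exploit the decomposition $p=\sum_{j=1}^q f_j^2$ from \cref{asm:psd-model} and approximate each factor $f_j$ by a function in the span of a common family of Gaussian kernels; the PSD structure will then emerge for free. Concretely, I will construct $m$ base points $X_m\in\R^{m\times d}$, a bandwidth $\eta_m\in\R_{++}^d$, and coefficient vectors $c_1,\dots,c_q\in\R^m$ such that
\[
g_j(x) \;=\; \sum_{i=1}^m c_{ji}\,k_{\eta_m}(x_i,x) \;=\; K_{X_m,x,\eta_m}^\top c_j
\]
satisfies $\|f_j-g_j\|_{L^2(\X)}\le \delta$ for all $j=1,\dots,q$. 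Setting $A_m=\sum_{j=1}^q c_jc_j^\top\in\psd^m$, one checks that
\[
\hat p_m(x) \;=\; \sum_{j=1}^q g_j(x)^2 \;=\; K_{X_m,x,\eta_m}^\top A_m\, K_{X_m,x,\eta_m} \;=\; \pp{x}{A_m,X_m,\eta_m},
\]
so the resulting approximation is automatically a Gaussian PSD model of size $m$ without needing any projection onto the PSD cone.

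For the approximation step, I will invoke a scattered-data approximation result for Gaussian kernels on bounded Lipschitz domains (of the type developed by Wendland and refined in the setting of \cite{rudi2020finding}). The standard statement is that for $f\in W_2^\beta(\X)\cap L^\infty(\X)$ and any $m$, one can choose a (quasi-uniform) grid of $m$ centers in $\X$ and a bandwidth $\eta_m$ scaling logarithmically in $m$ so that there exists $g$ of the above form with
\[
\|f-g\|_{L^2(\X)} \;\le\; C\,m^{-\beta/d}(\log m)^{d/2}\,\|f\|_{W_2^\beta(\X)},
\qquad \|g\|_{L^\infty(\X)} \;\le\; C'\,(\|f\|_{L^\infty(\X)}+\|f\|_{W_2^\beta(\X)}).
\]
Crucially, the centers $X_m$ and bandwidth $\eta_m$ are chosen once and depend only on $\X$, $\beta$, $d$, and $m$, not on the specific function being approximated. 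This allows a common $(X_m,\eta_m)$ to be used for all $q$ factors $f_1,\dots,f_q$ simultaneously.

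The final step is to propagate the $L^2$ error from the $f_j$ to $p$. Using $f_j^2-g_j^2=(f_j-g_j)(f_j+g_j)$ and Hölder,
\[
\|p-\hat p_m\|_{L^2(\X)} \;\le\; \sum_{j=1}^q\|f_j-g_j\|_{L^2(\X)}\,\|f_j+g_j\|_{L^\infty(\X)} \;\le\; C''\,q\,\max_j\|f_j\|_{L^\infty}\cdot\delta,
\]
where the uniform $L^\infty$ control on $g_j$ provided by the approximation lemma is what makes this step work. Choosing $\delta=\eps/(C''q\max_j\|f_j\|_{L^\infty})$ and inverting $\delta\asymp m^{-\beta/d}(\log m)^{d/2}$ yields $m=O(\eps^{-d/\beta}(\log(1/\eps))^{d/2})$, matching the claim.

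The main obstacle is the quantitative Gaussian-kernel approximation statement invoked in the second step: obtaining the near-optimal rate $m^{-\beta/d}(\log m)^{d/2}$ for Sobolev functions, with \emph{both} the $L^2$ error and a uniform $L^\infty$ bound on the approximant, requires balancing the kernel width $\eta_m$ against the center spacing and using a stable (e.g.\ regularized or Nyström-type) reconstruction rather than plain interpolation. The square-root-like factorization $p=\sum_j f_j^2$ is what lets the proof reduce to approximating functions in a linear space (the Gaussian RKHS) rather than having to approximate $p$ itself while preserving non-negativity, and the PSD parameterization in terms of $A_m=\sum_j c_jc_j^\top$ is precisely the algebraic device that makes this reduction possible.
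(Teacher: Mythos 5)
Your proposal follows essentially the same route as the paper: decompose $p=\sum_j f_j^2$ via \cref{asm:psd-model}, approximate each factor by an element $g_j$ of the span of Gaussians at $m$ common centers, set $A_m=\sum_j c_jc_j^\top$, and propagate the error through $f_j^2-g_j^2=(f_j-g_j)(f_j+g_j)$ with an $L^\infty$ bound on the approximants — all of which is exactly how \cref{thm:approximation-advanced} is organized. The one caveat is that the ``standard'' scattered-data lemma you invoke is not off-the-shelf in the form you need (Sobolev target, Gaussian kernel, simultaneous $L^2$ rate $m^{-\beta/d}(\log m)^{d/2}$ and uniform $L^\infty$ control, centers and bandwidth independent of the function): it is precisely the technical core of the paper, proved by first mollifying $f_j$ into the Gaussian RKHS with a bandlimited-type kernel to control $\|f_{j,t}\|_{\hh_\eta}$ (\cref{thm:Meps}) and then projecting onto the span of the centers with a fill-distance/sampling-inequality bound (\cref{thm:compr-fill-dist,lm:scattered-zeros}); also, the precision $\eta_m$ must grow polynomially in $m$ (roughly $m^{2/d}$ up to logarithms, so the kernel width tracks the grid spacing $h\sim m^{-1/d}$), not logarithmically as you state.
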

The proof of \cref{thm:approximation} is reported in \cref{proof:thm:approximation}.
% and is based on building a PSD model $\tilde{\mm}_\eps = \sum_{i=1}^q \tilde{f}_{i,\eps} \tilde{f}_{i,\eps}^\top$ where each $\tilde{f}_{i,\eps}$ is a linear combination of Gaussian of width $\eta$, with $\eta$ depending on $\eps$. This is obtained by interpolation with a Gaussian kernel \cite{rieger2010sampling} applied to $f_{i,\eps} \in \hh_\eta$, which is an $\eps$ approximation of the $f_i$ from \cref{asm:psd-model} (derived in \cref{thm:Meps}). The resulting model $\tilde{\mm}_\eps$ can be expressed in the form of \cref{eq:psd-models-gaussian}, since each $\tilde{f}_{i,\eps}$ is a linear combination of Gaussians.
%
According to the result, the number of base points needed for a PSD model to approximate a density up to precision $\eps$ depends on its smoothness (the smoother the better) and matches the bound $m = \eps^{-d/\beta}$ that is optimal for function interpolation \cite{novak2006deterministic}, corresponding to models allowing for negative weights, and is also optimal for convex combinations of oscillating kernels \cite{turner2021statistical}.

% - misture di gaussian sarebbero fiche perche' ci fanno fare operazioni come moltiplicazion e marginalizzazione molto facilmente (see also \cref{sec:operations}). Tuttavia...

% - Kernel positivi -> kernel ordine 2 -> risultato negativo se vogliamo usare misture (non-negative) di kernel non-negativi!

% - Il teorema sopra sembra suggerire un impossibility result per le misture di kernel non-negativi. 

% - Recente lavoro mostra che si potrebbe fare di meglio pero': se assumiamo che i kernel possano essere anche negativi abbiamo un lower bound su stimatori a-la combinazione non-negativa di kernel (anche possibilmente non-non-negativi) che hanno n osservazioni e m centri.

% - Questi lower bound si possono raggiungere (e.g. se utilizziamo un kernel $sinc$).

% - Tuttavia, prendere un kernel negativo vuol dire che in genere la combinazione convessa risultante non e' una probabilita. 

% - Questo sembra suggerire che non e' possibile avere misture con pesi non-negativi di funzioni non-negative che raggiungano il rate above.
% I modelli PSD superano questo problema perche' danno sempre una funzione non-negativa,   ma ammettendo anche pesi negativi, permettono di raggiungere i rate ottimali con un numero di unita dell'ordine dei modelli coreset. (da scrivere bene perche' la p e' ristretta a sqrt{p})

\subsection{Learning a density with PSD models}\label{sec:learning-psd-main}
In this section we study the capabilities of PSD models to estimate a density from $n$ samples. Let $\X = (-1, 1)^d$ and let $p$ be a probability on $\X$. Denote by $x_1,\dots, x_n$ the samples independently and identically distributed according to $p$, with $n \in \N$. We consider a Gaussian PSD estimator $\hat{p}_{n,m} = \pp{x}{\hat{A}, \tilde{X}, \eta}$ that is built on top of $m$ additional points $\tilde{x}_1,\dots,\tilde{x}_m$, sampled independenly and uniformly at random in $\X$. In particular, $\eta \in \R^d_{++}, \tilde{X} \in \R^{m \times d}$ is the base point matrix whose $j$-th row corresponds to the point $\tilde{x}_j$ and $\hat{A} \in \psd^m$ is trained as follows
\eqal{\label{eq:learning-hatA}
\hat{A} = \argmin_{A \in \psd^m} \, \int_\X \pp{x}{A,\tilde{X},\eta}^2 dx - \tfrac{2}{n} \sum_{i=1}^n \pp{x_i}{A,\tilde{X},\eta} ~+~ \la \|K^{1/2} A K^{1/2}\|^2_F,
}
where $K = K_{\tilde{X},\tilde{X},\eta}$. Note that the functional is constituted by two parts. The first two elements are an empirical version of $\|f - p\|^2_{L^2(\X)}$ modulo a constant independent of $f$ (and so not affecting the optimization problem), since $x_i$ are identically distributed according to $p$ and so $\frac{1}{n} \sum_{i=1}^n f(x_i) \approx \int f(x) p(x) dx$. The last term is a regularizer and corresponds to $\|K^{1/2} A K^{1/2}\|^2_F = \tr(A K A K)$, i.e. the Frobenius norm of $\mm = \sum_{ij=1}^m A_{ij} \phi_\eta(\tilde{x}_i) \phi_\eta(\tilde{x}_j)$. The problem in \cref{eq:learning-hatA} corresponds to a quadratic problem with a semidefinite constraint and can be solved using techniques such as Newton method \cite{rudi2020finding} or first order dual methods  \cite{marteau2020non}. 
% or Frank-Wolfe on the spectraehdron \cite{jaggi2013revisiting,yurtsever2021scalable}. 
We are now ready to state our result.

\begin{restatable}{theorem}{TLearning}\label{thm:learning}
Let $n, m \in \N, \la > 0, \eta \in \R^d_{++}$ and $p$ be a density satisfying \cref{asm:psd-model}. With the definitions above, let $\hat{p}_{n,m}$ be the model $\hat{p}_{n,m}(x) = \pp{x}{\hat{A}, \tilde{X}, \eta}$, with $\hat{A}$ the minimizer of \cref{eq:learning-hatA}. Let $\eta = n^{\frac{2}{2\beta+d}}~{\bf 1}_d$ and $\la = n^{-\frac{2\beta+2d}{2\beta + d}}$. When $m \geq C' n^\frac{d}{2\beta+d} (\log n)^{d} \log(C'' n (\log n))$, the following holds with probability at least $1-\delta$,
\eqals{
\|p - \hat{p}_{n,m}\|_{L^2(\X)} \leq C n^{-\frac{\beta}{2\beta + d}}(\log n)^{d/2},
}
where constant $C$ depends only on $\beta, d$ and $p$ and the constants $C',C''$ depend only on $\beta, d$.
\end{restatable}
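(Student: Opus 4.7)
The plan is to decompose the squared $L^2$-excess risk $\|p - \hat p_{n,m}\|^2_{L^2(\X)}$ into (i) a bias from approximating $p$ by a Gaussian PSD model, (ii) a variance from the empirical minimization, and (iii) a Nyström term from the restriction of the coefficient matrix to $\psd^m$. Rewriting $\pp{x}{A,\tilde X,\eta}=\phi_\eta(x)^\top M\phi_\eta(x)$ with $M=\sum_{ij}A_{ij}\phi_\eta(\tilde x_i)\phi_\eta(\tilde x_j)^\top\in\psd(\hh)$, and using $\|K^{1/2}AK^{1/2}\|_F^2=\|M\|_{\HS}^2$ together with \cref{eq:integration}, the objective of \cref{eq:learning-hatA} becomes, up to a constant independent of $M$, an unbiased estimator of
\begin{equation*}
F(M)\;=\;\|\Phi(M)-p\|^2_{L^2(\X)}+\lambda\|M\|_{\HS}^2,\qquad \Phi(M)(x)=\phi_\eta(x)^\top M\phi_\eta(x),
\end{equation*}
since $\tfrac{1}{n}\sum_i\Phi(M)(x_i)$ concentrates on $\int\Phi(M)\,p$ by i.i.d.\ sampling. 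This places the problem inside the familiar framework of regularized least squares in the RKHS $\hh$, restricted to the cone $\{M\succeq 0\}$, and a standard Bernstein-type concentration argument for bounded feature maps yields, with probability at least $1-\delta$, the oracle inequality
\begin{equation*}
\|\Phi(\hat M)-p\|^2_{L^2(\X)}\;\lesssim\;\inf_{M\succeq 0} F(M)\;+\;\tfrac{\deff(\lambda)}{n}\log\tfrac{1}{\delta},
\end{equation*}
with $\deff(\lambda)=\tr(C(C+\lambda I)^{-1})$ the effective dimension of the Gaussian covariance $C=\int\phi_\eta(x)\phi_\eta(x)^\top dx$ on $\X=(-1,1)^d$.

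For the infimum I use \cref{thm:approximation} under \cref{asm:psd-model}: for any $\varepsilon>0$ there exists a PSD operator $M^\star$ with $\|\Phi(M^\star)-p\|_{L^2(\X)}\le\varepsilon$, built on $O(\varepsilon^{-d/\beta}(\log 1/\varepsilon)^{d/2})$ base points with bandwidth of order $\varepsilon^{1/\beta}$. Inspecting that construction provides a bound on $\|M^\star\|_{\HS}$ that grows only polylogarithmically in $1/\varepsilon$ past a constant depending on $\beta,d,p$. For the Gaussian covariance at bandwidth $\eta\sim n^{2/(2\beta+d)}\mathbf{1}_d$, classical eigenvalue estimates on the cube give $\deff(\lambda)\lesssim n^{d/(2\beta+d)}(\log n)^{d}$ in the regime $\lambda\sim n^{-(2\beta+2d)/(2\beta+d)}$ of interest. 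Choosing $\varepsilon\sim n^{-\beta/(2\beta+d)}$ together with the stated $\eta$ and $\lambda$ balances the bias $\varepsilon^2+\lambda\|M^\star\|_{\HS}^2$ against the variance $\deff(\lambda)/n$ at the announced rate $\|p-\Phi(\hat M)\|_{L^2(\X)}\lesssim n^{-\beta/(2\beta+d)}(\log n)^{d/2}$.

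Finally, to pass from the oracle $M\in\psd(\hh)$ to the Nyström-restricted $\hat A\in\psd^m$ I invoke a uniform-sampling Nyström projection argument, analogous to the one underlying \cref{sec:effect-compression}: when $m\gtrsim\deff(\lambda)\log(n/\delta)$, the compression of \cref{eq:compression} maps $M^\star$ into an element of $\psd^m$ that satisfies the same $L^2$ and $\HS$-norm bounds as $M^\star$ up to constants, so the oracle inequality extends to the Nyström subspace. Substituting the effective-dimension estimate then produces the stated threshold $m\ge C'n^{d/(2\beta+d)}(\log n)^{d}\log(C''n\log n)$. The main obstacle is this last step combined with the PSD cone constraint: one needs a sharp leverage-score bound for the Gaussian kernel at vanishing bandwidth on $(-1,1)^d$ to justify \emph{uniform} rather than leverage-score-weighted sampling of the $\tilde x_j$, and a verification that the compressed oracle inherits both the $L^2$-approximation and the Hilbert--Schmidt-norm control of $M^\star$. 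A secondary technical point is the sharp estimate of $\deff(\lambda)$ for the Gaussian kernel at shrinking bandwidth, obtainable via explicit spectral expansions on the cube or via a duality between the Gaussian RKHS and $W^\beta_2(\X)$.
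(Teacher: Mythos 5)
Your high-level architecture coincides with the paper's: rewrite the model as $\phi_\eta(x)^\top M\phi_\eta(x)$ with $M\in\psd(\hh_\eta)$, prove an oracle inequality for the regularized empirical minimizer against an arbitrary PSD comparator, instantiate the comparator with the approximating operator from \cref{thm:approximation}, and balance bias, regularization and a variance term governed by an effective dimension. Two issues, one minor and one substantive. The minor one: the comparator $M^\star=\mm_\eps$ does \emph{not} have polylogarithmic Hilbert--Schmidt norm. By \cref{thm:Meps}, $\tr(\mm_\eps)\lesssim|\eta|^{1/2}\bigl(1+\eps^2 e^{C'\eps^{-2/\beta}/\eta_0}\bigr)$, which for $\eta=\tau{\bf 1}_d$ with $\tau=\eps^{-2/\beta}$ is of order $\eps^{-d/\beta}$, i.e.\ polynomial in $1/\eps$. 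Your final balance survives only because you carry $\la\|M^\star\|_{\HS}^2$ explicitly and $\la\,\eps^{-2d/\beta}=\eps^2$ for the stated $\la$; had you trusted the polylog claim you would have been led to a much larger $\la$. Relatedly, the covariance whose effective dimension matters is that of the tensor feature map $\psi_\eta=\phi_\eta\otimes\phi_\eta$ (Gaussian kernel $k_{2\eta}$ on ${\cal G}_\eta$), since the model is linear in $\vec(M)$, not the covariance of $\phi_\eta$ itself; this is harmless here because both are Gaussian, but it is the operator $Q=\int_\X\psi_\eta\psi_\eta^\top$ that enters the Bernstein bound.

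The substantive issue is the step you yourself flag: passing from the oracle over $\psd(\hh_\eta)$ to the Nystr\"om-restricted class $\psd^m$ with \emph{uniformly} sampled centers. The paper does not use leverage scores at all, and no leverage-score bound is needed. Instead it controls the projection error $\sup_{x\in\X}\|(I-\tilde P)\phi_\eta(x)\|_{\hh_\eta}$ deterministically in terms of the \emph{fill distance} $h$ of the centers, via an $L^\infty$ bound for RKHS functions vanishing on a scattered set (\cref{lm:scattered-zeros}, \cref{thm:(I-P)phi}): the error decays like $e^{-(c\sigma/h)\log(c\sigma/h)}$ with $\sigma=\min(1,\eta_+^{-1/2})$. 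Uniform sampling enters only through the elementary fact that $m$ uniform points in $[-1,1]^d$ achieve fill distance $h\lesssim m^{-1/d}(\log(m/\delta))^{1/d}$ with probability $1-\delta$, which yields the threshold $m\gtrsim(\sqrt{\tau}\log(1/\eps))^d\log(1/\delta)$ and hence the stated $m\ge C'n^{d/(2\beta+d)}(\log n)^d\log(C''n\log n)$. The compressed comparator $\tilde A_\eps=K^{-1}\tilde Z\mm_\eps\tilde Z^*K^{-1}$ then inherits the $\HS$ bound trivially ($\|\tilde P\mm_\eps\tilde P\|_F\le\|\mm_\eps\|_F$) and the $L^2$ bound up to an exponentially small additive term (\cref{thm:compr-fill-dist}); positivity is automatic since $\tilde Z\mm_\eps\tilde Z^*\succeq 0$, and the oracle inequality (\cref{thm:error-dec}) only requires that the comparator lie in the constraint set of the empirical problem, so the cone constraint causes no difficulty. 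The same scattered-zeros machinery, combined with Eckart--Young, is also what delivers the effective-dimension bound $\tr(Q(Q+\la I)^{-1})\lesssim\tau^{d/2}(\log(1/\la))^d$ that you defer to ``classical eigenvalue estimates''. Without this (or an equivalent) quantitative interpolation tool, both of the steps you identify as obstacles remain open in your argument.
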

%
% The proof of \cref{thm:learning} is reported in \cref{proof:thm:learning}. The result is obtained, by first deriving an oracle inequality (see \cref{thm:error-dec}) of the form $
% \|p - \hat{p}_{n,m}\|_{L^2(\X)} \leq \|p - \pp{\cdot}{\mm_A, \phi_\eta}\|_{L^2(\X)} + \sqrt{\la}\|\mm_A\|_F + \tau$
% % \eqals{
% % \|p - \hat{p}_{n,m}\|_{L^2(\X)} \leq \|p - \pp{\cdot}{\mm_A, \phi_\eta}\|_{L^2(\X)} + \sqrt{\la}\|\mm_A\|_F + \tau
% % }
% valid for any $\mm_A = \sum_{i,j = 1}^m A_{i,j} \phi_\eta(\tilde{x}_i)\phi_\eta(\tilde{x}_j)$ and $A \in \psd^m$ and where $\tau$ is a variance term bounding $\|\frac{1}{n} \sum_{i=1}^n \phi_\eta(x_i)\phi_\eta(x_i)^\top - \int_\X \phi_\eta(x)\phi_\eta(x)^\top p(x) dx\|_F$. Then, we apply the oracle inequality above to an operator $\mm_{\tilde{A}_\eps}$, with $\pp{\cdot}{\mm_{\tilde{A}_\eps}, \phi_\eta}$ $\eps$--approximating $p$ with $\eps = n^{-\beta/(2\beta + d)}$ and with bounded norm obtained with a technique similar to the one used for \cref{thm:approximation}.
%
The proof of \cref{thm:learning} is reported in \cref{proof:thm:learning}. The theorem guarantees that under \cref{asm:psd-model}, Gaussian PSD models can achieve the rate $O(n^{-\beta/(2\beta+d)})$ -- that is optimal for the $\beta$-times differentiable densities -- while admitting a concise representation. Indeed, it needs a number $m = O(n^{d/(2\beta+d)})$ of base points, matching the optimal rate in \cite{turner2021statistical}. When $\beta \geq d$, a model with $m = O(n^{1/3})$ centers achieves optimal learning rates.

\subsection{The Effect of compression}\label{sec:effect-compression}
We have seen in the previous section that  Gaussian PSD models achieve the optimal learning rates, with concise models. However, we have seen in the operations section that multiplying two PSD models of $m$ centers leads to a PSD model with $m^2$ centers. Here we study the effect of compression, to show that it is possible to obtain an $\eps$-approximation of the product via a compressed model with $O(m \polylog(1/\eps))$ centers.  
In the following theorem we analyze the effect in terms of the $L^\infty$ distance on a domain $[-1,1]^d$, induced by the compression, when using points taken independently and uniformly at random from the same domain. 

Let $A \in \psd^n$, $X \in \R^{n\times d}$, $\eta \in \R^d_{++}$, we want to study the compressibility of the PSD model $p(x) = \pp{x}{A,X,\eta}$.  Let $\tilde{X} \in \R^{m \times d}$ be the base point matrix whose $j$-rows corresponds to the point $\tilde{x}_j$ with $\tilde{x}_1, \dots, \tilde{x}_m$ be sampled independently and uniformly at random from $[-1,1]^d$. Denote by $\tilde{p}_m(x)$ the PSD model $\tilde{p}_m(x) = \pp{x}{\tilde{A},\tilde{X},\eta}$ where $\tilde{A}$ is the compression of $A$ via \cref{eq:compression}. We have the following theorem.

\begin{restatable}[Compression of Gaussian PSD models]{theorem}{TApproximationNystrom}\label{thm:compression}
Let $\delta \in (0,1]$, $\eta_+ = \max(1, \max_{i=1,\dots,d} \eta_i)$. When $m$ satisfies 
%$m \geq Q'\eta_+^{d/2} (\log \frac{Q\|A\|n}{\eps})^d\,\log(\frac{Q''\eta_+}{\delta}\log \frac{Q \|A\|n}{\eps})$
\eqals{
m \geq O\left(\big(\eta_+^{1/2}\log \tfrac{\|A\|n}{\eps}\big)^d \log\tfrac{1}{\delta}\right),
}
then the following holds with probability at least $1-\delta$,
\eqals{
|p(x) - \tilde{p}_m(x)| ~\leq~ \eps^2 + \eps\sqrt{p(x)},\qquad\forall x \in [-1,1]^d,
}
\end{restatable}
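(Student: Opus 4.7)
\textbf{Proof plan for \cref{thm:compression}.}

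The plan is to recast the compression operation as an orthogonal projection in the RKHS, decompose the pointwise error into an ``interaction'' and a ``tail'' term, and then reduce the problem to uniformly controlling a Nyström residual.

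\emph{Step 1 (operator reformulation).} I would first observe that, since $p(x)=\phi_\eta(x)^\top \mm \phi_\eta(x)$ with $\mm=\Phi_X^* A \Phi_X$ and $\Phi_X$ the sampling operator at the columns of $X$, the compression defined by \cref{eq:compression} is exactly $\tilde{p}_m(x)=\phi_\eta(x)^\top P\mm P\phi_\eta(x)$, where $P$ is the orthogonal projection in $\hh$ onto $\operatorname{span}\{\phi_\eta(\tilde{x}_1),\dots,\phi_\eta(\tilde{x}_m)\}$. This is where the choice $B=K_{\tilde X,\tilde X,\eta}^{-1}K_{\tilde X,X,\eta}$ plays its role: the induced coefficient matrix $BAB^\top$ is exactly the matrix of $P\mm P$ in the basis $\{\phi_\eta(\tilde x_j)\}$.

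\emph{Step 2 (splitting the error).} Write $\phi_\eta(x)=v+w$ with $v=P\phi_\eta(x)$ and $w=(I-P)\phi_\eta(x)$. Then
\begin{equation*}
p(x)-\tilde{p}_m(x)\;=\;2\,v^\top \mm w + w^\top \mm w.
\end{equation*}
Cauchy--Schwarz for the PSD form $\mm$ gives $|v^\top \mm w|\le \|v\|_\mm\|w\|_\mm$, and the triangle inequality for $\|\cdot\|_\mm$ yields $\|v\|_\mm\le\|\phi_\eta(x)\|_\mm+\|w\|_\mm=\sqrt{p(x)}+\|w\|_\mm$. Hence
\begin{equation*}
|p(x)-\tilde{p}_m(x)|\;\le\;2\sqrt{p(x)}\,\|w\|_\mm + 3\,\|w\|_\mm^{\,2}.
\end{equation*}
To conclude, it will suffice to show $\|w\|_\mm\le \eps$ uniformly in $x\in[-1,1]^d$.

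\emph{Step 3 (reduction to a Nyström residual).} Bounding $\|w\|_\mm^{\,2}=w^\top \mm w\le \|\mm\|_{\rm op}\|w\|_\hh^{\,2}$ and $\|\mm\|_{\rm op}\le \|A\|\,\|K_{X,X,\eta}\|_{\rm op}\le n\|A\|$ (using $k_\eta(x,x)=1$), the task is reduced to proving the uniform Nyström estimate
\begin{equation*}
\sup_{x\in[-1,1]^d}\|(I-P)\phi_\eta(x)\|_\hh^{\,2}\;\le\;\frac{\eps^2}{n\|A\|},
\end{equation*}
with probability at least $1-\delta$, whenever $m\ge C(\eta_+^{1/2}\log(n\|A\|/\eps))^d\log(1/\delta)$.

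\emph{Step 4 (the main obstacle: uniform Nyström bound for the Gaussian RKHS).} This is the hard step. The residual is the squared best $\|\cdot\|_\hh$-approximation of $k_\eta(x,\cdot)$ by linear combinations of $k_\eta(\tilde x_j,\cdot)$. Because the Gaussian kernel is analytic, such interpolation errors decay \emph{exponentially} in the fill distance $h=\sup_{x\in[-1,1]^d}\min_j\|x-\tilde x_j\|$, with rate controlled by $\eta_+^{1/2}$ (classical native-space estimates for Gaussian kernels). I would therefore proceed in two substeps:
\begin{enumerate}[label=(\roman*),topsep=0pt,itemsep=0pt]
\item A covering argument: uniform random sampling on $[-1,1]^d$ yields $h\le C' (\log(m/\delta)/m)^{1/d}$ with probability $\ge1-\delta$, by a standard coupon-collector/epsilon-net argument on an $h$-net of $[-1,1]^d$.
\item A native-space estimate: $\|(I-P)\phi_\eta(x)\|_\hh^{\,2}\le C''\exp(-c\,\eta_+^{-1/2}/h)$ (a bound of this exponential-in-inverse-fill-distance type is available for the Gaussian RKHS; it can be invoked from the literature, e.g.\ the Nyström/interpolation analysis already used by the authors in \cite{rudi2020finding,marteau2021sampling}).
\end{enumerate}
Combining (i) and (ii) and solving $\exp(-c\,\eta_+^{-1/2} m^{1/d})\le \eps^2/(n\|A\|)$ in $m$ gives the stated threshold, up to the extra $\log(1/\delta)$ factor coming from the covering step. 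Plugging this back into Steps 2--3 and absorbing the constants into $\eps$ yields $|p(x)-\tilde p_m(x)|\le \eps^2+\eps\sqrt{p(x)}$ for all $x\in[-1,1]^d$, as claimed. The main technical obstacle is really (ii): making the exponential-in-$m^{1/d}$ Nyström bound uniform in $x$ and quantitative in the kernel bandwidth $\eta_+$, rather than the algebraic manipulations of Steps 1--3, which are essentially bookkeeping.
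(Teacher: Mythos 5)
Your proposal is correct and follows essentially the same route as the paper: recast compression as conjugation of $\mm$ by the Nystr\"om projection $\tilde P$, split the pointwise error into cross and tail terms controlled by $\sqrt{p(x)}$ and $\|(I-\tilde P)\phi_\eta(x)\|$, bound $\|\mm\|\le n\|A\|$, and combine a fill-distance bound for uniform samples with an exponential-in-inverse-fill-distance native-space estimate. The one step you defer to the literature, your item (ii), is exactly where the paper does its real work (Lemma~\ref{lm:scattered-zeros} and \cref{thm:(I-P)phi}, specializing Rieger--Zwicknagl's sampling inequality to make the constants explicit in $\eta_+$ and $R$), and the bound you posit is of the correct form.
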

% The proof of the theorem above is in \cref{proof:thm:compression} and is based on the representation of the compression in terms of a projection in $\hh_\eta$ and on results on interpolation with Gaussian kernel \cite{rieger2010sampling}. There we provide also \cref{thm:compr-fill-dist} that analyze the more general case of an arbitrary set of point $\tilde{x}_1,\dots, \tilde{x}_m$ non necessarily taken uniformly at random. 
%
The proof of the theorem above is in \cref{proof:thm:compression}. To understand its relevance, let $\hat{p}_1$ be a PSD model trained via \cref{eq:learning-hatA} on $n$ points sampled from $p_1$ and $\hat{p}_2$ trained from $n$ points sampled from $p_2$, where both $p_1, p_2$ satisfy \cref{asm:psd-model} for the same $\beta$ and $m, \la, \eta$ are chosen as \cref{thm:learning}, in particular $m=n^{d/(2\beta+d)}$ and $\eta = \eta_+ {\bf 1}_d$, $\eta_+ = n^{2/(2\beta+d)}$. Consider the model $\hat{p} = \hat{p}_1 \cdot \hat{p}_2$. By construction $\hat{p}$ has $m^2 = n^{2d/(2\beta+d)}$ centers, since it is the pointwise product of $\hat{p}_1$, $\hat{p}_2$ (see \cref{prop:multiplication}) and approximates $p_1 \cdot p_2$ with error $\eps = n^{-\beta/(2\beta+d)} \polylog(n)$, since both $\hat{p}_1, \hat{p}_2$ are $\eps$-approximators of $p_1,p_2$. Instead, by compressing $\hat{p}$, we obtain an estimator $\bar{p}$, that according to \cref{thm:compression}, achieves error $\eps$ with a number of center 
\eqals{
m' = O(\eta_+^{d/2} \polylog(1/\eps)) = O(n^{d/(2\beta + d)} \polylog(1/\eps)) = O(m \polylog(1/\eps)).
}
Then $\bar{p}$ approximates $p_1 \cdot p_2$ at the optimal rate $n^{-\beta/(2\beta+d)}$, but with a number of centers $m'$ that is only $O(m \polylog(n))$, instead of $m^2$. This means that $\bar{p}$ is essentially as good as if we learned it from $n$ samples taken directly from $p_1 \cdot p_2$. This renders compression a suitable method to reduce the computational complexity of costly inference operations as the product rule.

\section{Applications}
\label{sec:applications}
 
PSD models are a strong candidate in a variety of probabilistic settings. On the one hand, they are computationally amenable to performing key operations such as sum and product rules, similarly to mixture models (\cref{sec:operations}). On the other hand, they are remarkably flexible and can approximate/learn (coincisely) a wide family of target probability densities (\cref{sec:approximation}). Building on these properties,  in this section we consider different possible applications of PSD models in practice.

\subsection{PSD Models for Decision Theory}
\label{sec:decision-theory}

Decision theory problems (see e.g. \cite{bishop2006} and references therein) can be formulated as a minimization
\eqal{\label{eq:general-stochastic-optimization}
    \theta_* = \argmin_{\theta\in\Theta}~L(\theta) = \EE_{x\sim p}~ \ell(\theta,x),
}
where $\ell$ is a loss function, $\Theta$ is the space of target parameters (decisions) and $p$ is the underlying data distribution. When we can sample directly from $p$ -- e.g. in supervised or unsupervised learning settings -- we can apply methods such as stochastic gradient descent to efficently solve \cref{eq:general-stochastic-optimization}. However, in many applications, sampling from $p$ is challenging or computationally unfeasible. This is for instance the case when $p$ has been obtained via inference (e.g. it is the $t$-th estimate in a hidden Markov model, see \cref{sec:hmm}) or it is fully known but has a highly complex form (e.g. the dynamics of a physical system).  In contexts where sampling cannot be performed efficiently, it is advisable to consider alternative approaches. Here we propose a strategy to tackle \cref{eq:general-stochastic-optimization} when $p$ can be modeled (or well-approximated) by a PSD model. Our method hinges on the following result.

\begin{restatable}{proposition}{PMeanVariance}\label{prop:mean-variance-characteristic}
Let $p(x) = \pp{x}{A,X,\eta}$ with $X\in\R^{n \times d}$, $A\in\psd^n$, $\eta\in\R_{++}^d$. Let $g:\R^d\to\R$ and define $c_{g,\eta}(z) = \int g(x)e^{-\eta\nor{x-z}^2}~dx$ for any $z\in\R^d$. Then
\eqal{
\mathbb{E}_{x \sim p} ~ g(x) ~~=~~ \tr(\,(A \circ K_{X,X, \eta/2})\,G\,) \qquad \textrm{with} \qquad G_{ij} = c_{g, 2\eta}\big(\tfrac{x_i+x_j}{2}\big).
}
\end{restatable}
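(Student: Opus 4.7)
The plan is to expand the expectation as an integral of the PSD model against $g$, separate the integrand using the classical Gaussian product identity, and then repackage the resulting double sum as a trace.

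First, I would write
\[
\mathbb{E}_{x \sim p}\, g(x) \;=\; \int g(x)\, \pp{x}{A,X,\eta}\, dx \;=\; \sum_{i,j=1}^n A_{ij} \int g(x)\, k_\eta(x_i,x)\, k_\eta(x_j,x)\, dx,
\]
using linearity after substituting the definition of the Gaussian PSD model from \cref{eq:psd-models-gaussian}. Moving the finite sum outside the integral is justified since there are only $n^2$ terms.

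The key step is the Gaussian product identity: for any $u,v,x \in \R^d$ and $\eta \in \R^d_{++}$,
\[
(x-u)^\top \diag(\eta)(x-u) + (x-v)^\top \diag(\eta)(x-v) \;=\; 2\bigl(x-\tfrac{u+v}{2}\bigr)^\top \diag(\eta)\bigl(x-\tfrac{u+v}{2}\bigr) + \tfrac{1}{2}(u-v)^\top \diag(\eta)(u-v),
\]
which is an algebraic identity I would verify by expanding both sides. Exponentiating its negative gives
\[
k_\eta(x_i,x)\, k_\eta(x_j,x) \;=\; k_{\eta/2}(x_i,x_j)\cdot k_{2\eta}\!\bigl(\tfrac{x_i+x_j}{2},\, x\bigr).
\]
Substituting this into the previous display and pulling the $(i,j)$-dependent constant $k_{\eta/2}(x_i,x_j) = (K_{X,X,\eta/2})_{ij}$ outside the integral yields
\[
\mathbb{E}_{x \sim p}\, g(x) \;=\; \sum_{i,j=1}^n A_{ij}\,(K_{X,X,\eta/2})_{ij}\, \int g(x)\, k_{2\eta}\!\bigl(\tfrac{x_i+x_j}{2},\, x\bigr) dx.
\]

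By the definition of $c_{g,2\eta}$ (reading $e^{-\eta\|x-z\|^2}$ in the statement as the Gaussian kernel $k_\eta(z,x)$, consistent with \cref{eq:psd-models-gaussian}), the inner integral is exactly $c_{g,2\eta}\bigl(\tfrac{x_i+x_j}{2}\bigr) = G_{ij}$. Therefore
\[
\mathbb{E}_{x \sim p}\, g(x) \;=\; \sum_{i,j=1}^n (A \circ K_{X,X,\eta/2})_{ij}\, G_{ij} \;=\; \tr\bigl((A \circ K_{X,X,\eta/2})\, G\bigr),
\]
where the last equality uses $\sum_{ij} M_{ij}N_{ij} = \tr(MN)$ for symmetric $M,N$, noting that both $A \circ K_{X,X,\eta/2}$ and $G$ are symmetric. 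There is no real obstacle here; the only nontrivial calculation is verifying the Gaussian product identity in the diagonal-covariance case, which is a one-line expansion.
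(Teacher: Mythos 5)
Your proposal is correct and follows essentially the same route as the paper: expand $\mathbb{E}_{x\sim p}\,g(x)$ as a double sum, apply the Gaussian product identity $k_\eta(x_i,x)k_\eta(x_j,x)=k_{\eta/2}(x_i,x_j)\,k_{2\eta}(\tfrac{x_i+x_j}{2},x)$ (the paper's \cref{eq:properties-gaussian-function}), identify the remaining integral as $c_{g,2\eta}(\tfrac{x_i+x_j}{2})$, and collapse the sum into a trace. No gaps.
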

Thanks to \cref{prop:mean-variance-characteristic} we can readily compute several quantities related to a PSD model such as its mean $\EE_p[x] = X^\top b$ with $b = (A \circ K_{X, \eta/2}) {\bf 1}_n$, its covariance or its characteristic function (see \cref{app:operations} for the explicit formulas and derivations).
% , its covariance $\operatorname{Cov}_p[x] = \frac{c_\eta}{2}\tr(A K_{X, \eta/2})\, \diag(\eta)^{-1/2} + \frac{c_\eta}{2} X^\top (A \circ K_{X, \eta/2} \, + \diag(b) - 2 b b^\top) X$ or its characteristic function . 
However, the result above is particularly useful to tackle the minimization in \cref{eq:general-stochastic-optimization}. In particular, since $\nabla L(\theta) = \EE_{x\sim p}\nabla_\theta\ell(\theta,x)$, we can use \cref{prop:mean-variance-characteristic} to directly compute the gradient of the objective function: it is sufficient to know how to evaluate (or approximate) the integral $c_{\nabla\ell_\theta,\eta}(z) = \int \nabla_\theta\ell(\theta,x)e^{-\eta\nor{x-z}^2}~dx$ for any $\theta\in\Theta$ and $z\in\R^d$. Then, we can use first order optimization methods, such as gradient descent, to efficiently solve \cref{eq:general-stochastic-optimization}. Remarkably, this approach works well also when we approximate $p$ with a PSD model $\hat p$. If $\ell$ is convex, since $\hat p$ is non-negative, the resulting $\hat L(\theta) = \EE_{x\sim\hat p}~\ell(\theta,x)$ is still a convex functional (see also the discussion on structured prediction in \cref{sec:conditional-psd}). This is not the case if we use more general estimators of $p$ that do not preserve non-negativity.

\subsection{PSD Models for Estimating Conditional Probabilities}
\label{sec:conditional-psd}
In supervised learning settings, one is typically interested in solving decision problems of the form $\min_{\theta\in\Theta}~\EE_{(x,y)\sim p}~ \ell(h_\theta(x),y)$ where $p$ is a probability over the joint input-output space $\X\times\Y$ and $h_\theta:\X\to\Y$ is a function parameterized by $\theta$. It is well-known (see e.g. \cite{steinwart2008}) that the ideal solution of this problem is the $\theta_*$ such that for any $x\in\X$ the function $h_{\theta_*}(x) = \argmin_{z\in\Y}~\EE_{y\sim p(\cdot|x)}~ \ell(z,y)$ is the minimizer with respect to $z\in\Y$ of the conditional expectation of $\ell(z,y)$ given $x$. This leads to target functions that capture specific properties of $p$, such as moments. For instance, when $\ell$ is the squared loss,  $h_{\theta_*}(x)=\EE_{y\sim p(\cdot|x)} y$ corresponds to the conditional expectation of $y$ given $x$, while for $\ell$ the absolute value loss, $h_{\theta_*}$ recovers the conditional median of $p$. 

In several applications, associating an input $x$ to a single quantity $h_\theta(x)$ in output is not necessarily ideal. For instance, when $p(y|x)$ is multi-modal, estimating the mean or median might not yield useful predictions for the given task. Moreover, estimators of the form $h_\theta$ require access to the full input $x$ to return a prediction, and therefore cannot be used when some features are missing (e.g. due to data corruption). In these contexts, an alternative viable strategy is to directly model the conditional probability. When using PSD models, conditional estimation can be performed in two steps, by first modeling the joint distribution $p(y,x) = \pp{y,x}{A,[Y,X],(\eta,\eta')}$ (e.g. by learning it as suggested \cref{sec:approximation}) and then use the operations in \cref{sec:operations} to condition it with respect to $x_0\in\X$ as
\eqal{\label{eq:conditional-psd}
    p(y|x_0) = \frac{p(y,x_0)}{p(x_0)} = \pp{y}{B,Y,\eta} \quad \textrm{with} \quad B = \frac{A \circ (K_{X,x_0,\eta'}K_{X,x_0,\eta'}^\top)}{c_{2\eta}\tr\big(A \circ (K_{X,x_0,\eta'}K_{X,x_0,\eta'}^\top) K_{Y,Y,\eta}\big)}.
}
In case of data corruption, it is sufficient to first marginalize $p(y,x)$ on the missing variables and then apply \cref{eq:conditional-psd}. Below we discuss a few applications of the conditional estimator.

\paragraph{Conditional Expectation}
Conditional mean embeddings \cite{song2013kernel} are a well-established tool to efficiently compute the conditional expectation $\EE_{y\sim p(\cdot|x_0)}~ g(y)$ of a function $g:\Y\to\R$. However, although they enjoy good approximation properties \cite{muandet2016kernel}, they to not guarantee the resulting estimator to take only non-negative values. In contrast, when $p$ is a PSD model (or an approximation), we can apply \cref{prop:mean-variance-characteristic} to $p(\cdot|x_0)$ in \cref{eq:conditional-psd} and evaluate the conditional expectation of any $g$ for which we know how to compute (or approximate) the integral $c_{g,\eta}(z) = \int g(y)e^{-\eta\nor{y-z}^2}~dy$. In particular we have $\EE_{y\sim p(\cdot|x)}~ g(y) = \tr((B\circ K_{Y,Y,\eta/2})G)$ with $B$ as in \cref{eq:conditional-psd} and $G$ the matrix with entries $G_{ij} = c_{g,2\eta}(\tfrac{y_i+y_j}{2})$. Remarkably, differently from conditional mean embeddings estimators, this strategy allows us to compute the conditional expectations also of functions $g$ not in $\hh_\eta$. 

\paragraph{Structured Prediction}
Structured prediction identifies supervised learning problems where the output space $\Y$ has complex structures so that it is challenging to find good parametrizations for $h_\theta:\X\to\Y$ \cite{bakir2007predicting,nowozin2011}. In \cite{ciliberto2020general}, a strategy was proposed to tackle these settings by first learning an approximation $\psi_\theta(z,x)\approx\EE_{y\sim p(\cdot|x)}~\ell(z,y)$ and then model $h_\theta(x) = \argmin_{z\in\Y} \psi_\theta(z,x)$. However, the resulting function $\psi_\theta(\cdot,x)$ is not guaranteed to be convex, even when $\ell$ is convex. In contrast, by combining the conditional PSD estimator in \cref{eq:conditional-psd} with the reasoning in \cref{sec:decision-theory}, we have a strategy that overcomes this issue: when $p(\cdot|x)$ is a PSD model approximating $p$, we can compute its gradient $\EE_{y\sim p(\cdot|x)}~\nabla_z\ell(z,y)$ as mentioned in \cref{sec:decision-theory} using \cref{prop:mean-variance-characteristic}. Moreover, if $\ell(\cdot,y)$ is convex, the term $\EE_{y\sim p(\cdot|x)}~\ell(\cdot,y)$ is also convex, and we can use methods such as gradient descent to find $h(x)$ exactly.

\paragraph{Mode Estimation}
When the output distribution $p(y|x)$ is multimodal, having access to an explicit form for the conditional density can be useful to estimate its modes. This problem is typically non-convex, yet, when the output $y$ belongs to a small dimensional space (e.g. in classification or scalar-valued regression settings), efficient approximations exist (e.g. bisection). 

% \paragraph{Data Imputation}
% %
% In several practical applications, issues such as missing data are likely to occur. In supervised learning, it might be possible that certain features in the input $x$ are missing (either at training or test time, or both). Typical approaches to deal with these situations are based on so-called data imputation heuristics. They consist in replacing the missing features with quantities such as the mean or the median value of that feature across all available training points. More sophisticated approaches attempt to predict the value of the missing feature using the available ones. PSD models offer a viable strategy to perform data imputation in from a more principled perspective. We can formulate the 

\subsection{Inference on Hidden Markov Models}
\label{sec:hmm}

We consider the problem of performing inference on hidden Markov models (HMM) using PSD models. Let $(x_t)_{t\in\N}$ and $(y_t)_{t\in\N}$ denote two sequences of states and observations respectively. For each $t\geq1$, we denote by $x_{0:t} = x_0,\dots,x_t$ and $y_{1:t} = y_1,\dots,y_t$ and we assume that $p(x_{t}|x_{0:t-1},y_{1:t-1}) = p(x_{t}|x_{t-1}) = \tau(x_{t},x_{t-1})$ and $p(y_{t}|x_{0:t},y_{1:t-1}) = p(y_{t}|x_{t}) = \omega(y_{t},x_{t})$ with $\tau:\X\times\X\to\R_+$ and $\omega:\Y\times\X\to\R_+$ respectively the transition and observation functions. 

Our goal is to infer the distribution of possible states $x_{t}$ at time $t$, given all the observations $y_{1:t}$ and a probability $p(x_0)$ on the possible initial states. We focus on this goal for simplicity, but other forms of inferences are possible (e.g. estimating $x_{m+t}$, namely $m$ steps into the future or the past). We assume that the transition and observation functions can be well approximated by PSD models $\hat\tau$ and $\hat\omega$ (e.g. by learning them or known a-priori for the problem's dynamics). Then, given a PSD model estimate $\hat p(x_0)$ of the initial state $p(x_0)$, we can recursively define the sequence of estimates
\eqals{\label{eq:hmm-approximate}
    \hat p(x_{t}|y_{t:1}) = \frac{\hat\tau(y_{t},x_{t})\int \omega(x_{t},x_{t-1})\hat p(x_{t}|y_{1:t-1})~dx_{t-1}}{\int\hat\tau(y_{t},x_{t}) \omega(x_{t},x_{t-1})\hat p(x_{t}|y_{1:t-1})~dx_{t}d x_{t-1}}.
}
Note that when $\hat\tau, \hat \omega, \hat p(x_0)$ correspond to the real transition, observation and initial state probability, the formula above yields the exact distribution $p(x_t|y_{1:t})$ over the states at time $t$ (this follows directly by subsequent applications of Bayes' rule. See also e.g. \cite{bishop2006}). If $\hat\tau, \hat \omega, \hat p(x_0)$ are PSD models, then each of the $\hat p(x_t|y_{t:1})$ is a PSD model recursively defined only in terms of the previous estimate and the operations introduced in \cref{sec:operations}. In particular, we have the following result.

\begin{algorithm}[t]
   \caption{PSD Hidden Markov Model \label{alg:psd-hmm}}
\begin{algorithmic}
\small
    \State {\bfseries Input:} Transition $\hat \tau(x_+,x) = \pp{x_+,x}{B,[X_+,X],(\eta_+,\eta)}$, initial $\hat p(x_0) = \pp{x_0}{A_0,X_0,\eta_0}$ and observation $\hat \omega(y,x) = \pp{y,x}{C,[Y,X'],(\eta_{obs},\eta')}$ distributions. $\tilde X$ as in \cref{prop:psd-hmm}. $\tilde\eta = \tfrac{\eta(\eta'+\eta_+)}{\eta+\eta'+\eta_+}$. 
    
    \vspace{0.2em}
    % \State $\tilde X = \frac{\eta'}{\eta'+\eta_+} X'\otimes \mathbf{1}_{n} + \frac{\eta_+}{\eta'+\eta_+}\mathbf{1}_m\otimes X_+,$ \qquad and~~ $\tilde\eta = \tfrac{\eta(\eta'+\eta_+)}{\eta+\eta'+\eta_+}$
    \State $\tilde X' =  (X\frac{\eta}{\eta + \eta'+\eta_+})\otimes \mathbf{1}_{nm} + \mathbf{1}_n\otimes  (\tilde X \frac{\eta'+\eta_+}{\eta + \eta'+\eta_+}),$ \qquad and~~ $\tilde\eta' = \tfrac{\eta'\eta_+}{\eta'+\eta_+}$
    
    \vspace{0.5em}
    
    \State {\bfseries For} any new observation $y_t$:
    \vspace{0.25em}
    \State\quad $C_t = C \circ (K_{Y,y_t,\eta_{obs}}^\top K_{Y,y_t,\eta_{obs}})$ \qquad\qquad\qquad\qquad\qquad $\codecomment$ Partial evaluation~ $\hat\omega_t(x_+) = \hat\omega(y=y_t,x_+)$
    \State\quad $B_t = (B \otimes A_{t-1}) \circ \big(\vec(K_{ \tilde X, X,\tilde\eta})\vec(K_{ \tilde X,X,\tilde\eta})^\top\big)$ \hspace{2em} $\codecomment{}$ Product~ $\hat\beta_t(x_+,x) = \hat\tau(x_+,x)\hat p(x|y_{1:t-1})$~
    \State\quad $D_t = (I_n\otimes\mathbf{1}_{nm}^\top)(B_t \circ K_{\tilde{X}',\tilde{X}',\frac{\tilde\eta}{2}} )(I_n\otimes\mathbf{1}_{nm})$ \qquad\quad~~ $\codecomment$ Marginalization~ $\hat\beta_t(x_+) = \int \beta_t(x_+,x)~dx$ 
    \State\quad $E_t = (C_t \otimes D_t)\circ \big(\vec(K_{X',X_+,\frac{\tilde\eta'}{2}})\vec(K_{X',X_+,\frac{\tilde\eta'}{2}})^\top\big)$ \quad~~~\, $\codecomment$ Product~ $\hat\pi_t(x_+)=\hat\omega_t(x_+)\hat\beta_t(x_+)$
    \State\quad $A_t = E_t/c_t$, with $c_t = c_{2(\eta'+\eta_+)}\tr(E_t K_{\tilde{X},\tilde X,\frac{\eta'+\eta_+}{2}})$ \hspace{1em} $\codecomment$ Normalization~ $\hat p(x_+|y_{1:t}) = \tfrac{\hat\pi_t(x_+)}{\int \hat\pi(x_+) ~dx}$
    
    \vspace{0.5em}
    \State{\bfseries Return}~~ $\pp{x_t}{A_t,\tilde X,\eta'+\eta_+}$
\end{algorithmic}
\end{algorithm}
\begin{restatable}[PSD Hidden Markov Models (HMM)]{proposition}{PHMM}\label{prop:psd-hmm}
Let $X_0\in\R^{n_0
\times d}$, $X_+,X\in\R^{n \times d}$, $X'\in\R^{m \times d}$, $Y\in\R^{m \times d'}$, $A_0\in\psd^{n_0}, A\in\psd^n$, $B\in\psd^m$ and $\eta_0,\eta,\eta',\eta_+\in\R_{++}^d$, $\eta_{obs}\in\R_{++}^{d'}$. Let 
\eqals{
\hat \tau(x_+,x) = \pp{x_+,x}{B,[X_+,X],(\eta_+,\eta)}, \qquad \hat \omega(y,x)  = \pp{y,x}{C,[Y,X'],(\eta_{obs},\eta')},
}
be approximate transition and observation functions. Then, given the initial state probability $\hat p(x_0) = \pp{x_0}{A_0,X_0,\eta_0}$, for any $t\geq1$, the estimate $\hat p$ in \cref{eq:hmm-approximate} is a PSD model of the form
\eqal{\label{eq:hmm-psd-model}
    \hat p(x_{t}|y_{t:1}) = \pp{x_{t}}{A_{t},\tilde X,\eta' + \eta_+ },
}
where $\tilde X =  (X'\tfrac{\eta'}{\eta'+\eta_+})\otimes \mathbf{1}_n + \mathbf{1}_m\otimes (X_+\tfrac{\eta_+}{\eta'+\eta_+})$ and $A_{t}$ is recursively obtained from $A_{t-1}$ as in \cref{alg:psd-hmm}.
\end{restatable}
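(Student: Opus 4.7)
The plan is to proceed by induction on $t$ and show that each iteration of Algorithm~\ref{alg:psd-hmm} implements exactly the recursion in \cref{eq:hmm-approximate} by composing the elementary operations of \cref{sec:operations}. The base case $t=0$ is immediate since $\hat p(x_0)=\pp{x_0}{A_0,X_0,\eta_0}$ is a PSD model by assumption. For the inductive step, assume $\hat p(x_{t-1}|y_{1:t-1}) = \pp{x_{t-1}}{A_{t-1},\tilde X, \eta'+\eta_+}$. The numerator of \cref{eq:hmm-approximate} decomposes as $\hat\omega(y_t,x_t)\cdot \beta_t(x_t)$ where $\beta_t(x_t)=\int \hat\tau(x_t,x_{t-1})\hat p(x_{t-1}|y_{1:t-1})\,dx_{t-1}$, so it suffices to track four operations: (i) partial evaluation of $\hat\omega$ at $y=y_t$, (ii) the Markov transition producing $\beta_t$, (iii) the product $\hat\omega_t\cdot\beta_t$, and (iv) normalization by the integral over $x_t$.

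For step (i), applying \cref{eq:partial-evaluation} to $\hat\omega(y,x)=\pp{y,x}{C,[Y,X'],(\eta_{obs},\eta')}$ at $y=y_t$ yields $\hat\omega_t(x_+) = \pp{x_+}{C_t,X',\eta'}$ with $C_t = C\circ(K_{Y,y_t,\eta_{obs}}K_{Y,y_t,\eta_{obs}}^\top)$, matching the first line of the algorithm. For step (ii), we invoke \cref{cor:markov} on $\hat\tau$ and $\hat p(\cdot|y_{1:t-1})$: \cref{prop:multiplication} produces the joint $\hat\beta_t(x_+,x)$ with coefficient matrix $B_t=(B\otimes A_{t-1})\circ\big(\vec(K_{\tilde X,X,\tilde\eta})\vec(K_{\tilde X,X,\tilde\eta})^\top\big)$ and merged kernel width $\tilde\eta = \tfrac{\eta(\eta'+\eta_+)}{\eta+\eta'+\eta_+}$, then \cref{prop:marginalization} integrates out $x$, and the reduction step \cref{eq:reduction} contracts the redundant copies of $X_+$ arising from the tensor structure, producing a PSD model with base points $X_+$ and kernel parameter $\eta_+$; the explicit form of $D_t$ in the algorithm is exactly the composition of \cref{eq:marginalization} with \cref{eq:reduction} applied to $B_t$.

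For step (iii), the product $\hat\omega_t(x_+)\cdot\hat\beta_t(x_+)$ is again a PSD model via \cref{prop:multiplication} applied in the special case of a single variable in common: the resulting base points are the pairwise convex combinations $(X'\tfrac{\eta'}{\eta'+\eta_+})\otimes\mathbf{1}_n + \mathbf{1}_m\otimes (X_+\tfrac{\eta_+}{\eta'+\eta_+}) = \tilde X$, with merged kernel width $\eta'+\eta_+$, and coefficient $E_t=(C_t\otimes D_t)\circ\big(\vec(K_{X',X_+,\tilde\eta'/2})\vec(K_{X',X_+,\tilde\eta'/2})^\top\big)$ where $\tilde\eta'=\tfrac{\eta'\eta_+}{\eta'+\eta_+}$, exactly matching the algorithm. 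Finally, step (iv) invokes the integration formula \cref{eq:integration} to compute $c_t = c_{2(\eta'+\eta_+)}\tr(E_t K_{\tilde X,\tilde X,(\eta'+\eta_+)/2})$ and sets $A_t=E_t/c_t$. Positive semidefiniteness of $A_t$ is preserved throughout since entry-wise and Kronecker products of PSD matrices are PSD, rescaling by a positive constant preserves PSD-ness, and the reduction $B\mapsto (I\otimes\mathbf{1}^\top)B(I\otimes\mathbf{1})$ is a congruence. This shows $\hat p(x_t|y_{1:t}) = \pp{x_t}{A_t,\tilde X,\eta'+\eta_+}$, closing the induction.

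The main bookkeeping obstacle is verifying that the base-point matrix produced by the composition of \cref{prop:multiplication} (twice) with \cref{prop:marginalization} and \cref{eq:reduction} collapses to the claimed $\tilde X$ with parameter $\eta'+\eta_+$, independently of $t$. This is precisely the role of the reduction step: without it, repeated Markov transitions would multiply the number of base points at every step, whereas with it the joint structure $X_+\otimes\mathbf{1}_n$ emerging from step (ii) contracts back to $X_+$, so that step (iii) can produce $\tilde X$ once and for all. The remaining verification is a direct computation with the formulas of \cref{sec:operations} and needs no new ingredient.
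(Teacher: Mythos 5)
Your proposal is correct and follows essentially the same route as the paper's proof: an unrolling of one step of \cref{eq:hmm-approximate} into partial evaluation, multiplication, marginalization plus reduction, a second multiplication, and normalization, with the reduction step being what keeps the base-point matrix fixed at $\tilde X$ across iterations. The only cosmetic difference is that you phrase the argument as an explicit induction while the paper runs the computation once for a generic previous estimate; both yield the same recursion for $A_t$.
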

\begin{remark}[Sum-product Algorithm]
\label{rem:sum-product-algorithm}
Eq. \cref{eq:hmm-approximate} is an instance of the so-called sum-product algorithm, a standard inference method for graphical models \cite{bishop2006} (of which HMMs are a special case). The application of the sum-product algorithm relies mainly on sum and product rules for probabilities (as is the case for HMMs in \cref{eq:hmm-approximate}). Hence, according to \cref{sec:operations}, it is highly compatible with PSD models.  
\end{remark}

\section{Discussion}

In this work we have shown that PSD models are a strong candidate in practical application related to probabilistic inference. They satisfy both requirements for an ideal probabilistic model: $i)$ they perform exact sum and product rule in terms of efficient matrix operations; $ii)$ we proved that they can concisely approximate a wide range of probabilities.

\paragraph{Future Directions}
% %
We identify three main directions for future work: $i)$ when performing inference on large graphical models (see \cref{rem:sum-product-algorithm}) the multiplication of PSD models might lead to an inflation in the number of base points. Building on our compression strategy, we plan to further investigate low-rank approximations to mitigate this issue. $ii)$ An interesting problem is to understand how to efficiently sample from a PSD model. A first answer to this open question was recently given in \cite{marteau2021sampling}. $iii)$ The current paper has a purely theoretical and algorithmic focus. In the future, we plan to investigate the empirical behavior of PSD models on the applications introduced in \cref{sec:applications}. Related to this, we plan to develop a library for operations with PSD models and make it available to the community.

\paragraph{Acknowledgments} We thanks Gaspard Beugnot for proof reading the paper.
A.R. acknowleges the support of the French government under management of Agence Nationale de la Recherche as part of the “Investissements d’avenir” program, reference ANR-19-P3IA-0001 (PRAIRIE 3IA Institute) and the support of the European Research Council (grant REAL 947908). C.C. acknowledges the support of the Royal Society (grant SPREM RGS$\backslash$R1$\backslash$201149) and Amazon.com Inc. (Amazon Research Award -- ARA 2020)

{
\bibliographystyle{plain}
\bibliography{biblio}
}

\newpage

\renewcommand{\theHsection}{A\arabic{section}}

% % Not Abbreviated
% \crefname{assumption}{Assumption}{Assumptions}
% \crefname{equation}{}{}
% \Crefname{equation}{Eq.}{Eqs.}
% \crefname{figure}{Figure}{Figures}
% \crefname{table}{Table}{Tables}
% \crefname{section}{Section}{Sections}
% \crefname{theorem}{Theorem}{Theorems}
% \crefname{proposition}{Proposition}{Propositions}
% \crefname{fact}{Fact}{Facts}
% \crefname{lemma}{Lemma}{Lemmas}
% \crefname{corollary}{Corollary}{Corollaries}
% \crefname{example}{Example}{Examples}
% \crefname{remark}{Remark}{Remarks}
% \crefname{algorithm}{Algorithm}{Algorithms}
% \crefname{enumi}{}{}

% \crefname{appendix}{Appendix}{Appendices}

\numberwithin{equation}{section}
\numberwithin{lemma}{section}
\numberwithin{proposition}{section}
\numberwithin{theorem}{section}
\numberwithin{corollary}{section}
\numberwithin{definition}{section}
\numberwithin{algorithm}{section}
\numberwithin{remark}{section}

\appendix

{\Huge\textbf{Appendix}}

\vspace{2em}
The appendix is organized as follows:

\begin{itemize}
    \item \cref{sect:notation} introduces notation and some key definitions and results that will be useful to prove the results in this work.
    
    \item \cref{sec:operators-definition} provides basic notation and definitions for working with linear operators between reproducing kernel Hilbert spaces.
    
    \item \cref{app:compression} discusses in detail the effects of compression introduced in \cref{sec:compression} in the main paper. In particular we study the approximation error incurred by a compressed model as a function of the number of base points used.
    
    \item \cref{app:approximation} reports the proofs of the results in \cref{sec:approximation-results} regarding the approximation properties of PSD models. 
    
    \item \cref{app:learning} reports the proof of \cref{thm:learning} characterizing the learning capabilities of PSD models.
    
    \item \cref{app:operations} provides the derivations for the PSD models operations discussed in \cref{sec:operations} as well as some results directly related, namely \cref{prop:mean-variance-characteristic} and \cref{prop:psd-hmm}.
    
    % \item \cref{app:experiments} provides more details on the experiment settings reported in \cref{sec:experiments}. 
\end{itemize}

% - guarda multiplication costoso
% - un' idea e' quella di sfruttare metodi tipo dimentionality reduction per mitigare questo issue
% - The following theorem provides an operative way to choose such approximation (and how the approximation depends on the desired quality) 
% - Theorem
% - Sketch dicendo che facciamo le cose in H
% - Il theorem suggests to define the following compression operation from the family of PSD models to itself.

\section{Notation and definitions}\label{sect:notation}

We introduce basic notation and review results that will be useful in the following. 

\paragraph{Multi-index notation} Let $\alpha \in \N_0^d$, $x \in \R^d$ and $f$ be an infinitely differentiable function on $\R^d$, we introduce the following notation
$$|\alpha| = \sum_{j=1}^d \alpha_i, \quad \alpha! = \prod_{j=1}^d \alpha_j!, \quad x^\alpha = \prod_{j=1}^d x_j^{\alpha_j}, \quad \partial^\alpha f = \frac{\partial^{|\alpha|} f}{\partial x_1^{\alpha_1}\cdots\partial x_d^{\alpha_d}}.$$
We introduce also the notation $D^\alpha$ that corresponds to the multivariate distributional derivative of order $\alpha$ and such that 
$$D^\alpha f = \partial^\alpha f$$
for functions that are differentiable at least $|\alpha|$ times \cite{adams2003sobolev}.

\paragraph{Fourier Transform} %We recall here some details about the Fourier transform that will be used in the rest of the appendix. 
Given two functions $f,g:\R^d \to \R$ on some set $\R^d$, we denote by $f \cdot g$ the function corresponding to {\em pointwise product} of $f, g$, i.e.,
$$(f \cdot g)(x) = f(x)g(x), \quad \forall x \in \R^d.$$
Let $f, g \in L^1(\R^d)$ we denote the {\em convolution} by $f \star g$ 
$$(f \star g)(x) = \int_{\R^d} f(y) g(x-y) dy.$$
We now recall some basic properties, that will be used in the rest of the appendix.
\bp[Basic properties of the Fourier transform \cite{wendland2004scattered}, Chapter 5.2.]\label{prop:fourier}
$ $

\begin{enumprop}
\item\label{prop:fourier:L2} There exists a linear isometry ${\cal F}: L^2(\R^d) \to L^2(\R^d)$ satisfying 
$${\cal F}[f] = \int_{\R^d} e^{-2 \pi i \,\omega^\top x} \,f(x)\, dx \quad  \forall f \in L^1(\R^d) \cap L^2(\R^d),$$
where $i = \sqrt{-1}$. The isometry is uniquely determined by the property in the equation above. 
\item\label{prop:fourier:plancherel} Let $f \in L^2(\R^d)$, then $\|{\cal F}[f]\|_{L^2(\R^d)} = \|f\|_{L^2(\R^d)}$.
\item\label{prop:fourier:scale} Let $f \in L^2(\R^d), r > 0$ and define $f_r(x) = f(\frac{x}{r}), \forall x \in \R^d$, then ${\cal F}[f_r](\omega) = r^d {\cal F}[f](r\omega)$. 
\item\label{prop:fourier:product} Let $f, g \in L^1(\R^d)$, then ${\cal F}[f \cdot g] =  {\cal F}[f] \star {\cal F}[g]$.
\item\label{prop:fourier:derivative} Let $\alpha \in \N_0^d$,  $f, D^\alpha f \in L^2(\R^d)$, then
${\cal F}[D^\alpha f](\omega) = (2\pi i)^{|\alpha|} \omega^\alpha {\cal F}[f](\omega)$, $\forall \omega \in \R^d$.
\item\label{prop:fourier:Linfty-omega} Let $f \in L^1(\R^d) \cap L^2(\R^d)$, then $\|{\cal F}[f]\|_{L^\infty(\R^d)} \leq \|f\|_{L^1(\R^d)}$.
\item\label{prop:fourier:Linfty-x} Let $f \in L^\infty(\R^d) \cap L^2(\R^d)$, then $\|f\|_{L^\infty(\R^d)} \leq \|{\cal F}[f]\|_{L^1(\R^d)}$.
\end{enumprop}
\ep

\paragraph{Reproducing kernel Hilbert spaces for translation invariant kernels.} We now list some important facts about reproducing kernel Hilbert spaces in the case of translation invariant kernels on $\R^d$. For this paragraph, we refer to \cite{steinwart2008,wendland2004scattered}. For the general treatment of positive kernels and Reproducing kernel Hilbert spaces, see \cite{aronszajn1950theory,steinwart2008}.
Let $v:\R^d \to \R$ such that its Fourier transform ${\cal F}[v] \in L^1(\R^d)$ and satisfies ${\cal F}[v](\omega) \geq 0$ for all $\omega \in \R^d$. Then, the following hold.
\begin{enumerate}[label=(\alph*)]
\item The function $k:\R^d \times \R^d \to \R$ defined as $k(x,x') = v(x-x')$ for any $x,x' \in \R^d$ is a positive kernel and is called {\em translation invariant kernel}.
\item The {\em reproducing kernel Hilbert space} (RKHS) $\hh$ and its norm $\|\cdot\|_{\hh}$ are characterized by 
\eqal{\label{eq:tr-inv-rkhs-def}
\hh = \{f \in L^2(\R^d) ~|~ \|f\|_{\hh} < \infty \}, \quad \|f\|^2_{\hh} = \int_{\R^d} \frac{|{\cal F}[f](\omega)|^2}{{\cal F}[v](\omega)}d\omega,
}
\item $\hh$ is a separable Hilbert space, whose inner product $\scal{\cdot}{\cdot}_{\hh}$ is characterized by
$$\scal{f}{g}_{\hh} = \int_{\R^d} \frac{{\cal F}[f](\omega)\overline{{\cal F}[g](\omega)}}{{\cal F}[v](\omega)} d\omega.$$
In the rest of the paper, when clear from the context we will simplify the notation of the inner product, by using $f^\top g$ for $f,g \in \hh$, instead of the more cumbersome $\scal{f}{g}_\hh$.
\item The feature map $\phi:\R^d \to \hh$ is defined as $\phi(x) = k(x-\cdot) \in \hh$ for any $x \in \R^d$.
\item The functions in $\hh$ have the {\em reproducing property}, i.e.,
\eqals{\label{eq:reproducing-property}
f(x) = \scal{f}{\phi(x)}_\hh, \quad \forall f \in \hh, x \in \R^d,
}
in particular $k(x',x) = \scal{\phi(x')}{\phi(x)}_\hh$ for any $x',x \in \R^d$.
\end{enumerate}

We now introduce an important example of translation invariant kernel and the associated RKHS, that will be useful in our analysis.

\begin{example}[Gaussian Reproducing kernel Hilbert space]\label{ex:gaussian-rkhs}
Let $\eta \in \R^d_{++}$ and $k_\eta(x,x') = e^{-(x-x')^\top\diag(\eta)(x-x')}$, for $x,x' \in \R^d$ be the Gaussian kernel with precision $\eta$. The function $k_\eta$ is a translation invariant kernel, since $k_\eta(x,x') = v(x-x')$ with $v(z) = e^{-\|D^{1/2}z\|^2}, D = \diag(\eta)$ and ${\cal F}[v](\omega) = c_\eta e^{-\pi^2 \|D^{-1/2}\omega\|^2}$, $c_\eta = \pi^{d/2} \det(D)^{-1/2}$, for $\omega \in \R^d$ is in $L^1(\R^d)$ and satisfies ${\cal F}[v](\omega) \geq 0$ for all $\omega \in \R^d$. The associated reproducing kernel Hilbert space $\hh_\eta$ is defined according to \cref{eq:tr-inv-rkhs-def}, with norm
\eqal{
\|f\|^2_{\hh_\eta} ~~=~~ \frac{1}{c_\eta} \int_{\R^d} ~|{\cal F}[f](\omega)|^2 ~e^{\pi^2 \|D^{-1/2}\omega\|^2} ~d\omega, \qquad \forall f \in L^2(\R^d).
}
The inner product and the feature map $\phi_\eta$ are defined as in the discussion above.
\end{example}

\subsection{Sobolev spaces}
Let $\beta \in \N, p \in [1,\infty]$ and let $\Omega \subseteq \R^d$ be an open set. The set $L^p(\Omega)$ denotes the set of $p$-integrable functions on $\Omega$ for $p \in [1,\infty)$ and that of the essentially bounded on $\Omega$ when $p = \infty$.
The set $W^\beta_p(\Omega)$ denotes the Sobolev space, i.e., the set of measurable functions with their distributional derivatives up to $\beta$-th order belonging to $L^p(\Omega)$,
\eqal{\label{eq:norm-sobolev-derivative}
W^\beta_p(\Omega) = \{f \in L^p(\Omega) ~|~\|f\|_{W^\beta_p(\Omega)} < \infty\}, \quad \|f\|^p_{W^\beta_p(\Omega)} = \sum_{|\alpha| \leq \beta} \|D^\alpha f\|^p_{L^p(\Omega)},
}
where $D^\alpha$ denotes the distributional derivative. In the case of $p = \infty$, 
\eqals{
\|f\|_{W^\beta_\infty(\Omega)} = \max_{|\alpha| \leq \beta} \|D^\alpha f\|_{L^\infty(\Omega)}
}

We now recall some basic results about Sobolev spaces that are useful for the proofs in this paper.
First we start by recalling the restriction properties of Sobolev spaces. Let $\Omega \subseteq \Omega' \subseteq \R^d$ be two open sets. Let $\beta \in \N$ and $p \in [1,\infty]$. By definition of the Sobolev norm above we have
$$\|g|_\Omega\|_{W^s_p(\Omega)} \leq \|g\|_{W^s_p(\Omega')},$$
and so $g|_\Omega \in W^s_p(\Omega)$ for any $g \in W^s_p(\Omega')$. Now we recall the extension properties of Sobolev spaces.

\bp[Stein total extension theorem, 5.24 in \cite{adams2003sobolev} or \cite{rogers2004degree}]\label{prop:extension-sobolev}
Let $\Omega$ be a bounded open subset of $\R^d$ with locally Lipschitz boundary \cite{adams2003sobolev}. For any measurable function $h:\Omega \to \R$, there exists a function $\tilde{h}:\R^d \to \R$, such that $\tilde{h}|_\Omega = h$ almost everywhere on $\Omega$ and for any $\beta \geq 0$ and $p \in [1,\infty]$, the condition $\|h\|_{W^\beta_p(\Omega)} < \infty$ implies $\|\tilde{h}\|_{W^\beta_p(\R^d)} \leq C_{\Omega, \beta, p} \|h\|_{W^\beta_p(\Omega)}$ with $C_{\Omega, \beta, p} < \infty$ and not depending on $h, \tilde{h}$, but only on $\Omega, \beta, p$. 
\ep

\begin{corollary}\label{cor:extension-intersection}
Let $\X \subset \R^d$ be a non-empty open set with Lipschitz boundary. Let $\beta \in \N, p \in [1,\infty]$. Then for any function $f \in W^\beta_p(\X) \cap L^\infty(\X)$ there exists an extension $\tilde{f}$ on $\R^d$, i.e. a function $\tilde{f} \in W^\beta_p(\R^d) \cap L^\infty(\R^d)$ such that 
\eqals{
\tilde{f} = f|_\X ~\textrm{a.e. on } \X, \quad \|\tilde{f}\|_{L^\infty(\R^d)} \leq C \|f\|_{L^\infty(\X)}, \quad \|\tilde{f}\|_{W^\beta_p(\R^d)} \leq C' \|f\|_{L^\infty(\X)}.
}
The constant $C$ depends only on $\X, d$, and the constant $C'$ only on $\X,\beta,d,p$ 
\end{corollary}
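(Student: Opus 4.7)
My plan is to derive both bounds directly from the Stein total extension theorem (Proposition A.1), exploiting a universality property of Stein's construction: a single extension operator produces a function $\tilde{f}$ whose $W^{\beta'}_{p'}$-norm on $\R^d$ is controlled by $\|f\|_{W^{\beta'}_{p'}(\X)}$ \emph{simultaneously} for every order $\beta' \geq 0$ and every $p' \in [1,\infty]$. This universality is what allows one and the same $\tilde{f}$ to serve as the extension in both the $L^\infty$ and the $W^\beta_p$ estimates demanded by the corollary.

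The argument proceeds in three short steps. First, since $\X$ is a bounded open set with Lipschitz boundary, Proposition A.1 produces a function $\tilde{f}:\R^d \to \R$ with $\tilde{f}|_\X = f$ almost everywhere on $\X$. Second, because $f \in L^\infty(\X)$, instantiating Proposition A.1 with the trivial order $(\beta',p') = (0,\infty)$ gives $\|\tilde{f}\|_{L^\infty(\R^d)} \leq C_{\X,0,\infty}\|f\|_{L^\infty(\X)}$, yielding the $L^\infty$ bound with $C := C_{\X,0,\infty}$. Third, instantiating Proposition A.1 at the given order $(\beta,p)$ for the same $\tilde{f}$ yields $\|\tilde{f}\|_{W^\beta_p(\R^d)} \leq C_{\X,\beta,p}\|f\|_{W^\beta_p(\X)}$, so $\tilde{f} \in W^\beta_p(\R^d) \cap L^\infty(\R^d)$ with a constant depending only on $\X,\beta,d,p$.

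Before concluding I must flag a discrepancy with the statement as literally written. The corollary asserts $\|\tilde{f}\|_{W^\beta_p(\R^d)} \leq C'\|f\|_{L^\infty(\X)}$ with $C'$ depending only on $\X,\beta,d,p$, hence uniformly in $f$. No such universal constant can exist: taking a fixed smooth bump $\psi$ compactly supported in $\X$ and considering $f_N(x) = \psi(x)\sin(N x_1)$ gives $\|f_N\|_{L^\infty(\X)} \leq \|\psi\|_{L^\infty(\X)}$ while $\|f_N\|_{W^\beta_p(\X)}$ grows like $N^\beta$, which in turn forces $\|\tilde{f}_N\|_{W^\beta_p(\R^d)} \to \infty$ for any valid extension (by the trivial restriction inequality preceding Proposition A.1). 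I therefore read the statement as a typographical substitution of $L^\infty(\X)$ for $W^\beta_p(\X)$ on the right-hand side of the final estimate, and the above argument establishes the natural bound that Stein's theorem directly supplies.

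The main obstacle here is conceptual rather than computational: one must ensure that a single $\tilde{f}$ works for both estimates at once, since an arbitrary $W^\beta_p$-extension need not be an $L^\infty$-extension and vice versa. This is precisely the content built into Proposition A.1 as stated in the preliminaries — the Stein extension is a \emph{total} extension, giving one function that is simultaneously controlled in every Sobolev class — so no extra construction (mollification, truncation, gluing) is required beyond invoking it twice with the two relevant pairs of exponents.
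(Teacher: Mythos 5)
Your proof is correct and follows essentially the same route as the paper: the corollary is presented as an immediate consequence of the Stein total extension theorem (\cref{prop:extension-sobolev}), whose single extension $\tilde f$ is controlled simultaneously in every $W^{\beta'}_{p'}(\R^d)$, and instantiating that bound at $(\beta',p')=(0,\infty)$ and at $(\beta,p)$ is exactly the intended argument. You are also right that the last estimate should read $\|\tilde f\|_{W^\beta_p(\R^d)} \leq C'\|f\|_{W^\beta_p(\X)}$; the $L^\infty(\X)$ norm on the right-hand side is a typographical slip, consistent with how the corollary is invoked later (e.g.\ in the proofs of \cref{thm:approximation,thm:learning}, where constants depend on both $\|f_i\|_{W^\beta_2}$ and $\|f_i\|_{L^\infty}$), and your bump-times-oscillation counterexample correctly shows the literal statement cannot hold.
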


\bp\label{prop:inclusion-Cb-in-Wb-Linfy}
Let $m \in \N$. Let $\X$ be an open bounded set with Lipschitz boundary. Let $f$ be a function that is $m$ times differentiable on $\overline{\X}$, the closure of $\X$. Then there exists a function $\tilde{f} \in W^m_p(\X) \cap L^\infty(\X)$ for any $p \in [1,\infty]$, such that $\tilde{f} = f$ on $\X$.
\ep
\begin{proof}
A function $f$ that is $m$-times differentiable on the closure of $X$ belongs also to $W^{m}_\infty(\X)$ since each derivative up to order $m$ is continuous and the set $\X$ is bounded. Then $f$ satisfies also $f \in W^m_p(\X) \cap L^\infty(\X)$ since $W^m_\infty(\X) \subset L^\infty(\X)$ and $W^m_\infty(\X) \subset W^\beta_p(\X)$, by construction, for bounded $\X$ and any $p \in [1,\infty]$.
\end{proof}

The following proposition provides a useful characterization of the space $W^\beta_2(\R^d)$
\begin{proposition}[Characterization of the Sobolev space $W^k_2(\R^d)$, \cite{wendland2004scattered}]\label{prop:sobolev}
Let $k \in \N$. The norm of the Sobolev space $\|\cdot\|_{W^k_2(\R^d)}$ is equivalent to the following norm
\eqals{
\|f\|'^{\,2}_{W^k_2(\R^d)} = \int_{\R^d} ~|{\cal F}[f](\omega)|^2 ~(1+\|\omega\|^2)^{k} ~d\omega, \quad \forall f \in L^2(\R^d)
}
and satisfies
\eqal{\label{eq:sobolev-norm-char}
\tfrac{1}{(2\pi)^{2k}} \|f\|^2_{W^k_2(\R^d)} \leq \|f\|'_{W^k_2(\R^d)}~\leq~ 2^{2k} \|f\|^2_{W^k_2(\R^d)}, \quad \forall f \in L^2(\R^d)
}
Moreover, when $k > d/2$, then $W^k_2(\R^d)$ is a reproducing kernel Hilbert space.
\end{proposition}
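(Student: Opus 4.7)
The strategy is to translate the Sobolev norm into a Fourier-space integral, reduce the claim to a pointwise comparison of polynomial multipliers, and then read off the RKHS conclusion from the translation-invariant kernel framework introduced just before the proposition.

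First I would apply Plancherel's identity (\cref{prop:fourier:plancherel}) and the differentiation rule (\cref{prop:fourier:derivative}) to each term $\|D^\alpha f\|^2_{L^2(\R^d)}$ appearing in \cref{eq:norm-sobolev-derivative}, obtaining $\|D^\alpha f\|^2_{L^2(\R^d)} = (2\pi)^{2|\alpha|}\int_{\R^d}|\omega^\alpha|^2|{\cal F}[f](\omega)|^2 d\omega$. Summing over $|\alpha|\leq k$ gives the compact form
\eqals{
\|f\|^2_{W^k_2(\R^d)} ~=~ \int_{\R^d} P(\omega)\,|{\cal F}[f](\omega)|^2 d\omega, \qquad P(\omega) ~=~ \sum_{|\alpha|\leq k}(2\pi)^{2|\alpha|}\omega^{2\alpha},
}
while by definition $\|f\|'^{\,2}_{W^k_2(\R^d)} = \int_{\R^d}(1+\|\omega\|^2)^k\,|{\cal F}[f](\omega)|^2 d\omega$. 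The equivalence of norms in \cref{eq:sobolev-norm-char} therefore reduces to a pointwise inequality between the two multipliers $P(\omega)$ and $(1+\|\omega\|^2)^k$.

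For the pointwise comparison I would expand $(1+\|\omega\|^2)^k$ via the multinomial theorem as $\sum_{|\alpha|\leq k}\tfrac{k!}{(k-|\alpha|)!\alpha!}\omega^{2\alpha}$, so that both multipliers are nonnegative linear combinations of the same monomials $\omega^{2\alpha}$. The lower bound $P(\omega) \leq (2\pi)^{2k}(1+\|\omega\|^2)^k$ is immediate, since the coefficient $(2\pi)^{2|\alpha|}$ of $\omega^{2\alpha}$ in $P$ is bounded by $(2\pi)^{2k}$, and each multinomial coefficient in the expansion of $(1+\|\omega\|^2)^k$ is at least $1$. The upper bound is the more delicate direction: the cleanest route is to use the elementary inequality $(1+t)^k \leq 2^k(1+t^k)$ with $t=\|\omega\|^2$, and then bound $\|\omega\|^{2k}$ by $P(\omega)/(2\pi)^{2k}$ after splitting the product $\|\omega\|^{2k} = (\sum_i \omega_i^2)^k$ via the power-mean/Jensen inequality and recognizing each $\omega_i^{2k}$ as the monomial $\omega^{2\alpha}$ for $\alpha = k e_i$, which appears in $P$ with coefficient $(2\pi)^{2k}$. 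Combining the two halves yields a two-sided estimate of the form \cref{eq:sobolev-norm-char}.

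For the final claim, when $k > d/2$ the nonnegative function $v_k(\omega) := (1+\|\omega\|^2)^{-k}$ lies in $L^1(\R^d)$, by a standard polar-coordinate computation. Interpreting the equivalent norm $\|f\|'^{\,2}_{W^k_2(\R^d)}$ through \cref{eq:tr-inv-rkhs-def}, this identifies the space with the RKHS of the translation-invariant kernel whose inverse Fourier transform has spectral density $v_k$. The translation-invariant-kernel framework immediately then gives that $W^k_2(\R^d)$ is a separable RKHS (with reproducing property coming from the machinery already recalled), concluding the proposition. The main technical obstacle is pinning down an explicit constant in the upper bound $\|f\|'^{\,2} \leq C_{k,d}\|f\|^2_{W^k_2(\R^d)}$; a sharp value like $2^{2k}$ requires care, and a dimension-dependent factor may be needed in general, but this does not affect the norm-equivalence or the RKHS conclusion.
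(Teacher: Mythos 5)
Your proposal is correct and follows essentially the same route as the paper: Plancherel plus the Fourier-derivative rule to turn the Sobolev norm into a weighted Fourier integral, a multinomial expansion to compare the resulting polynomial multiplier with $(1+\|\omega\|^2)^k$ pointwise, and the translation-invariant-kernel criterion (integrability of $(1+\|\omega\|^2)^{-k}$ for $k>d/2$) for the RKHS claim. Your caveat about a possibly dimension-dependent constant in the upper bound is well taken — indeed the paper's own bound $\binom{t}{\alpha}\leq 2^t$ on multinomial coefficients fails for $d\geq 4$, so your Jensen-based factor $d^{k-1}$ is the more honest estimate, and as you note this does not affect the norm equivalence.
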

\begin{proof}
Consider first the seminorm $|g|^2_{W^t_2(\R^d)} = \sum_{|\alpha| \leq t} \|D^\alpha g\|^2_{L^2(\R^d)}$. We have that $\|g\|^2_{W^k_2(\R^d)} = \sum_{t=0}^k |g|^2_{W^t_2(\R^d)}$. Now let $0 \leq t \leq k$. By using the properties of the Fourier transform (in particular, the Plancherel theorem and the transform of a distributional derivative \cref{prop:fourier}) we have that 
$$
|g|^2_{W^t_2(\R^d)} = \sum_{|\alpha| = t} \|D^\alpha g\|^2_{L^2(\R^d)}  = \sum_{|\alpha| = t} \|(2\pi i)^\alpha \omega^\alpha {\cal F}[g](\omega)\|^2_{L^2(\R^d)} = \int \sum_{|\alpha| = t} (2\pi \omega)^{2\alpha} |{\cal F}[g](\omega)|^2 d\omega.
$$
Now note that, by the multinomial theorem,
$\|2\pi \omega\|^{2t} = (2\pi \omega^2_1 + \dots +  2\pi \omega^2_d)^k = \sum_{|\alpha|=k} \binom{k}{\alpha} (2\pi\omega)^\alpha$. Since $1 \leq \binom{t}{\alpha} \leq 2^t$ for any $\alpha \in \N_0^d, |\alpha| = t$, then $2^{-t}\|2\pi \omega\|^{2t} \leq \sum_{|\alpha| = t} (2\pi \omega)^{2\alpha} \leq \|2\pi \omega\|^{2t}$, so 
\eqal{\label{eq:seminorm-sobolev}
|g|^2_{W^t_2(\R^d)} ~\leq~  (2\pi)^{2t} \int \|\omega\|^{2t}\, |{\cal F}[g](\omega)|^2 d\omega ~\leq~ 2^t\, |g|^2_{W^t_2(\R^d)}.
}
Since, $(1+\|\omega\|^2)^k = \sum_{t=0}^k \binom{k}{t} \|\omega\|^{2k}$ and so $\sum_{t=0}^k \|\omega\|^{2t} \leq (1+\|\omega\|^2)^k \leq 2^{k} \sum_{t=0}^k \|\omega\|^{2t}$, then
$$
\|g\|_{W^k_2(\R^d)} = \sum_{t=0}^k |g|^2_{W^t_2(\R^d)}  \leq (2\pi)^{2k} \sum_{t=0}^k \int \|\omega\|^{2t}\, |{\cal F}[g](\omega)|^2 d\omega \leq (2\pi)^{2k} \int (1+\|\omega\|^2)^{k} |{\cal F}[g](\omega)|^2,
$$
moreover
$$
\int (1+\|\omega\|^2)^{k} |{\cal F}[g](\omega)|^2 \leq 2^k \sum_{t=0}^k (2\pi)^{2t} \int \|\omega\|^{2t} |{\cal F}[g](\omega)|^2 \leq 2^{2k} \sum_{t=0}^k |g|^2_{W^k_2(\R^d)} = 2^{2k} \|g\|_{W^k_2(\R^d)}. 
$$
To conclude, we recall that when $k > d/2$ the space $W^k_2(\R^d)$ endowed with the equivalent norm $\|\cdot\|'_{W^k_2(\R^d)}$ is a reproducing kernel Hilbert space \cite{wendland2004scattered}.
\end{proof}

\section{Useful linear operators in RKHS}\label{sec:operators-definition}

Consider the space ${\cal G}_\eta = \hh_\eta \otimes \hh_\eta = \{v \otimes v' ~|~ v,v' \in \hh_\eta\}$ endowed with the inner product $\scal{u \otimes u'}{v \otimes v'}_{{\cal G}_\eta} = \scal{u }{v}_{\hh_\eta}\scal{u'}{v'}_{\hh_\eta}$ for any $u,u',v,v' \in \hh_\eta$. Denote by $\vec$ the unitary map that maps the Hilbert-Schmidt operators on $\hh_\eta$ in vectors in ${\cal G}_\eta$. In particular, for any $u,v \in \hh_\eta$, we have $\vec(uv^\top) = v \otimes u$, moreover for any $\mm,\mm' :\hh_\eta \to \hh_\eta$ with finite Hilbert-Schmidt norm
\eqal{
\scal{\vec(\mm)}{\vec(\mm')}_{{\cal G}_\eta} = \tr(\mm^* \mm'), \quad \scal{\vec(\mm)}{v \otimes u}_{{\cal G}_\eta} = v^\top \mm u.
}
Now denote by $\psi_\eta$ the feature map $\psi_\eta(x) = \phi_\eta(x) \otimes \phi_\eta(x)$ for any $x \in \R^d$. 
We define the operator $Q \in \psd({\cal G}_\eta)$ and the vectors $\hat{v}, v \in {\cal G}_\eta$ as follows
\eqal{
Q = \int_\X \psi_\eta(x)\psi_\eta(x)^\top dx, \quad \hat{v} = \frac{1}{n} \sum_{i=1}^n \psi_\eta(x_i), \quad v = \int_\X \psi_\eta(x) p(x) dx.
}
Define the operator $S:{\cal G}_\eta \to L^2(\X)$ as
\eqal{
S f &= \scal{\psi_\eta(\cdot)}{f}_{{\cal G}_\eta} \in L^2(\X), \quad \forall f \in {\cal G}_\eta\\
S^* \alpha &= \int \alpha(x) \psi_\eta(x) dx \in {\cal G}_\eta, \quad \forall \alpha \in L^2(\X).
}
Note, in particular, that $Q$ and $v$ are characterized by
\eqal{
Q  = S^*S, \quad v = S^*p.
}
Given $\tilde{x}_1,\dots, \tilde{x}_m \in \R^d$ define the operator $\tilde{Z}:\hh_\eta \to \R^m$ as  $\tilde{Z} = (\phi_\eta(\tilde{x}_1)^\top, \dots, \phi_\eta(\tilde{x}_m)^\top)$, in particular we have
\eqal{\label{eq:tildeZ-definition}
\begin{split}
\tilde{Z}u &= (\phi_\eta(\tilde{x}_1)^\top u, \dots, \phi_\eta(\tilde{x}_m)^\top u), \forall u \in \hh_\eta\\
\tilde{Z}^*\alpha &= \sum_{i=1}^m \phi_\eta(\tilde{x}_i) \alpha_i, \forall \alpha \in \R^m,
\end{split}
}
In particular, note that for any $A \in \R^{m \times m}$ 
\eqal{\label{eq:ZAZ}
\tilde{Z}^*A\tilde{Z} = \sum_{i,j=1}^m A_{i,j} \phi_\eta(\tilde{x}_i) \phi_\eta(\tilde{x}_j)^\top, \quad
\tilde{Z}\tilde{Z}^* = K_{\tilde{X},\tilde{X}, \eta}.
}
Given $\tilde{Z}:\hh_\eta \to \R^m$, define the associated projection operator $\tilde{P}: \hh_\eta \to \hh_\eta$ on the range of the adjoint $\tilde{Z}^*$. In particular, note that 
\eqal{\label{eq:tildeP-definition}
\tilde{P} = \tilde{Z}^* K_{\tilde{X},\tilde{X},\eta}^{-1} \tilde{Z}
}
indeed, since $\tilde{Z}\tilde{Z}^* = K_{\tilde{X},\tilde{X},\eta}$ and it is invertible for any $\eta \in \R^d_{++}$, then 
\eqals{
\tilde{P}^2 = \tilde{Z}^* K_{\tilde{X},\tilde{X},\eta}^{-1} \tilde{Z}\tilde{Z}^* K_{\tilde{X},\tilde{X},\eta}^{-1} \tilde{Z}^* = \tilde{Z}^* K_{\tilde{X},\tilde{X},\eta}^{-1} K_{\tilde{X},\tilde{X},\eta} K_{\tilde{X},\tilde{X},\eta}^{-1} \tilde{Z} = \tilde{Z}^* K_{\tilde{X},\tilde{X},\eta}^{-1}\tilde{Z} = \tilde{P}.
}
and
\eqals{
\tilde{P}\tilde{Z}^* =  \tilde{Z}^* K_{\tilde{X},\tilde{X},\eta}^{-1}\tilde{Z}\tilde{Z}^* = \tilde{Z}^* K_{\tilde{X},\tilde{X},\eta}^{-1} K_{\tilde{X},\tilde{X},\eta} = \tilde{Z}^*.
}
and analogously $\tilde{Z}\tilde{P} = \tilde{Z}$. This implies also that $\tilde{P}\phi_\eta(\tilde{x}_i) = \phi_\eta(\tilde{x}_i)$ for any $i=1,\dots,m$.

\section{Compression of a PSD model}
\label{app:compression}
Let $\X \subset \R^d$ be an open set with Lipschitz boundary, contained in the hypercube $[-R, R]^d$ with $R > 0$. Given  $\tilde{x}_1,\dots,\tilde{x}_m \in \X$ be $m$ points in $[-R, R]^d$. Define the base point matrix $\tilde{X} \in \R^{m\times d}$ to be the matrix whose $j$-th row is the point $\tilde{x}_j$. The following result holds. We introduce the so called {\em fill distance} \cite{wendland2004scattered}
\eqal{\label{eq:fill-distance}
h = \max_{x \in [-R, R]^d} \min_{z \in \tilde{X}} \|x-z\|,
}
In the next lemma we specialize Theorem 4.5 of \cite{rieger2010sampling}, to obtain explicit constants in terms of $R$ and of our $\eta$. In particular, we identify the scale parameter $\sigma = \min(R, 1/\sqrt{\max_i \eta_i})$. This is interesting since it shows the effect of the precision $\eta$ of the kernel (if it was a Gaussian probability, it variance would scale exactly as $1/\sqrt{\eta}$).
\begin{lemma}[Norm of functions with scattered zeros]\label{lm:scattered-zeros}
Let $T = (-R,R)^d$ and $\eta \in \R^d_{++}$. Let $u \in \hh_\eta$ satisfying $u(\tilde{x}_1) = \dots = u(\tilde{x}_m) = 0$. There exists three constants $c, C, C'$ depending only on $d$ (and in particular, independent from $R, \eta, u, \tilde{x}_1, \dots, \tilde{x}_m$), such that, when $h \leq \sigma/{C'}$, then,
\eqal{
\|u\|_{L^\infty(T)} ~~\leq~~ C q_\eta ~ e^{- \frac{c\,\sigma}{h} \log \frac{c\,\sigma}{h}} ~\|u\|_{\hh_\eta},
}
with $q_\eta = \det(\frac{1}{\eta_+}\diag(\eta))^{-1/4}$ and $\sigma = \min(R, \frac{1}{\sqrt{\eta_+}})$ and $\eta_+ = \max_{i=1,\dots, d} \eta_i$.
\end{lemma}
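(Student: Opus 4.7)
The statement specializes Theorem~4.5 of \cite{rieger2010sampling}, which gives an exponentially-small $L^\infty$ bound for functions vanishing on a scattered set in an isotropic Gaussian RKHS over a domain with a uniform cone condition. The plan is to choose a single rescaling that simultaneously normalizes the domain size and the kernel width, reducing everything to the setting of that theorem while tracking how $\|u\|_{L^\infty(T)}$, $\|u\|_{\hh_\eta}$, the fill distance $h$, and the zero set transform.

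First I would apply the isotropic dilation $y = x/\sigma$ and set $\tilde u(y) = u(\sigma y)$. The choice of $\sigma$ has two effects: the rescaled domain $T' = (-R/\sigma,R/\sigma)^d$ has side length at least $2$ (so it satisfies a uniform cone condition with $d$-dependent constants), and the rescaled precision $\sigma^2\eta$ has maximum component $\sigma^2 \eta_+ \le 1$ (so the rescaled kernel is no narrower than the unit-precision Gaussian). Using the Fourier-side definition of the Gaussian RKHS norm from \cref{ex:gaussian-rkhs} together with the scaling property in \cref{prop:fourier}, one checks that this dilation is an isometry $\hh_\eta \to \hh_{\sigma^2\eta}$, that $\|\tilde u\|_{L^\infty(T')} = \|u\|_{L^\infty(T)}$, and that the zero set becomes $\{\tilde x_j/\sigma\} \subset T'$ with fill distance exactly $h/\sigma$.

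Second, I would pass from the (possibly anisotropic) space $\hh_{\sigma^2\eta}$ to the isotropic Gaussian RKHS $\hh_{\sigma^2\eta_+}$ with precision $\sigma^2\eta_+$ in every coordinate. A direct comparison of the two Fourier-side integral expressions for the norm, using $\sigma^2\eta_i \le \sigma^2\eta_+$ in the exponential weight, shows that the inclusion $\hh_{\sigma^2\eta} \hookrightarrow \hh_{\sigma^2\eta_+}$ has operator norm at most $q_\eta = \det(\tfrac{1}{\eta_+}\diag(\eta))^{-1/4}$, giving $\|\tilde u\|_{\hh_{\sigma^2\eta_+}} \le q_\eta\, \|u\|_{\hh_\eta}$. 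All of the anisotropy of $\eta$ is absorbed at this step, and $q_\eta$ is independent of $\sigma$.

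Finally, I would apply Theorem~4.5 of \cite{rieger2010sampling} to $\tilde u \in \hh_{\sigma^2\eta_+}$ on $T'$: its hypotheses hold with $d$-dependent constants (thanks to the normalizations of both the domain and the kernel width achieved in the first step), and its conclusion yields, whenever the rescaled fill distance $h/\sigma$ is below a $d$-dependent threshold (equivalently $h \le \sigma/C'$), a bound of the form
\[
\|\tilde u\|_{L^\infty(T')} \;\le\; C\, e^{-\,c\,(\sigma/h)\,\log(c\,\sigma/h)}\, \|\tilde u\|_{\hh_{\sigma^2\eta_+}}.
\]
Chaining the three estimates above then proves the lemma. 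The main obstacle is bookkeeping: verifying that the constants produced by Theorem~4.5 of \cite{rieger2010sampling} are genuinely $d$-dependent only, so that the $\sigma$- and $\eta$-dependence of the final bound is entirely captured by the explicit factors $\sigma/h$ and $q_\eta$; this is where the two normalizations $\sigma^2\eta_+\le 1$ and $2R/\sigma\ge 2$ pay off, by uniformly controlling the kernel width and the cone condition.
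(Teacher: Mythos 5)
Your proposal is correct in outline but follows a genuinely different route from the paper's proof. You normalize first (dilation $x\mapsto \sigma x$, which is indeed an isometry $\hh_\eta\to\hh_{\sigma^2\eta}$ preserving $L^\infty$ and scaling the fill distance to $h/\sigma$), absorb the anisotropy via the embedding $\hh_{\sigma^2\eta}\hookrightarrow\hh_{\sigma^2\eta_+{\bf 1}_d}$ with norm $q_\eta$ (your computation of this constant from the Fourier-side norms is exactly right), and then invoke Theorem~4.5 of \cite{rieger2010sampling} as a black box on a normalized cube with unit-scale kernel. The paper instead does not use Theorem~4.5 at all: it starts from the \emph{Sobolev-space} sampling inequality (Theorem~4.3 of \cite{rieger2010sampling}), $\|u\|_{L^\infty(T)}\leq \tfrac{B_d^k k^{k-d/2}}{k!}h^{k-d/2}|u|_{W^k_2(T)}$, proves the explicit embedding $\|f\|_{W^k_2(\R^d)}\leq \|f\|_{\hh_\eta}\,c_\eta^{1/2}(4\eta_+)^{k/2}\sqrt{k!}$ by a Fourier computation, and then optimizes over the Sobolev order $k\approx c\sigma/h$ to produce the rate $e^{-\frac{c\sigma}{h}\log\frac{c\sigma}{h}}$ and the factors $q_\eta$, $\sigma$ explicitly. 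In effect the paper re-derives the Gaussian case of Theorem~4.5 with fully tracked constants, which is precisely why it takes that longer path. What your approach buys is brevity and a clean separation of the $\eta$- and $R$-dependence into two explicit normalization steps; what it costs is that the entire burden of constant-tracking is pushed onto the fine print of Theorem~4.5. The one point you must actually verify (and which you correctly flag as the "main obstacle") is that the constants there depend only on the interior cone condition parameters and $d$, not on the diameter of the domain, since your rescaled cube $(-R/\sigma,R/\sigma)^d$ is only bounded \emph{below} in size and can be arbitrarily large when $\eta_+$ is large. This is true for the local polynomial-reproduction arguments underlying such sampling inequalities, but it is the step where your proof is not yet self-contained, and it is exactly the step the paper's derivation makes explicit.
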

\begin{proof}
By Theorem 4.3 of \cite{rieger2010sampling} there exists two constants $B_d, B'_d$ depending only on $d$ (and independent from $R, \tilde{x}_j$), such that for any $k \in \N, k > d/2 + 1$ and $u \in W^k_2(T)$ satisfying $u(\tilde{x}_1) = \dots = u(\tilde{x}_m) = 0$, the following holds,
\eqal{\label{eq:scattered-sobolev}
\|u\|_{L^\infty(T)} ~~\leq~~ \tfrac{B_{d}^k k^{k-d/2}}{k!}~h^{k-d/2}~|u|_{W^k_2(T)},
}
when $k h \leq R/B'_d$. Here the seminorm $|u|_{W^k_2(T)}$, by using the multinomial notation recalled in \cref{sect:notation}, corresponds to $|u|^2_{W^k_2(T)} = \sum_{|\alpha| = k} \|D^\alpha u\|^2_{L^2(T)}$. Our goal is to apply the result above to $f \in \hh_\eta$. First we recall that $\hh_\eta \subset W^k_2(\R^d)$ for any $k$ \cite{rieger2010sampling}. Then, since $T \subset \R^d$, we have
$|f|^2_{W^k_2(T)} = \sum_{|\alpha| = k} \|D^\alpha f\|^2_{L^2(T)} \leq \sum_{|\alpha| = k} \|D^\alpha f\|^2_{L^2(\R^d)} = |f|^2_{W^k_2(\R^d)}$. Then, using \cref{eq:seminorm-sobolev} we have $|f|^2_{W^k_2(\R^d)} \leq (2\pi)^{2k} \int \|\omega\|^{2k} |{\cal F}[f](\omega)|^2 d\omega$. Now, denote by $D$ the matrix $D = \diag(\eta)$ and $c_\eta = \pi^{d/2} \det(D)^{-1/2}$. By the characterization of the norm $\|\cdot\|_{\hh_\eta}$ in terms of the Fourier transform reported in \cref{ex:gaussian-rkhs}, we have
\eqals{
\|f\|^2_{W^k_2(\R^d)} &\leq~ (2\pi)^{2k}\int \|\omega\|^{2k} |{\cal F}[f](\omega)|^2  d\omega \\
& =~ \int \|\omega\|^{2k} c_\eta(2\pi)^{2k}  e^{-\pi^2 \|D^{-1/2} \omega\|^2} \, \frac{1}{c_\eta}e^{\pi^2 \|D^{-1/2} \omega\|^2}|{\cal F}[f](\omega)|^2 d\omega\\
& \leq~ c_\eta (2\pi)^{2k} \sup_{t \in \R^d} \|t\|^{2k} e^{-\pi^2 \|D^{-1/2} t\|^2} ~ \int \frac{1}{c_\eta} e^{\pi^2 \|D^{-1/2} \omega\|^2} |{\cal F}[f](\omega)|^2   d\omega\\
& = \|f\|^2_{\hh_\eta}  c_\eta (2\pi)^{2k} \sup_{t \in \R^d} \|t\|^{2k} e^{-\pi^2 \|D^{-1/2} t\|^2}.
}
Now, since $\sup_{z \in \R^d} \|z\|^{2k} e^{-\|z\|^2} = \sup_{r\geq 0} r^{2k} e^{-r^2} = k^k e^{-k} \leq k!$ and $\|D\| = \max_{i} \eta_i = \eta_+$,
\eqals{
\sup_{t \in \R^d} \|t\|^{2k} e^{-\pi^2 \|D^{-\frac{1}{2}} t\|^2} = \sup_{z \in \R^d} \|\tfrac{1}{\pi}D^{\frac{1}{2}}z\|^{2k} e^{-\|z\|^2} & \leq \tfrac{\|D\|^k}{\pi^{2k}} \sup_{z \in \R^d} \|z\|^{2k} e^{-\|z\|^2} = \tfrac{\eta_+^k k!}{\pi^{2k}}.
}
Then,
\eqal{\label{eq:Wk-norm-Heta}
\|f\|_{W^k_2(\R^d)} ~\leq~ \|f\|_{\hh_\eta} \,c_\eta^{1/2}(4\eta_+)^{k/2}\,\sqrt{k!} .
}
By plugging the bound above in \cref{eq:scattered-sobolev}, we obtain that when $k, h_{\tilde{X},R}$ satisfy $k h_{\tilde{X},R} \leq R B'_d$ then, 
\eqals{
\|f\|_{L^\infty(T)} &~\leq~ C \frac{(C_2 k h)^{k-d/2}}{\sqrt{k!}}~ \|f\|_{\hh_\eta},
}
where $C_2 = 2\sqrt{\eta_+} B_d$, and $C = (4B^2_d \pi)^{d/4} \det(\eta/\eta_+)^{-1/4}$.
Let now $C_3 = 1/\max(B'_d, 2 B_d)$. Assume that $h \leq \frac{C_3}{2+d}\min(R, 1/\sqrt{\eta_+})$ and set $k = \lfloor s \rfloor$ and $s = \frac{C_3}{h}\min(R, 1/\sqrt{\eta_+})$. 
Note first, that with this choice of $h$ and $k$ we satisfy $h k \leq R/B'_d$ so we can apply Theorem 4.3 of \cite{rieger2010sampling}. Moreover, by construction $d/2 + 1 \leq \frac{s}{2} \leq s-1 \leq \lfloor s \rfloor \leq s$. Then $C_2 \lfloor s \rfloor h \leq 1$ and $\lfloor s \rfloor-d/2 \geq 0$, so $(C_2 h \lfloor s \rfloor)^{\lfloor s \rfloor - d/2} \leq 1$. Moreover $\frac{1}{\sqrt{k!}} \leq e^{-\frac{k}{2}\log \frac{k}{2}}$ so we have
\eqals{
\|f\|_{L^\infty(T)} &~\leq~ C (C_2 h \lfloor s \rfloor)^{\lfloor s \rfloor - d/2} e^{- \frac{1}{2}\lfloor s \rfloor \log \frac{1}{2}\lfloor s \rfloor} ~ \|f\|_{\hh_\eta}\\
& ~\leq~ C e^{- \frac{s-1}{2} \log \frac{s-1}{2}} ~ \|f\|_{\hh_\eta}  ~\leq~ C e^{- \frac{s}{4} \log \frac{s}{4}} ~ \|f\|_{\hh_\eta}.
}
The final result is obtained by writing $s/4 = c \sigma /h$ with $\sigma = \min(R, 1/\sqrt{\eta_+})$ and $c = C_3/4$ and by writing the assumption on $h$ as $h \leq \sigma/C'$ with $C' = (2+d)/C_3$.
\end{proof}

\begin{lemma}[Lemma 3, page 28 \cite{pagliana2020interpolation}]\label{lm:sup-phi-to-Linfty}
Let $\X \subset \R^d$ with non-zero volume. Let $\hh$ be a reproducing kernel Hilbert space on $\X$, associated to a continuous uniformly bounded feature map $\phi:\X \to \hh$. Let $A:\hh \to \hh$ be a bounded linear operator. Then, 
\eqals{
\sup_{x \in \X} \|A\phi(x)\|_\hh \leq \sup_{\|f\|_{\hh}\leq 1} \|A^*f\|_{C(\X)}.
}
In particular, if $\X \subset \R^d$ is a non-empty open set, then $\sup_{x \in \X} \|A\phi(x)\|_\hh \leq \sup_{\|f\|_{\hh} \leq 1} \|A^*f\|_{L^\infty(\X)}$.
\end{lemma}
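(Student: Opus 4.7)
The plan is to chase the supremum through the reproducing property and then swap the order of the two suprema. Fix $x\in\X$. Since $\hh$ is a Hilbert space, the norm of any vector $v\in\hh$ can be written as $\|v\|_\hh=\sup_{\|g\|_\hh\leq1}\langle g,v\rangle_\hh$. Applying this to $v=A\phi(x)$ and moving $A$ to its adjoint yields
\[
\|A\phi(x)\|_\hh \;=\; \sup_{\|g\|_\hh\leq 1}\langle g, A\phi(x)\rangle_\hh \;=\; \sup_{\|g\|_\hh\leq 1}\langle A^{*}g,\phi(x)\rangle_\hh \;=\; \sup_{\|g\|_\hh\leq 1}(A^{*}g)(x),
\]
where the last equality is the reproducing property $f(x)=\langle f,\phi(x)\rangle_\hh$ applied to $f=A^{*}g\in\hh$ (well defined because $A^{*}$ is bounded on $\hh$).

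Next I would take the supremum over $x\in\X$ on both sides and exchange the two suprema (this is just the identity $\sup_x\sup_g=\sup_g\sup_x$ for any function of $(x,g)$). The unit ball of $\hh$ is symmetric under $g\mapsto -g$, so $\sup_{\|g\|_\hh\leq1}(A^{*}g)(x)=\sup_{\|g\|_\hh\leq1}|(A^{*}g)(x)|$. Combining these two observations,
\[
\sup_{x\in\X}\|A\phi(x)\|_\hh \;=\; \sup_{\|g\|_\hh\leq1}\sup_{x\in\X}|(A^{*}g)(x)|.
\]
To pass from the right-hand side to $\sup_{\|g\|_\hh\leq1}\|A^{*}g\|_{C(\X)}$ I need that $A^{*}g$ is actually a continuous bounded function on $\X$. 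Continuity follows because the feature map $\phi$ is continuous, so the kernel $k(x,y)=\langle\phi(x),\phi(y)\rangle_\hh$ is continuous, hence every function $h\in\hh$ is continuous (it is a uniform limit of finite linear combinations of $k(\cdot,y_i)$ on bounded sets, or directly from $|h(x)-h(x')|\leq\|h\|_\hh\|\phi(x)-\phi(x')\|_\hh$). Boundedness follows from the uniform bound on $\phi$: $|h(x)|=|\langle h,\phi(x)\rangle_\hh|\leq \|h\|_\hh\sup_{y\in\X}\|\phi(y)\|_\hh$. Applying this to $h=A^{*}g$ yields the first claim
\[
\sup_{x\in\X}\|A\phi(x)\|_\hh \;\leq\; \sup_{\|g\|_\hh\leq1}\|A^{*}g\|_{C(\X)}.
\]

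For the ``in particular'' statement, assume $\X\subset\R^d$ is a non-empty open set. Then every continuous function on $\X$ has its essential supremum (with respect to Lebesgue measure) equal to its pointwise supremum: if $|h(x_0)|>\|h\|_{L^\infty(\X)}$ for some $x_0\in\X$, continuity would give a neighbourhood of $x_0$ on which $|h|>\|h\|_{L^\infty(\X)}$, contradicting the definition of essential supremum since open sets in $\R^d$ have positive Lebesgue measure. Hence $\|A^{*}g\|_{C(\X)}=\|A^{*}g\|_{L^\infty(\X)}$, which gives the second inequality.

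The only mildly delicate point is ensuring that $A^{*}g$ is genuinely in $\hh$ and thus inherits the continuity/boundedness properties of RKHS elements; this is immediate from boundedness of $A$. Everything else is a routine interchange of suprema together with the definition of the RKHS norm via the reproducing property, so I do not anticipate any real obstacle.
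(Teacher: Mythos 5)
Your proof is correct and follows essentially the same route as the paper: the variational characterization of the Hilbert norm, passing to the adjoint, the reproducing property, and a swap of suprema, followed by the observation that elements of $\hh$ are continuous and bounded so the $C(\X)$ norm can be replaced by the $L^\infty(\X)$ norm on an open set. Your treatment is just slightly more detailed (symmetry of the unit ball, the essential-sup argument), but there is no substantive difference from the paper's argument.
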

\begin{proof}
We recall the variational characterization of the norm $\|\cdot\|_\hh$ in terms of the inner product $\scal{\cdot}{\cdot}_\hh$ as $\|v\|_\hh = \sup_{\|f\| \leq 1} \scal{f}{v}_\hh$. We have the following
\eqals{
\sup_{x \in \X} \|A\phi(x)\|_\hh &= \sup_{x \in \X, \|f\|\leq 1} \scal{f}{A\phi(x)}_\hh \leq \sup_{x \in \X, \|f\|\leq 1} |\scal{A^*f}{\phi(x)}_\hh| \\
& = \sup_{\|f\|\leq 1} \sup_{x \in \X} |(A^*f)(x)| = \sup_{\|f\|\leq 1} \|A^*f\|_{C(\X)}.
}
Finally, note that when $\X \subset \R^d$ is a non-empty open set $\|A^*f\|_{C(\X)} = \|A^*f\|_{L^\infty(\X)}$,
since $A^*f \in \hh$ and all the functions in $\hh$ are continuous and bounded due to the continuity of $\phi$.
\end{proof}

\begin{theorem}[Approximation properties of the projection]\label{thm:(I-P)phi}
Let $R > 0, \eta \in \R^d_{++}, m \in \N$. Let $\X \subseteq T = (-R, R)^d$ be a non-empty open set and let $\tilde{x}_1,\dots,\tilde{x}_m$ be a set of distinct points. Let $h > 0$ be the fill distance associated to the points w.r.t $T$ (defined in \cref{eq:fill-distance}). Let $\tilde{P}:\hh_\eta \to \hh_\eta$ be the associated projection operator (see definition in \cref{eq:tildeP-definition}). There exists three constants $c, C, C'$, such that, when $h \leq \sigma/{C'}$,
\eqals{
\sup_{x \in \X} \|(I-\tilde{P})\phi_\eta(x)\|_{\hh_\eta} \leq C q_\eta ~ e^{- \frac{c\,\sigma}{h} \log \frac{c\,\sigma}{h}}.
}
Here $q_\eta = \det(\frac{1}{\eta_+}\diag(\eta))^{-1/4}$ and $\sigma = \min(R, \frac{1}{\sqrt{\eta_+}})$, $\eta_+ = \max_i \eta_i$. The constants $c, C', C''$ depend only on $d$ and, in particular, are independent from $R, \eta,\tilde{x}_1, \dots, \tilde{x}_m$.
\end{theorem}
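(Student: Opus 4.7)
The plan is to reduce the supremum over $x \in \X$ of $\|(I-\tilde{P})\phi_\eta(x)\|_{\hh_\eta}$ to a uniform bound on functions $u = (I-\tilde{P})f$ in $\hh_\eta$ that vanish at the points $\tilde{x}_1,\dots,\tilde{x}_m$, and then invoke the scattered-zeros estimate of \cref{lm:scattered-zeros}.

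First I would apply \cref{lm:sup-phi-to-Linfty} with the operator $A = I - \tilde{P}$. Since $\tilde{P}$ is an orthogonal projection it is self-adjoint, hence $(I-\tilde{P})^* = I - \tilde{P}$, and we obtain
\eqals{
\sup_{x \in \X} \|(I-\tilde{P})\phi_\eta(x)\|_{\hh_\eta} ~\leq~ \sup_{\|f\|_{\hh_\eta}\leq 1} \|(I-\tilde{P})f\|_{L^\infty(\X)}.
}
Next I would use the key observation that every $u := (I-\tilde{P})f$ vanishes at the base points. Indeed, as recalled in \cref{sec:operators-definition}, $\tilde{P}\phi_\eta(\tilde{x}_j) = \phi_\eta(\tilde{x}_j)$ for each $j=1,\dots,m$ because $\phi_\eta(\tilde{x}_j) \in \ran(\tilde{Z}^*)$. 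By the reproducing property,
\eqals{
u(\tilde{x}_j) = \scal{\phi_\eta(\tilde{x}_j)}{(I-\tilde{P})f}_{\hh_\eta} = \scal{(I-\tilde{P})\phi_\eta(\tilde{x}_j)}{f}_{\hh_\eta} = 0.
}

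Then I would bound $\|u\|_{\hh_\eta} \leq \|I-\tilde{P}\|\,\|f\|_{\hh_\eta} \leq 1$, since $I-\tilde{P}$ is an orthogonal projection. Because $\X \subseteq T = (-R,R)^d$, we have $\|u\|_{L^\infty(\X)} \leq \|u\|_{L^\infty(T)}$. Applying \cref{lm:scattered-zeros} to $u$ (which belongs to $\hh_\eta$, has $\|u\|_{\hh_\eta}\leq 1$, and vanishes at $\tilde{x}_1,\dots,\tilde{x}_m$) yields, under the hypothesis $h \leq \sigma/C'$,
\eqals{
\|u\|_{L^\infty(\X)} ~\leq~ \|u\|_{L^\infty(T)} ~\leq~ C\,q_\eta\, e^{-\frac{c\sigma}{h}\log\frac{c\sigma}{h}}\,\|u\|_{\hh_\eta} ~\leq~ C\,q_\eta\, e^{-\frac{c\sigma}{h}\log\frac{c\sigma}{h}},
}
with the same constants $c, C, C'$ as in \cref{lm:scattered-zeros} (which depend only on $d$). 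Taking the supremum over $\|f\|_{\hh_\eta}\leq 1$ and combining with the first inequality finishes the proof.

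There is no substantial obstacle beyond correctly identifying the two ingredients: the duality bound of \cref{lm:sup-phi-to-Linfty}, which converts a supremum of vectors $\|(I-\tilde{P})\phi_\eta(x)\|$ into a uniform bound on the functions $(I-\tilde{P})f$, and the observation (via self-adjointness of $\tilde{P}$ and the reproducing property) that these functions automatically vanish on the design points, which is exactly the hypothesis required to invoke the scattered-zeros lemma. All constants propagate unchanged from \cref{lm:scattered-zeros}.
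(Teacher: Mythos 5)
Your proof is correct and follows essentially the same route as the paper's: both arguments rest on the observation that $(I-\tilde{P})f$ vanishes at the base points (via self-adjointness of $\tilde P$ and the reproducing property), the scattered-zeros bound of \cref{lm:scattered-zeros} combined with $\|I-\tilde P\|\le 1$, and the duality estimate of \cref{lm:sup-phi-to-Linfty} with $A = I-\tilde P$. The only difference is the order in which the two lemmas are invoked, which is immaterial.
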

\begin{proof}
We first recall some basic properties of the projection operator $\tilde{P}:\hh_\eta \to \hh_\eta$ on the span of $\phi_\eta(\tilde{x}_1),\dots,\phi_\eta(\tilde{x}_m)$, defined in \cref{eq:tildeP-definition}. By construction $\tilde{P}\phi_\eta(\tilde{x}_i) = \phi_\eta(\tilde{x}_i)$ is of rank $m$ for any $i=1,\dots,m$. Now note that for any $f \in \hh_\eta$, the function $(\tilde{P}f)(\tilde{x}_i) = f(\tilde{x}_i)$, indeed, by the reproducing property of $\hh_\eta$
\eqals{
(\tilde{P}f)(\tilde{x}_i)  = \scal{\tilde{P}f}{\phi_\eta(\tilde{x}_i)}_{\hh_\eta} = \scal{f}{\tilde{P}\phi_\eta(\tilde{x}_i)}_{\hh_\eta} = \scal{f}{\phi_\eta(\tilde{x}_i)}_{\hh_\eta} = f(\tilde{x}_i).
}
Then $(f - \tilde{P}f)(\tilde{x}_i) = 0$ for any $i=1,\dots,m$. 
By \cref{lm:scattered-zeros}, we know that there exist three constants $c, C, C'$ depending only on $d$ such that when $h \leq \sigma/C'$ we have that the following holds
$\|f\|_{L^\infty(T)} \leq  C q_\eta ~ e^{- \frac{c\,\sigma}{h} \log \frac{c\,\sigma}{h}}$,
for any $f \in \hh_\eta$ such that $f(\tilde{x}_1) = \dots = f(\tilde{x}_m) = 0$. Since, for any $f \in \hh_\eta$, we have that $f - \tilde{P}f$ belongs to $\hh_\eta$ and satisfies such property, we can apply \cref{lm:scattered-zeros} with $u = (I-\tilde{P})f$, obtaining, under the same assumption on $h$,
\eqals{
\|(I - \tilde{P})f\|_{L^\infty(T)} \leq  C q_\eta ~ e^{- \frac{c\,\sigma}{h} \log \frac{c\,\sigma}{h}} \|f\|_{\hh_\eta}, \quad \forall f \in \hh_\eta,
}
where we used the fact that $\|(I - \tilde{P})f\|_{\hh_\eta} \leq \|I - \tilde{P}\|\|f\|_{\hh_\eta}$ and $\|I - \tilde{P}\| \leq 1$, since $P$ is a projection operator and so also $I-P$ satisfies this property. The final result is obtained by applying \cref{lm:sup-phi-to-Linfty} with $A = I - \tilde{P}$, from which we have
\eqal{\sup_{x \in \X} \|(I-\tilde{P})\phi(x)\|_\hh \leq \sup_{\|f\|\leq 1} \|(I-\tilde{P})f\|_{L^\infty(T)} &\leq \sup_{\|f\|\leq 1} C q_\eta ~ e^{- \frac{c\,\sigma}{h} \log \frac{c\,\sigma}{h}} \|f\|_{\hh_\eta}\\
& = C q_\eta ~ e^{- \frac{c\,\sigma}{h} \log \frac{c\,\sigma}{h}}.
}
\end{proof}

\begin{theorem}[Compression of a PSD model]\label{thm:compr-fill-dist}
Let $\eta \in \R^d_{++}$ and let $\mm \in \psd(\hh_\eta)$. Let $\X$ be an open bounded subset with Lipschitz boundary of the cube $[-R,R]^d$, $R > 0$. Let $\tilde{x}_1,\dots,\tilde{x}_m \in \X$ and $\tilde{X}$ be the base point matrix whose $j$-rows are the points $\tilde{x}_j$ with $j=1,\dots,m$. Consider the model $p = \pp{\cdot}{\mm, \phi_\eta}$ and the the compressed model $\tilde{p} = \pp{\cdot}{A_m, \tilde{X}, \eta}$ with
\eqals{
A_m ~=~ K_{\tilde{X},\tilde{X},\eta}^{-1} \,\tilde{Z} \mm \tilde{Z}^*\, K_{\tilde{X},\tilde{X},\eta}^{-1},
}
where $\tilde{Z}:\hh_\eta \to \R^m$ is defined in \cref{eq:tildeZ-definition} in terms of $\tilde{X}_m$. Let $h$ be the fill distance (defined in \cref{eq:fill-distance}) associated to the points $\tilde{x}_1,\dots, \tilde{x}_m$. The there exist three constants $c, C, C'$ depending only on $d$ such that, when $h \leq \sigma/C'$, with $\sigma = \min(R, 1/\sqrt{\eta_+}), \eta_+ = \max_{i=1,\dots,d} \eta_i, q_\eta = \det(\frac{1}{\eta_+}\diag(\eta))^{-1/4}$, then
\eqals{
|p(x) - \tilde{p}(x)| ~\leq~ 2C q_\eta \sqrt{\|\mm\| p(x)}\, e^{-\frac{c\,\sigma}{h}\log\frac{c\,\sigma}{h}}  \,+\, C^2 q_\eta^2\|\mm\|\,e^{-\frac{2c\,\sigma}{h}\log\frac{c\,\sigma}{h}}, \quad \forall x \in \X.
}
\end{theorem}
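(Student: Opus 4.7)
The plan is to reduce the theorem to the Nyström-type residual bound already established in \cref{thm:(I-P)phi}. First, I would rewrite the compressed model in operator form. Since $\tilde p(x) = (\tilde Z \phi_\eta(x))^\top A_m (\tilde Z \phi_\eta(x))$ with $A_m = K_{\tilde X,\tilde X,\eta}^{-1}\tilde Z \mm \tilde Z^* K_{\tilde X,\tilde X,\eta}^{-1}$, substituting and collapsing $\tilde Z^* K_{\tilde X,\tilde X,\eta}^{-1} \tilde Z = \tilde P$ gives
\[
\tilde p(x) \;=\; \phi_\eta(x)^\top \,\tilde P\, \mm\, \tilde P \,\phi_\eta(x),
\]
where $\tilde P$ is the orthogonal projection from \cref{eq:tildeP-definition}. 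Since $p(x) = \phi_\eta(x)^\top \mm \,\phi_\eta(x)$, setting $Q := I - \tilde P$ (itself a self-adjoint projection) and $\phi := \phi_\eta(x)$, the identity $\mm - \tilde P\, \mm\, \tilde P = Q\mm + \mm Q - Q\mm Q$ together with the self-adjointness of $\mm$ and $Q$ yields
\[
p(x) - \tilde p(x) \;=\; 2\,\phi^\top Q \mm \,\phi \;-\; \phi^\top Q \mm Q \,\phi.
\]

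Next, I would bound each summand by Cauchy--Schwarz in the seminorm induced by $\mm^{1/2}$, exploiting the key identity $\|\mm^{1/2}\phi\|_{\hh_\eta}^2 = p(x)$. This produces the estimates
\[
|\phi^\top Q\mm\,\phi| \;\leq\; \|\mm\|^{1/2}\,\|Q\phi\|_{\hh_\eta}\,\sqrt{p(x)}, \qquad |\phi^\top Q\mm Q\,\phi| \;\leq\; \|\mm\|\,\|Q\phi\|_{\hh_\eta}^2,
\]
so that, combining the two,
\[
|p(x) - \tilde p(x)| \;\leq\; 2\,\|\mm\|^{1/2}\,\|Q\phi\|_{\hh_\eta}\sqrt{p(x)} \;+\; \|\mm\|\,\|Q\phi\|_{\hh_\eta}^2.
\]

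Finally, I would invoke \cref{thm:(I-P)phi}: under the hypothesis $h \leq \sigma/C'$, and with the same constants $c, C, C'$ and quantities $\sigma, q_\eta$ appearing in the statement, the residual satisfies $\|Q\phi\|_{\hh_\eta} = \|(I-\tilde P)\phi_\eta(x)\|_{\hh_\eta} \leq C\,q_\eta\, e^{-(c\sigma/h)\log(c\sigma/h)}$ uniformly in $x \in \X$. Substituting this bound into the previous display yields exactly the claimed inequality.

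The main obstacle has in fact already been absorbed by \cref{thm:(I-P)phi}, whose proof relies on the scattered-zeros Sobolev estimate of \cref{lm:scattered-zeros}. The only delicate accounting step in the present proof is the Cauchy--Schwarz argument on the linear term: it is important to pair $\mm^{1/2}Q\phi$ with $\mm^{1/2}\phi$ so that one of the factors collapses to $\sqrt{p(x)}$ rather than the weaker $\|\mm\|^{1/2}$. This multiplicative $\sqrt{p(x)}$ is precisely what keeps the pointwise error small wherever $p$ is small, and it is the feature exploited downstream to obtain the $\eps$-approximation of products of PSD models discussed in \cref{sec:effect-compression}.
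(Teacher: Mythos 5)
Your proposal is correct and follows essentially the same route as the paper: rewrite $\tilde p$ as $\pp{x}{\tilde P \mm \tilde P,\phi_\eta}$, expand $\mm - \tilde P\mm\tilde P$ into cross terms in $I-\tilde P$, bound the linear term by Cauchy--Schwarz pairing $\mm^{1/2}(I-\tilde P)\phi_\eta(x)$ with $\mm^{1/2}\phi_\eta(x)$ so that $\sqrt{p(x)}$ appears, and conclude with the uniform residual bound of \cref{thm:(I-P)phi}. No gaps.
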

\begin{proof}
 Consider the projection operator $\tilde{P}:\hh_\eta \to \hh_\eta$ associated to the points $\tilde{x}_1,\dots,\tilde{x}_m$, defined in \cref{eq:tildeP-definition}. Note that the adjoint $\tilde{Z}^*$ has range equal to $\operatorname{span}\{\phi_{\eta}(\tilde{x}_1),\dots,\phi_{\eta}(\tilde{x}_1)\}$ and that, by construction, $\tilde{Z}\, \mm \tilde{Z}^* \in \psd^m$ and so $A_m \in \psd^m$. 

\paragraph{Step 1. Error induced by a projection}
By the reproducing property $k_{\eta}(x,x') = \phi_{\eta}(x)^\top \phi_{\eta}(x')$ (see \cref{ex:gaussian-rkhs} and \cref{eq:reproducing-property}) and the fact that $\tilde{P} = \tilde{Z}^*K_{\tilde{X},\tilde{X},\eta}^{-1} \tilde{Z}$, (see \cref{eq:tildeP-definition}), then, for any $x \in \R^d$ 
\eqals{
\pp{x}{A_m, \tilde{X}_m, \eta} &= \sum_{i,j=1}^m (A_m)_{i,j} k_{\eta}(x,\tilde{x}_i)k_{\eta}(x,\tilde{x}_j) \\ 
& = \phi_{\eta}(x)^\top\big(\sum_{i,j=1}^m (A_m)_{i,j} \phi_{\eta}(\tilde{x}_i)\phi_{\eta}(\tilde{x}_j)^\top\big)\phi_{\eta}(x) \\
& = \phi_{\eta}(x)^\top \tilde{Z}^*A_m \tilde{Z}\phi_{\eta}(x) \\
& = \phi_{\eta}(x)^\top\, \tilde{Z}^*K_{\tilde{X},\tilde{X},\eta}^{-1} \tilde{Z} \, \mm \, \tilde{Z}^* K_{\tilde{X},\tilde{X},\eta}^{-1} \tilde{Z}\, \phi_{\eta}(x) \\
& = \phi_{\eta}(x)^\top\, \tilde{P}\mm \tilde{P} \phi_{\eta}(x)\\
& = \pp{x}{\tilde{P}\mm \tilde{P}, \phi_{\eta}}.
}
This implies that, for all $x \in \R^d$ the following holds
\eqals{
\pp{x}{A_m, \tilde{X}_m, \eta} - \pp{x}{\mm, \phi_{\eta}} &= \pp{x}{\tilde{P}\mm\tilde{P}, \phi_{\eta}} - \pp{x}{\mm, \phi_{\eta}} \\
& = \phi_{\eta}(x)^\top(\tilde{P}\mm\tilde{P} - \mm)\phi_{\eta}(x).
}

\paragraph{Step 2. Bounding $|\phi_{\eta}(x)^\top(\tilde{P}\mm\tilde{P} - \mm)\phi_{\eta}(x)|$}
Now, consider that 
\eqals{
\tilde{P}\mm\tilde{P} - \mm = (I-\tilde{P})\mm(I-\tilde{P}) - \mm(I-\tilde{P}) - (I-\tilde{P})\mm.
}
Since $|a^\top A B A a| \leq \|Aa\|^2_\hh \|B\|$ and $|a^\top A B a| \leq \|A a\|_{\hh} \|B^{1/2}\| \|B^{1/2} a\|_{\hh}$, for any $a$ in a Hilbert space $\hh$ and for $A, B$ bounded symmetric linear operators with $B \in \psd(\hh)$, by bounding the terms of the equation above, we have for any $x \in \R^d$,
\eqals{
|\phi_{\eta}(x)^\top(\tilde{P}\mm\tilde{P}  - \mm)\phi_{\eta}(x)| & \leq 2\|(I-\tilde{P})\phi_\eta(x)\|_{\hh_\eta}\|\mm\|^{1/2}\|\mm^{1/2}\phi_\eta(x)\|_{\hh_\eta} \\
& \qquad \qquad + \|(I-\tilde{P})\phi_{\eta}(x)\|^2_{\hh_\eta}\|\mm\| \\
& = 2c_\mm^{1/2}\pp{x}{\mm, \phi_\eta}^{1/2}  \tilde{u}(x)  + c_\mm \tilde{u}(x)^2, 
}
where $c_\mm = \|\mm\|$ and we denoted by $\tilde{u}(x)$ the quantity $\tilde{u}(x) = \|(I-\tilde{P})\phi_{\eta}(x)\|_{\hh_\eta}$ and we noted that $\|\mm^{1/2}\phi_\eta(x)\|^2_{\hh_\eta} = \phi_\eta(x)^\top \mm\phi_\eta(x) = \pp{x}{\mm, \phi_\eta}$. 

\paragraph{Step 3. Bounding $\tilde{u}$}
Now, by \cref{thm:(I-P)phi} we have that when the fill distance $h$ (defined in \cref{eq:fill-distance}) satisfies $h \leq \sigma/C'$ with $\sigma = \min(R,1/\sqrt{\tau})$, then
\eqals{
\|\tilde{u}\|_{L^\infty(\X)} = \sup_{x \in \X} \|(I-\tilde{P})\phi_{\eta}(x)\|_{\hh_{\eta}} \leq C e^{-\frac{c\,\sigma}{h}\log\frac{c\,\sigma}{h}}.
}
with $c, C, C'$ depending only on $d$. 
\end{proof}

\subsection{Proof of \cref{thm:compression}}\label{proof:thm:compression}

\cref{thm:compression} is a corollary of the next theorem, considering that $1/\sigma \leq (1+\eta_+)^{1/2}$ and moreover $\det(\frac{1}{\eta_+} \diag(\eta)) = \prod_{j=1}^d \eta_i/\eta_+ \leq 1$, $\tr(A K_{X,X,\eta}) \leq \|A\|\tr(K_{X,X,\eta})$ since both $A, K_{X,X,\eta} \in \psd^n$, and by construction $\tr(K_{X,X,\eta}) = n$, then $c_{A,\eta} \leq \|A\| n$. 

\begin{theorem}
Let $\eps \in (0, 1/e)$. Let $A \in \psd^n$, $X \in \R^{n\times d}, \eta \in \R^d_{++}$. Let $\tilde{x}_1, \dots, \tilde{x}_m$ be sampled independently and uniformly at random from $[-1,1]^d$. Let $\delta \in (0,1]$, $\eta_+ = \max(1, \max_{i=1,\dots,d} \eta_i)$. When $m$ satisfies $m \geq Q'\eta_+^{d/2} (\log \frac{Q\|A\|n}{\eps})^d\,\log(\frac{Q''(1+\eta_+)}{\delta}\log \frac{Q \|A\|n}{\eps})$, then the following holds with probability at least $1-\delta$,
\eqals{
|p(x) - \tilde{p}(x)| ~\leq~ \eps^2 + \eps\sqrt{p(x)},\qquad\forall x \in [-1,1]^d,
}
Here the three constants $Q,Q',Q''$ depend only on $d$.
\end{theorem}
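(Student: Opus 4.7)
The plan is to chain together three pieces: the deterministic compression bound of \cref{thm:compr-fill-dist}, which controls $|p-\tilde p|$ in terms of the fill distance of the compression points; a standard high-probability lower bound on $m$ guaranteeing a prescribed fill distance for i.i.d.\ uniform samples on $[-1,1]^d$; and the elementary inequality $\|\mm\|\leq \|A\|n$, which converts the Hilbert-space operator norm appearing in \cref{thm:compr-fill-dist} into a quantity depending only on $A$ and $n$.

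First, I would rewrite $p(x) = \pp{x}{\mm,\phi_\eta}$ with $\mm = Z_X^\top A Z_X$ for the evaluation operator $Z_X$ of \cref{eq:tildeZ-definition}; since $\mm\in\psd(\hh_\eta)$ is of finite rank and $K_{X,X,\eta}$ has unit diagonal,
\[
\|\mm\| \;\leq\; \tr(\mm) \;=\; \tr(A\, Z_X Z_X^\top) \;=\; \tr(A\,K_{X,X,\eta}) \;\leq\; \|A\|\,n.
\]
Invoking \cref{thm:compr-fill-dist} with this bound and $\sigma = 1/\sqrt{\eta_+}$ (which uses $\eta_+\geq 1$), it suffices to guarantee a fill distance $h$ such that $C q_\eta\sqrt{\|A\|n}\,\exp(-(c\sigma/h)\log(c\sigma/h)) \leq \eps/2$, since then the two terms of the bound in \cref{thm:compr-fill-dist} are dominated by $\eps\sqrt{p(x)}$ and $\eps^2$ respectively. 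Setting $L := \log(Q\|A\|n/\eps)$ with a constant $Q$ (depending only on $d$) chosen to absorb $C$ and the contribution of $q_\eta$ into the inner logarithm, and using the elementary fact that $s\log s \geq L$ whenever $s \geq \max(e,L)$, this reduces to the fill-distance requirement $h \leq h_0$ with $h_0 := c\sigma/\max(e,L) = \Theta(1/(\sqrt{\eta_+}\,L))$. The same choice also meets the admissibility $h_0 \leq \sigma/C'$ of \cref{thm:compr-fill-dist} once $L$ is a moderate multiple of $C'$, which is enforced by $\eps<1/e$.

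Second, I would produce the fill-distance guarantee with high probability by a covering argument. Partition $[-1,1]^d$ into $N = \lceil 2\sqrt d/h_0\rceil^d$ axis-aligned cubes of side $h_0/\sqrt d$ (each of diameter $\leq h_0$). A given cube contains no $\tilde x_j$ with probability at most $\exp(-m(h_0/(2\sqrt d))^d)$, so by union bound the event ``fill distance $>h_0$'' has probability at most $N\exp(-m(h_0/(2\sqrt d))^d)$, which is $\leq \delta$ whenever
\[
m \;\geq\; C_d\,h_0^{-d}\,\bigl(\log(1/\delta) + d\log(C'_d/h_0)\bigr).
\]
Substituting $h_0 = \Theta(1/(\sqrt{\eta_+}\,L))$ yields $h_0^{-d} = \Theta(\eta_+^{d/2}L^d)$ and $\log(1/h_0) = O(\log\eta_+ + \log L)$, so this rearranges precisely to the stated hypothesis
\[
m \;\geq\; Q'\,\eta_+^{d/2}\,L^d\,\log\!\Bigl(\tfrac{Q''(1+\eta_+)}{\delta}\,L\Bigr).
\]
On the corresponding event every $x\in[-1,1]^d$ lies within $h_0$ of some $\tilde x_j$, and Step~1 yields the desired pointwise conclusion.

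The main obstacle will be the bookkeeping in Step~1: carefully absorbing the prefactor $q_\eta = \det(\diag(\eta)/\eta_+)^{-1/4}$ and the absolute constants from \cref{thm:compr-fill-dist} into the logarithmic term $L = \log(Q\|A\|n/\eps)$ with $Q,Q',Q''$ depending only on $d$, while simultaneously verifying the admissibility $h_0\leq \sigma/C'$ of \cref{thm:compr-fill-dist}. The hypothesis $\eps<1/e$ ensures $L\geq 1$, so the various $\log L$ and $\log\log L$ manipulations needed to invert $s\mapsto s\log s$ never degenerate; this is what ultimately fixes the clean form of the constants in the statement.
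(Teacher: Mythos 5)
Your proposal is correct and follows essentially the same route as the paper's proof: rewrite $\pp{\cdot}{A,X,\eta}$ as $\pp{\cdot}{\mm,\phi_\eta}$, bound $\|\mm\|\le\tr(A K_{X,X,\eta})\le\|A\|n$, apply \cref{thm:compr-fill-dist} with a target fill distance $h_0\asymp c\sigma/\log(Q\|A\|n/\eps)$ so that the exponential factor drops below $\eps$ divided by the prefactor, and then control the fill distance of $m$ i.i.d.\ uniform points with probability $1-\delta$ before inverting the $m\mapsto B\log(Am)$ relation. The only departure is that you establish the fill-distance concentration by an explicit cube-partition and union-bound argument where the paper cites Lemma~12 of \cite{vacher2021dimension}; these are equivalent, and the $q_\eta$ bookkeeping you single out as the remaining obstacle is treated no more carefully in the paper itself, which simply asserts $c_{A,\eta}=q_\eta\sqrt{\tr(A K_{X,X,\eta})}\le\|A\|n$ despite $q_\eta\ge 1$.
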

\begin{proof}
First let us rewrite $\pp{\cdot}{A,X,\eta}$ in the equivalent form $\pp{\cdot}{\mm, \phi_\eta}$ with $\mm \in \psd(\hh_\eta)$ defined as $\mm = \sum_{ij=1}^n A_{ij} \phi_\eta(x_i)\phi_\eta(x_j)$. In particular, by the cyclicity of the trace 
\eqal{
\tr(\mm) = \sum_{ij=1}^n A_{i,j} \phi_\eta(x_i)^\top \phi_\eta(x_j) = \sum_{ij=1}^n A_{i,j} k_\eta(x_i, x_j) = \tr(A K_{X,X,\eta}).
}
The proof of this theorem is an application of the approximation result in \cref{thm:compr-fill-dist} to the model $\pp{\cdot}{\mm, \phi_\eta}$ where we use as compression points, the points $\tilde{x}_1, \dots, \tilde{x}_m$ sampled independently and uniformly at random from $[-1,1]^d$.

The result of the theorem depends on the fill distance $h$, defined in \cref{eq:fill-distance}, and associated to the points $\tilde{x}_1, \dots, \tilde{x}_m$. Let $c, C, C'$ be the constants depending only on $d$ from \cref{thm:compr-fill-dist}. To apply \cref{thm:compr-fill-dist} we have to guarantee that $h \leq \sigma/C'$ with $\sigma = \min(1,1/\sqrt{\eta_+})$, in particular, choosing $h$ such that $h \leq \min(c, 1/C') \sigma/(e \log(2Cc_{A,\eta}/\eps))$ guarantees that $h  \leq \sigma/C'$ and, by applying the theorem, we have for all $x \in [-1,1]^d$
\eqals{
|p(x) - \tilde{p}(x)| &\leq 2C q_\eta \sqrt{\|\mm\| p(x)}\, e^{-\frac{c\,\sigma}{h}\log\frac{c\,\sigma}{h}}  \,+\, C q_\eta^2\|\mm\|\,e^{-\frac{2c\,\sigma}{h}\log\frac{c\,\sigma}{h}},\\
& \leq 2C c_{A,\eta}\sqrt{p(x)}\, e^{-\frac{c\,\sigma}{h}\log\frac{c\,\sigma}{h}}  \,+\, C^2 c_{A,\eta}^2\,e^{-\frac{2c\,\sigma}{h}\log\frac{c\,\sigma}{h}},
}
with $q_\eta = \det(\frac{1}{\eta_+}\diag(\eta))^{-1/4}$
where in the last step we used the fact that $\|\mm\| \leq \tr(\mm) = \tr(A K_{X,X,\eta})$ and so $\|\mm\| q_\eta^2 \leq \tr(A K_{X,X,\eta}) q_\eta^2 = c_{A,\eta}^2$. Note now, that by the choice we made for $h$, we have $\log(c \sigma/h) \geq 1$ and so that $e^{-\frac{c\sigma}{h} \log \frac{c\sigma}{h}} \leq \eps/(2Cc_{A,\eta})$. This implies 
\eqals{
|p(x) - \tilde{p}(x)| ~\leq~ \eps\sqrt{p(x)}  + \eps^2.
}
The final result is obtained by controlling the number of points $m$ such that $h$ satisfy the required bound in high probability. By, e.g. Lemma~12, page 19 of \cite{vacher2021dimension} and the fact that $[-1,1]^d$ is a convex set, we have that there exists two constants $C_1, C_2$ depending only on $d$ such that $h \leq C_1 m^{-1/d} (\log(C_2 m / \delta))^{1/d}$, with probability at least $1-\delta$. In particular $m$ satisfying
\eqal{\label{eq:choice-m}
m \geq \left(\frac{e C_1}{\min(c,1/C')} \frac{1}{\sigma}\log \frac{2C c_{A,\eps}}{\eps} \right)^d \log \frac{C_2 m}{\delta}
}
guarantees that $C_1 m^{-1/d} (\log \frac{C_2 m}{\delta})^{1/d} \leq \frac{\sigma \min(c, 1/C')}{e} \log \frac{2Cc_{A,\eta}}{\eps}$. Note that, given $A \geq e, B \geq e$, the inequality $m \geq B\log(A m)$ is satisfied by $m \geq 2B \log 2 A B$, indeed $\log A \geq \log \log A$ and $\log 2B \geq \log \log 2B$ and so, when $m = m_0 = 2B \log 2 A B$ we have
\eqal{ \label{eq:mlogm}
B \log (A m_0) &= B \log (2A B\log(2A B)) \\
& = B \log A + B\log 2B + B\log\log A + B\log\log 2B \\
&\leq 2B \log A + 2B \log 2B = 2B \log 2AB = m_0,
}
and moreover $m - B\log(Am)$ is increasing for $m \geq B$.
Then, to satisfy \cref{eq:choice-m} we choose $m \geq 2B \log (2 A B)$ with $B = \left(\frac{e C_1}{\min(c,1/C')}  \frac{1}{\sigma}\log \frac{2C c_{A,\eta}}{\eps} \right)^d$ and $A = \frac{C_2}{\delta}$, in particular
\eqals{
m = Q\left(\frac{1}{\sigma}\log \tfrac{Q' c_{A,\eta}}{\eps} \right)^d\,\log\left(\tfrac{Q''}{\delta \sigma}\log \tfrac{Q' c_{A,\eta}}{\eps}\right).
}
with $Q = 2d(\frac{e C_1}{\min(c,1/C')})^d$, with $Q' = 2C$, $Q'' = 2C_2 Q^d$.
\end{proof}

\section{Approximation of a probability via a PSD model}\label{app:approximation}

In this section we prove \cref{prop:when-asm-psd-holds} and \cref{thm:approximation}. 

\subsection{Proof of \cref{prop:when-asm-psd-holds}}\label{proof:prop:when-asm-psd-holds}

\begin{lemma}\label{lm:psd-p-strictly-positive}
Let $\beta \in \N$ and $\beta > 0$. Let $\X$ be an open bounded subset of $\R^d$ with Lipschitz boundary. Let $p$ be a strictly positive $\beta$-times differentiable function on $\overline{X}$, the closure of $\X$. Then there exist a function $\tilde{f}$ satisfying $p(x) = f(x)^2$ for all $x \in \X$ and such that $\tilde{f} \in W^\beta_q(\X) \cap L^\infty(\X)$ for all $q \in [1,\infty]$. 
\end{lemma}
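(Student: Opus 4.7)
The candidate is $\tilde{f}(x) = \sqrt{p(x)}$. The plan is to verify that this $\tilde{f}$ is well-defined, $\beta$-times differentiable on $\overline{\X}$, and then invoke \cref{prop:inclusion-Cb-in-Wb-Linfy} to conclude the Sobolev and $L^\infty$ membership.

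First, I would note that since $p$ is continuous and strictly positive on the compact set $\overline{\X}$, there exists $\delta > 0$ such that $p(x) \geq \delta$ for all $x \in \overline{\X}$. Hence $\tilde{f} = \sqrt{p}$ is well-defined pointwise on $\overline{\X}$, and by construction $p(x) = \tilde{f}(x)^2$ for all $x \in \X$.

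Next I would check regularity. The map $t \mapsto \sqrt{t}$ is $C^\infty$ on the open interval $(\delta/2, \infty)$, which contains the range of $p$ on $\overline{\X}$. Composition of a $C^\infty$ function with a $\beta$-times differentiable function is $\beta$-times differentiable (this is the chain rule, or Faà di Bruno's formula applied iteratively: the $k$-th derivative of $\sqrt{p}$ is a polynomial in derivatives of $p$ of order $\leq k$ times negative powers of $\sqrt{p}$, all of which are continuous on $\overline{\X}$). Therefore $\tilde{f}$ is $\beta$-times differentiable on $\overline{\X}$.

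Finally, applying \cref{prop:inclusion-Cb-in-Wb-Linfy} with $m = \beta$ to $\tilde{f}$ (using the hypothesis that $\X$ is an open bounded set with Lipschitz boundary) yields $\tilde{f} \in W^\beta_q(\X) \cap L^\infty(\X)$ for every $q \in [1,\infty]$, which is the desired conclusion. There is no substantive obstacle here; the only thing to be careful about is invoking the strict positivity of $p$ on the \emph{closed} set $\overline{\X}$ (guaranteed by continuity and compactness) so that $\sqrt{\cdot}$ is smooth on a neighborhood of the range of $p$, which is what makes the chain rule argument go through cleanly up to order $\beta$.
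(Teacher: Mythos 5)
Your proof is correct and follows essentially the same route as the paper: take $\tilde f=\sqrt{p}$, use strict positivity of $p$ on the compact closure so that $t\mapsto\sqrt{t}$ is $C^\infty$ on a neighborhood of the range of $p$, conclude $\beta$-times differentiability by the chain rule, and finish with \cref{prop:inclusion-Cb-in-Wb-Linfy}. The paper's only addition is a technical device (a Whitney extension of $p$ to $\R^d$ together with a bump-function modification of $\sqrt{\cdot}$) to make the composition a globally defined $C^\beta(\R^d)$ function before restricting, which sidesteps any ambiguity about differentiability on the closed set $\overline{\X}$; this is a refinement of rigor, not a different idea.
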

\begin{proof}
Let $\tilde{p} \in C^\beta(\R^d)$ be an extension of $p$ to $\R^d$ (see Whitney extension theorem \cite{hormander1990analysis}), i.e. such that $\tilde{p}|_{\overline{X}} = p$.
Let $c = \min_{x \in \overline{\X}} p(x)$ and $C = \max_{x \in \overline{\X}} p(x)$, we have that $c > 0$ since $\X$ is compact and $p$ is continuous.
Note that $g(z) = \sqrt{z}$ is $C^\infty$ on the open interval $(0,+\infty)$. Let $u \in C^\infty(\R)$ be a bump function such that $u(x) \in [0,1]$ for any $x \in \R$, moreover it is identically $0$ on $J = (\infty, c/2] \cup [2C, \infty)$ and identically $1$ on the interval $I = [c,C]$. Then the function $h(z) = u(z) g(z)$ is identically $0$ on $J$, moreover $h(z) = \sqrt{z}$ on $I$ and $h \in C^\infty(\R)$ since $h = 0$ on $J$ and both $u,g \in C^\infty([c/2,2C])$. 
Now, denote by $f$ the function $f(x) = h(\tilde{p}(x))$ for all $x \in \R^d$. Since $p(x) \in I$ and $\tilde{p}(x) = p(x)$ for any $x \in \X$, we have $f(x) = h(\tilde{p}(x)) = h(p(x)) = \sqrt{p(x)}$ for any $x \in \X$. Moreover $h \in C^{\beta}(\R^d)$ since it is the composition of a $C^\beta(\R^d)$ function with a $C^\infty(\R)$ function. The proof is concluded by taking $\tilde{f}$ to be the restriction of $f$ to $\X$ and observing that it belongs to $W^\beta_q(\X) \cap L^\infty(\X)$, for all $q \in [1,\infty]$, as derived in \cref{prop:inclusion-Cb-in-Wb-Linfy}.
\end{proof}

\begin{lemma}[\cite{rudi2020finding} Corollary 2, page 23]\label{lm:psd-p-with-zeros}
Let $\X$ be an open bounded subset of $\R^d$ with Lipschitz boundary. Let $p$ be a probability density that is $\beta+2$-times differentiable on the closure of $\X$, with $\beta > 0$. Assume that the zeros of $p$ are isolated points with strictly positive Hessian and their number is finite. Moreover assume that there are no zeros of $p$ on the boundary.
Then there exist $q \in \N$ and $q$ functions $f_1 \dots f_q$ such that  $f_1 \dots f_q \in W_r^{\beta}(\X) \cap L^\infty(\X)$, for any $r \in [1,\infty]$ and satisfying
\eqals{
p(x) = \sum_{i=1}^q f_i(x)^2, \quad \forall x \in \X.
}
\end{lemma}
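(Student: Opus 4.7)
The plan is to combine a Morse-lemma style local decomposition at each zero with the strictly-positive case of \cref{lm:psd-p-strictly-positive} away from the zeros, and glue them together via a smooth partition of unity whose components are themselves squares of smooth cutoffs. This will yield a \emph{finite} sum-of-squares representation whose summands are $C^\beta$-smooth with compact support, hence inherit membership in $W^\beta_r(\X)\cap L^\infty(\X)$ by \cref{prop:inclusion-Cb-in-Wb-Linfy}.

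More concretely, let $z_1,\dots,z_N$ be the zeros of $p$, all lying in $\X$ by hypothesis. Around each $z_\ell$, since $p(z_\ell)=0$, $\nabla p(z_\ell)=0$ (as $z_\ell$ is an interior minimum of the non-negative function $p$) and the Hessian $\nabla^2 p(z_\ell)$ is positive definite, the Morse lemma applied to the $C^{\beta+2}$ function $p$ produces an open neighborhood $U_\ell\Subset\X$ and a $C^\beta$ diffeomorphism $\varphi_\ell=(\varphi_{\ell,1},\dots,\varphi_{\ell,d}):U_\ell\to\varphi_\ell(U_\ell)$ such that
\begin{equation*}
p(x) \;=\; \sum_{i=1}^{d}\varphi_{\ell,i}(x)^2,\qquad \forall\,x\in U_\ell.
\end{equation*}
The standard Morse lemma loses exactly two derivatives, so $\varphi_{\ell,i}\in C^\beta(U_\ell)$; this is where the smoothness assumption $C^{\beta+2}$ is used. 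I would shrink $U_\ell$ if necessary so that the $U_\ell$ are pairwise disjoint and compactly contained in $\X$, and pick concentric open sets $U_\ell'\Subset U_\ell$ still containing $z_\ell$.

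Next I would set $V_0=\X\setminus\bigcup_\ell \overline{U_\ell'}$, so that $p$ is strictly positive on the closure of any open $W\Subset V_0$ (since $V_0$ is bounded and contains no zeros). To glue, I would build a smooth partition of \emph{squares}: pick non-negative smooth bump functions $\rho_0,\rho_1,\dots,\rho_N$ with $\mathrm{supp}(\rho_0)\subset V_0$, $\mathrm{supp}(\rho_\ell)\subset U_\ell$ for $\ell\ge 1$, and $\sum_{\ell=0}^N \rho_\ell^{\,2}>0$ on $\overline{\X}$; then define $\psi_\ell=\rho_\ell/\sqrt{\sum_j\rho_j^{\,2}}$, which is smooth on $\X$ and satisfies $\sum_\ell \psi_\ell^{\,2}\equiv 1$. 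Writing $p=\sum_\ell \psi_\ell^{\,2}\,p$, each term can be decomposed: for $\ell\ge 1$,
\begin{equation*}
\psi_\ell^{\,2}(x)\,p(x) \;=\; \sum_{i=1}^{d}\bigl(\psi_\ell(x)\varphi_{\ell,i}(x)\bigr)^2,\qquad x\in\X,
\end{equation*}
where each $\psi_\ell\varphi_{\ell,i}$ is extended by $0$ outside $U_\ell$ (legitimate since $\psi_\ell$ is compactly supported in $U_\ell$); and for $\ell=0$, on a neighborhood of $\mathrm{supp}(\psi_0)$ the function $p$ is strictly positive and $C^{\beta+2}$, so \cref{lm:psd-p-strictly-positive} (or direct use of the smoothness of $\sqrt{\cdot}$ bounded away from $0$) yields a function $h$ with $p=h^2$ there, and $\psi_0^{\,2}p=(\psi_0 h)^2$ with $\psi_0 h$ likewise compactly supported and $C^\beta$.

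Collecting, I obtain $q=Nd+1$ functions $f_1,\dots,f_q$, each a product of a $C^\infty$ cutoff with a $C^\beta$ local factor, compactly supported in $\X$, and satisfying $p=\sum_i f_i^{\,2}$. By \cref{prop:inclusion-Cb-in-Wb-Linfy}, every $C^\beta$ function compactly supported in the bounded Lipschitz domain $\X$ lies in $W^\beta_r(\X)\cap L^\infty(\X)$ for all $r\in[1,\infty]$, finishing the proof. The main obstacle is the Morse-type step: one must justify carefully that the diffeomorphism straightening $p$ near a positive-definite zero can be chosen $C^\beta$ when $p\in C^{\beta+2}$, and that the resulting coordinate functions $\varphi_{\ell,i}$ are genuinely $C^\beta$ up to the boundary of the shrunken neighborhood $U_\ell'$, so that multiplication by the smooth cutoff $\psi_\ell$ produces a globally $C^\beta$ function; everything else is a standard partition-of-unity bookkeeping and an invocation of the already-established strictly-positive case.
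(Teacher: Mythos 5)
Your plan is essentially correct, but it takes a genuinely different route from the paper: the paper's proof is a two-line reduction, extending $p$ to a $C^{\beta+2}$ function on $\R^d$ via the Whitney extension theorem, invoking the already-published sum-of-squares decomposition (Corollary~2 of \cite{rudi2020finding}, which is exactly the result cited in the lemma's title) to get $f_1,\dots,f_q\in C^\beta(\R^d)$, and then restricting to $\X$ and applying \cref{prop:inclusion-Cb-in-Wb-Linfy}; you instead re-derive that external decomposition from scratch. Your derivation is sound, and the one step you flag as the ``main obstacle'' can be handled more cheaply than by the Morse lemma: Taylor's theorem with integral remainder gives $p(x)=(x-z_\ell)^\top A_\ell(x)(x-z_\ell)$ with $A_\ell(x)=\int_0^1(1-t)\nabla^2p(z_\ell+t(x-z_\ell))\,dt$ of class $C^\beta$ and $A_\ell(z_\ell)=\tfrac12\nabla^2p(z_\ell)\succ0$, so on a small ball $\varphi_{\ell,i}(x)=e_i^\top A_\ell(x)^{1/2}(x-z_\ell)$ are $C^\beta$ (the matrix square root being analytic on positive-definite matrices) and $p=\sum_i\varphi_{\ell,i}^2$ there --- no diffeomorphism or boundary-regularity issue for the chart arises. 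Two minor inaccuracies in your write-up are harmless: $\psi_0 h$ is generally \emph{not} compactly supported in $\X$ (since $V_0$ reaches $\partial\X$), but it is $C^\beta$ up to $\overline{\X}$ because $p$ is bounded away from zero on $\overline{V_0}$ and there are no zeros on the boundary, which is all \cref{prop:inclusion-Cb-in-Wb-Linfy} needs; and the partition-of-squares trick $\psi_\ell=\rho_\ell/(\sum_j\rho_j^2)^{1/2}$ is exactly the right way to keep every summand a square. The trade-off is clear: the paper's route is shorter and delegates the analytic work to the cited reference, while yours is self-contained and makes explicit where the two extra derivatives and the positive-definite-Hessian hypothesis are consumed.
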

\begin{proof}
Let $\tilde{p}$ be the $\beta+2$-times differentiable extension to $\R^d$ of $p$ (via the Withney extension theorem \cite{hormander1990analysis}), i.e. $\tilde p = p$ on the closure of $\X$. We apply \cite{rudi2020finding} Corollary 2, page 23 on $\tilde{p}$, obtaining $q$ functions $f_1, \dots, f_q \in C^{\beta}(\R^d)$ such that $\tilde{p}(x) = \sum_i f_i(x)^2$ for all $x \in \X$. The result is obtained by applying \cref{prop:inclusion-Cb-in-Wb-Linfy} on the restrictions $f_1, \dots, f_q$ on $\X$.
\end{proof}

Now we are ready to prove \cref{prop:when-asm-psd-holds}. We restate here fore convenience. 

\PWhenAsmPSDHolds*

\begin{proof}
Let $\X = (-1,1)^d$. The case (a) is proven in \cref{lm:psd-p-strictly-positive}. For the case (b), let $\tilde{v} \in W^\beta_2(\R^d) \cap L^\infty(\R^d)$ be the extension of $v$ to $\R^d$ (see \cref{cor:extension-intersection}), by Theorem 1, page 8, in \cite{sickel1996composition} the function $e^{\tilde{v}} - 1 \in  W^\beta_2(\R^d) \cap L^\infty(\R^d)$ since $\exp(\cdot) - 1$ is analytic and $0$ in $0$. Let $q = (e^{\tilde{v}} - 1)|_\X$,  $q \in W^\beta_2(\X) \cap L^\infty(\X)$ and so also $g = q+1$ belongs to $ W^\beta_2(\X) \cap L^\infty(\X)$, since $\X$ is a bounded set. Finally note that $g = e^v$ on $\X$ and $\min_{x \in \X} g(x) = \min_{x \in \X} e^{-v(x)} \geq e^{-\|v\|_{L^\infty(\X)}} > 0$, so it satisfies the point (a). The point (c) is a consequence of (b) indeed if $p = \sum_{i=1}^t \alpha_i e^{-v_i}$ and each $v_i$ satisfies (b), then $e^{-v_i} = \sum_{j=1}^{q_i} f_{i,j}^2$ with $f_{i,j} \in W^\beta_2(\R^d) \cap L^\infty(\R^d)$, so $p = \sum_{i=1}^t \sum_{j=1}^{q_i} g_{i,j}^2$, with $g_{i,j} = \sqrt{\alpha_i} f_{i,j} \in W^\beta_2(\R^d) \cap L^\infty(\R^d)$. Finally, (d) is proven in \cref{lm:psd-p-with-zeros}.
\end{proof}

\subsection{Additional results required to prove \cref{thm:approximation}}

We now focus on proving the result in \cref{thm:approximation}. To this end, we first prove some preliminary result that will be useful in the following.

Let $S \subseteq \R^d$. We recall the definition of the function ${\bf 1}_S$, that is ${\bf 1}_S(\omega) = 1$ for any $\omega \in S$ and ${\bf 1}_S(\omega) = 0$ for any $\omega \notin S$. Define moreover, 
\eqal{\label{eq:good-conv}
g(x) = \tfrac{2^{-d/2}}{V_d} \|x\|^{-d} J_{d/2}(2\pi\|x\|)J_{d/2}(4\pi\|x\|)
}
where $J_{d/2}$ is the Bessel function of the first kind of order $d/2$ and $V_d = \int_{\|x\| \leq 1} dx = \frac{\pi^{d/2}}{\Gamma(d/2+1)}$. 
\begin{lemma}\label{lm:good-conv}
The function $g$ defined above satisfies $g \in L^1(\R^d) \cap L^2(\R^d)$ and $\int g(x) dx = 1$. Moreover, for any $\omega \in \R^d$, we have
\eqals{
{\bf 1}_{\{\|\omega\| < 1\}}(\omega) \leq  {\cal F}[g](\omega) \leq {\bf 1}_{\{\|\omega\| \leq 3\}}(\omega).
}
\end{lemma}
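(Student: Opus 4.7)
The plan is to identify $g$ as a (normalized) pointwise product of Fourier transforms of ball indicators, and then read off all three claims from Fourier duality plus elementary ball-intersection geometry. Write $B_R = \{x \in \R^d : \|x\| \le R\}$. The classical radial Fourier formula in the paper's convention reads
\[
{\cal F}[{\bf 1}_{B_R}](x) \;=\; R^{d/2}\,\|x\|^{-d/2}\, J_{d/2}(2\pi R\|x\|),
\]
so that, after matching powers of $2$, one has (up to the absolute normalizing constant $V_d^{-1}$) the identity $g \propto V_d^{-1}\,{\cal F}[{\bf 1}_{B_1}]\cdot {\cal F}[{\bf 1}_{B_2}]$. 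Verifying this identification and reconciling the prefactor with the stated $2^{-d/2}/V_d$ is the only piece of genuine bookkeeping in the proof.

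Once $g$ is written in this factored form, integrability is immediate. Plancherel (\cref{prop:fourier:plancherel}) gives ${\cal F}[{\bf 1}_{B_R}] \in L^2(\R^d)$, and \cref{prop:fourier:Linfty-omega} gives boundedness $\|{\cal F}[{\bf 1}_{B_R}]\|_{L^\infty} \le \|{\bf 1}_{B_R}\|_{L^1} = V_d R^d$. Thus $g$ is an $L^\infty$ function times an $L^2$ function, hence in $L^2(\R^d)$, and is the product of two $L^2$ functions, hence in $L^1(\R^d)$ by Cauchy--Schwarz.

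Next, I would compute ${\cal F}[g]$ by applying the product-to-convolution rule in \cref{prop:fourier:product}, together with Fourier inversion. Since each ${\bf 1}_{B_R}$ is real and even, one has ${\cal F}[{\cal F}[{\bf 1}_{B_R}]] = {\bf 1}_{B_R}$, and therefore
\[
{\cal F}[g](\omega) \;=\; \tfrac{1}{V_d}\,\big({\bf 1}_{B_1} \star {\bf 1}_{B_2}\big)(\omega) \;=\; \tfrac{1}{V_d}\,\vol\!\big(B_1 \cap (\omega + B_2)\big).
\]
The three asserted properties then reduce to an elementary ball-intersection computation: if $\|\omega\| \le 1$ then by the triangle inequality $B_1 \subseteq \omega + B_2$, so the intersection has volume $V_d$ and ${\cal F}[g](\omega) = 1$; if $\|\omega\| > 3$ then $B_1$ and $\omega + B_2$ are disjoint, so ${\cal F}[g](\omega) = 0$; and in the intermediate regime the volume lies in $[0,V_d]$. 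Both pointwise bounds follow, and the total-mass normalization $\int g(x)\,dx = {\cal F}[g](0) = 1$ is then automatic from the $\omega = 0$ case.

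The only real obstacle is the first step: carefully matching the power of $2$ in the explicit constant. Every other step is a direct application of the Fourier-analytic identities already collected in \cref{prop:fourier} together with a triangle-inequality argument about intersecting balls in $\R^d$.
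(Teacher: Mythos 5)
Your proposal is correct and follows essentially the same route as the paper's proof: the paper likewise writes $g$ as $\tfrac{1}{V_d}\,b(\cdot)\,b(2\cdot)$ with $b={\cal F}[{\bf 1}_{B_1}]$, deduces $g\in L^1\cap L^2$ from the integrability of the factors, applies the product-to-convolution rule to get ${\cal F}[g]=\tfrac{1}{V_d}\,{\bf 1}_{B_1}\star{\bf 1}_{B_2}$, and finishes with the same triangle-inequality analysis of $\vol\bigl(B_1\cap(\omega+B_2)\bigr)$, including $\int g={\cal F}[g](0)=1$. The power-of-two bookkeeping you flag is real (the scaling factor from ${\cal F}[b(2\cdot)]$ is glossed over in the paper as well), but it only affects the explicit normalizing constant, not the argument.
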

\begin{proof}
In this proof we will use the notation about the convolution and the Fourier transform in \cref{prop:fourier}.
Define $b(x) = \|x\|^{-d/2} J_{d/2}(2\pi\|x\|)$ where $J_{d/2}$ is the Bessel function of the first kind of order $d/2$. Note that $b \in L^2(\R^d) \cap L^1(\R^d)$, since there exists a constant $c > 0$ $|J_{d/2}(z)| \leq c \min(z^{d/2},z^{-1/2})$ for any $z \geq 0$ \cite{stein1971introduction}. Moreover note that the Fourier transform of $b$ is ${\cal F}[b](\omega) = {\bf 1}_{\{\|\omega\| < 1\}}$ (see \cite{stein1971introduction}, Thm.~4.15, page 171). Define now $g(x) = \frac{1}{V_d} b(x) b(2x) = \frac{1}{V_d} \|x\|^{-d} J_{d/2}(2\pi\|x\|)J_{d/2}(4\pi\|x\|)$. Note that $g \in L^1(\R^d)$ since 
\eqals{
\|g\|_{L^1(\R^d)} = \|b(\cdot) b(2\cdot)\|_{L^1(\R^d)} \leq \|b(\cdot)\|_{L^2(\R^d)}\|b(2\cdot)\|_{L^2(\R^d)} < \infty,
}
and analogously
\eqals{
\|g\|_{L^2(\R^d)} = \|b(\cdot) b(2\cdot)\|_{L^2(\R^d)} \leq \|b(\cdot)\|_{L^\infty(\R^d)}\|b(2\cdot)\|_{L^2(\R^d)} < \infty.
}
By the properties of the Fourier transform, we have
${\cal F}[g] = \frac{1}{V_d} {\cal F}[b] \star {\cal F}[b(2\cdot)] = \frac{1}{V_d} \int {\bf 1}_{\{\|z\| \leq 1\}}{\bf 1}_{\{\|\omega - z\| \leq 2\}} dz$. Note that for any $\omega \in \R^d$, since ${\bf 1}_{\{\|\omega - z\| \leq 2\}} \leq 1$,
\eqals{
{\cal F}[g](\omega) = \frac{1}{V_d} \int {\bf 1}_{\{\|z\| \leq 1\}}{\bf 1}_{\{\|\omega - z\| \leq 2\}} dz \leq \frac{1}{V_d} \int {\bf 1}_{\{\|z\| \leq 1\}} dz \leq 1.
}
Now, note that when $\|\omega\|,\|z\| \leq 1$, then $\|\omega - z\| \leq \|\omega\|+\|z\| \leq 2$. So we have 
$${\bf 1}_{\{\|\omega\| \leq 1\}}{\bf 1}_{\{\|z\| \leq 1\}}{\bf 1}_{\{\|\omega - z\| \leq 2\}} = {\bf 1}_{\{\|z\| \leq 1\}}{\bf 1}_{\{\|\omega\| \leq 1\}}.$$
Then
\eqals{
{\bf 1}_{\{\|\omega\| \leq 1\}}{\cal F}[g](\omega) &= \frac{1}{V_d} \int {\bf 1}_{\{\|\omega\| \leq 1\}}{\bf 1}_{\{\|z\| \leq 1\}}{\bf 1}_{\{\|\omega - z\| \leq 2\}} dz \\
& = \frac{1}{V_d} {\bf 1}_{\{\|\omega\| \leq 1\}} \int {\bf 1}_{\{\|z\| \leq 1\}} dz = {\bf 1}_{\{\|\omega\| \leq 1\}}.
}
Moreover note that for all $\|\omega\| > 3, \|z\| \leq 1$ we have
$\|\omega - z\| \geq |\|\omega\| - \|z\|| > 2$, then ${\bf 1}_{\{\|\omega\| > 3\}}{\bf 1}_{\{\|z\| \leq 1\}}{\bf 1}_{\{\|\omega - z\| \leq 2\}} = 0$. So for any $\|\omega\| > 3$ 
\eqals{
{\bf 1}_{\{\|\omega\| > 3\}}{\cal F}[g](\omega) = \frac{1}{V_d} \int {\bf 1}_{\{\|\omega\| > 3\}}{\bf 1}_{\{\|z\| \leq 1\}}{\bf 1}_{\{\|\omega - z\| \leq 2\}} dz = 0.
}
To conclude $\int g(x) dx = \int g(x) e^{-2\pi i w^\top 0} dx = {\cal F}[g](0) = 1$.
\end{proof}

\begin{theorem}\label{thm:Meps}
Let $\beta > 0, q \in \N$. 
Let $f_1,\dots,f_q \in W^\beta_2(\R^d) \cap L^\infty(\R^d)$ and denote by $p$ the function $p = \sum_{i=1}^q f_i^2$. Let $\eps \in (0,1]$ and let $\eta \in \R^d_{++}$. Denote by $\eta_0 = \min_{j=1,\dots,d} \eta_j$. Let $\phi_\eta$ be the feature map of the Gaussian kernel with bandwidth $\eta$ and let $\hh_\eta$ be the associated RKHS. Then there exists $\mm_\eps \in \psd(\hh_\eta)$ with $\operatorname{rank}(\mm_\eps) \leq q$, such that 
\eqals{
\|\pp{\cdot}{\mm_\eps,\phi_\eta} - p(\cdot)\|_{L^r(\R^d)} &\leq \eps, \qquad \tr(\mm_\eps) \leq  C |\eta|^{1/2}(1 + \eps^2\exp(\tfrac{C'}{\eta_0} \eps^{-\frac{2}{\beta}})),
}
for all $r \in [1,2]$, where $|\eta| = \det(\diag(\eta))$ and $C, C'$ depend only on $\beta, d, \|f_i\|_{W^\beta_2(\R^d)}, \|f_i\|_{L^\infty(\R^d)}$.
\end{theorem}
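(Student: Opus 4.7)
The plan is to build $\mm_\eps$ as a rank-$q$ PSD operator of the form $\mm_\eps = \sum_{i=1}^q g_i \otimes g_i$, where each $g_i\in\hh_\eta$ is an $L^2$-approximation of $f_i$ obtained by Fourier-domain mollification. The reproducing property then gives $\pp{x}{\mm_\eps,\phi_\eta}=\sum_i g_i(x)^2$ (approximating $p=\sum_i f_i^2$) and $\tr(\mm_\eps)=\sum_i\|g_i\|_{\hh_\eta}^2$, with $\operatorname{rank}(\mm_\eps)\le q$ automatic.

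For the construction I would use the ``good convolution'' function $g$ of \cref{lm:good-conv}. Setting $g_r(x)=r^d g(rx)$, the rescaled Fourier transform $\mathcal F[g_r](\omega)=\mathcal F[g](\omega/r)$ is supported in $\|\omega\|\le 3r$ and identically $1$ on $\|\omega\|\le r$. Define $g_i := f_i\star g_r$; then $\hat g_i=\hat f_i\cdot \mathcal F[g_r]$ has compact Fourier support, so by the Fourier characterization of the Gaussian RKHS (\cref{ex:gaussian-rkhs}) each $g_i$ lies in $\hh_\eta$.

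To bound the $L^r$ error with $r\in[1,2]$, apply Hölder with $1/r=1/2+1/s$ to the factorization $p-\sum_i g_i^2=\sum_i(f_i-g_i)(f_i+g_i)$. By Plancherel combined with the Sobolev characterization of \cref{prop:sobolev},
\eqals{
\|f_i-g_i\|_{L^2}^2 \;\le\; \int_{\|\omega\|>r}|\hat f_i(\omega)|^2\,d\omega \;\le\; C_\beta\,r^{-2\beta}\,\|f_i\|_{W^\beta_2}^2,
}
while $\|f_i+g_i\|_{L^s}$ is bounded uniformly in $r$ via $\|g_i\|_{L^\infty}\le\|g\|_{L^1}\|f_i\|_{L^\infty}$ (since $\|g_r\|_{L^1}=\|g\|_{L^1}<\infty$) and $\|g_i\|_{L^2}\le\|f_i\|_{L^2}$. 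Setting $r=C_1\,\eps^{-1/\beta}$ for a suitable constant $C_1$ depending on $q,\beta,\|f_i\|_{L^\infty},\|f_i\|_{W^\beta_2}$ therefore yields $\|p-\pp{\cdot}{\mm_\eps,\phi_\eta}\|_{L^r}\le\eps$ uniformly for $r\in[1,2]$.

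For the trace bound I would invoke the Fourier characterization
\eqals{
c_\eta\,\|g_i\|_{\hh_\eta}^2 \;=\; \int|\hat g_i(\omega)|^2\,e^{\pi^2\omega^\top\diag(1/\eta)\omega}\,d\omega,
}
and split it at an intermediate scale $r_0\in(0,3r)$. On $\|\omega\|\le r_0$ use $|\hat g_i|\le|\hat f_i|$ and bound the exponential by $e^{\pi^2 r_0^2/\eta_0}$ to obtain a low-frequency contribution controlled by $\|f_i\|_{L^2}^2$; on $r_0<\|\omega\|\le 3r$ use the Sobolev decay $|\hat f_i|^2\le r_0^{-2\beta}\|\omega\|^{2\beta}|\hat f_i|^2$ and the upper bound $e^{\pi^2\|\omega\|^2/\eta_0}\le\exp(9\pi^2 r^2/\eta_0)=\exp(C'\,\eta_0^{-1}\eps^{-2/\beta})$ to extract a prefactor of order $r_0^{-2\beta}\|f_i\|_{W^\beta_2}^2$. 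Tuning $r_0$ so that the low-frequency piece contributes an $\eta$-independent $O(1)$ constant and the high-frequency piece picks up the factor $r^{-2\beta}\sim\eps^2$ multiplying the exponential, then multiplying by $1/c_\eta=\pi^{-d/2}|\eta|^{1/2}$ and summing over $i=1,\dots,q$, yields the stated bound $\tr(\mm_\eps)\le C|\eta|^{1/2}\bigl(1+\eps^2\exp(C'\eta_0^{-1}\eps^{-2/\beta})\bigr)$. The main obstacle is this final balancing: calibrating $r_0$ so that the low-frequency piece is $\eta_0$-independent while the high-frequency piece produces precisely the $\eps^2$ prefactor requires polynomial corrections in $\eta_0$ to be absorbed into the exponential constant $C'$, which relies on the interplay between the Sobolev decay of $\hat f_i$ and the cutoff scale of $\mathcal F[g_r]$.
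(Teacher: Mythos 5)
Your proposal is correct and follows essentially the same route as the paper: the same rank-$q$ construction $\mm_\eps=\sum_{i=1}^q (f_i\star g_r)\otimes(f_i\star g_r)$ with the band-limited mollifier of \cref{lm:good-conv}, the same factorization $p-\sum_i g_i^2=\sum_i(f_i-g_i)(f_i+g_i)$ combined with H\"older (the paper proves the $L^1$ and $L^2$ cases and then interpolates), and the same Fourier-side estimate of $\|f_i\star g_r\|^2_{\hh_\eta}$. The balancing you flag as the main obstacle does go through exactly as you suspect: with $r_0\sim\sqrt{\eta_0}$ the low-frequency piece is $O(1)$ and the high-frequency prefactor $r_0^{-2\beta}=\eta_0^{-\beta}=C_1^{-2\beta}\eps^{2}\,(r^2/\eta_0)^{\beta}$ has its polynomial factor absorbed into the exponential via $u^{\beta}\le \beta!\,e^{u}$; the paper reaches the same conclusion by weighting with $(1+\|\omega\|^2)^{\beta}$ throughout and observing that $e^{\pi^2\rho^2/\eta_0}(1+\rho^2)^{-\beta}$ has a unique interior critical point (a minimum), so its supremum over the support of $\mathcal F[g_r]$ is attained at an endpoint, giving $\max\bigl(1,\,(3r)^{-2\beta}e^{9\pi^2 r^2/\eta_0}\bigr)$ directly.
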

\begin{proof}
Let $t > 0$ (to be set later) and let $g$ be defined according to \cref{eq:good-conv}. Define $g_t(x) = t^{-d}g(x/t)$. Given the properties of $g$ in \cref{lm:good-conv}, we have that $\int g_t(x) dx = 1$, $g_t \in L^1(\R^d) \cap L^2(\R^d)$, that ${\cal F}[g_t](\omega) = {\cal F}[g](t \omega)$ and so that $|{\cal F}[g_t](\omega)| = |{\cal F}[g](t\omega)| \leq {\bf 1}_{\{t\|\omega\| \leq 3\}}(\omega)$. Moreover we have that $|1 - {\cal F}[g_t](\omega)| = |1 - {\cal F}[g](t\omega)| \leq {\bf 1}_{\{t\|\omega\| \geq 1\}}(\omega)$.

Now, note that $\int (1+\|\omega\|^2)^{\beta} |{\cal F}[f](\omega)|^2 d\omega \leq 2^{2\beta}\|f\|^2_{W^\beta_2(\R^d)}$, as discussed in \cref{prop:sobolev}. 

\paragraph{Step 1. Bounding $\|f - f \star g_t\|_{L^2(\R^d)}$}\\
Since, we have seen that $|1 - {\cal F}[g_t](\omega)| \leq {\bf 1}_{t\|\omega\| \geq 1}$, then for any $f \in W^\beta_2(\R^d)$ we have
\eqals{
\|f - f \star g_t\|^2_{L^2(\R^d)} &= \|{\cal F}[f] - {\cal F}[f \star g_t]\|^2_{L^2(\R^d)} = \| {\cal F}[f](1  - {\cal F}[g_t])\|^2_{L^2(\R^d)} \\
& = \int |1 - {\cal F}[g](t \omega)|^2 |{\cal F}[f](\omega)|^2 d\omega \leq \int_{t\|\omega\| \geq 1} {\cal F}[f](\omega)^2 d\omega \\
& = \int_{t\|\omega\| \geq 1}(1+\|\omega\|^2)^{-\beta}\, (1+\|\omega\|^2)^{\beta} |{\cal F}[f](\omega)|^2 d\omega \\
&\leq \sup_{t\|\omega\| \geq 1}(1+\|\omega\|^2)^{-\beta} \, \int (1+\|\omega\|^2)^{\beta} |{\cal F}[f](\omega)|^2 d\omega \\
&= 2^{2\beta}\|f\|^2_{W^\beta_2(\R^d)} \frac{t^{2\beta}}{(1+t^2)^\beta} \leq \|f\|^2_{W^\beta_2(\R^d)} (2t)^{2\beta}.
}

\paragraph{Step 2. Bounding $\|f \star g_t\|_{\hh_\eta}$} \\
However, the function $f \star g_t$ belongs to $\hh_\eta$ for any $\eta \in \R^d_{++}$. Indeed, as discussed in \cref{ex:gaussian-rkhs}, we have that $\|u\|_{\hh_\eta}$ is characterized as
$$\|u\|^2_{\hh_\eta} = c_\eta \int |{\cal F}[u](\omega)|^2 e^{\pi^2 \omega^\top \diag(\eta)^{-1} \omega} d\omega,$$
and $u \in \hh_\eta$ iff $\|u\|_{\hh_\eta} < \infty$, with $c_\eta = \pi^{-d/2}\det(\diag(\eta))^{1/2}$. Now, let $\eta_0 = \min_{i=1..d} \eta_i$, since we have seen that $|{\cal F}[g_t](\omega)| \leq {\bf 1}_{t\|\omega\| \leq 3}(\omega)$, then we have that
\eqals{
\|f \star & g_t\|^2_{\hh_\eta} = c_\eta \int |{\cal F}[f](\omega){\cal F}[g](t\omega)|^2 e^{\omega^\top \diag(\eta)^{-1} \omega} d\omega \\
& \leq c_\eta \int |{\cal F}[f](\omega){\cal F}[g](t\omega)|^2 e^{\frac{\pi^2}{\eta_0} \|\omega\|^2} d\omega = c_\eta \int_{t\|\omega\| \leq 3} |{\cal F}[f](\omega)|^2 e^{\frac{\pi^2}{\eta_0} \|\omega\|^2} d\omega \\
& = c_\eta \int_{t\|\omega\| \leq 3} |{\cal F}[f](\omega)|^2 (1+\|\omega\|^2)^\beta \frac{e^{\frac{\pi^2}{\eta_0} \|\omega\|^2}}{(1+\|\omega\|^2)^\beta} d\omega \\
& \leq c_\eta \sup_{t \|\omega\| \leq 3} \tfrac{e^{\frac{\pi^2}{\eta_0} \|\omega\|^2}}{(1+\|\omega\|^2)^\beta} \int |{\cal F}[f](\omega)|^2 (1+\|\omega\|^2)^\beta d\omega \\
&\leq \|f\|^2_{W^\beta_2(\R^d)} c_\eta 2^{2\beta} \sup_{r \leq 3/t} \tfrac{e^{r^2\pi^2/\eta_0}}{(1+r^2)^\beta}
}

\paragraph{Step 3. Bounding $\tr(\mm_\eps)$}\\
Note that the function $\frac{1}{(1+r^2)^{\beta}}\exp(\frac{r^2 \pi^2}{\eta_0})$ has only one critical point in $r$ that is a minimum, then $\sup_{r \leq 3/t} \frac{1}{(1+r^2)^{\beta}}\exp(\frac{r^2 \pi^2}{\eta_0}) \leq \max[1,\, \frac{t^{2\beta}}{(t^2+9)^{\beta}}\exp(\frac{9\pi^2}{\eta_0t^2})] \leq 1 + (t/3)^{2\beta} \exp(\frac{89}{\eta_0 t^2})$. Now let consider the functions $f_{i,t} = f_i \star g_t$ for $i \in \{1,\dots,q\}$ and note that, by the results above $\|f_{i,t} - f_i\|_{L^2(\R^d)} \leq \|f_i\|_{W^\beta_2(\R^d)} (2t)^{\beta}$ and $\|f_{i,t}\|^2_{\hh_\eta} \leq \|f_i\|^2_{W^\beta_2(\R^d)} c_\eta 2^{2\beta}(1 + (t/3)^{2\beta} \exp(\frac{89}{\eta_0 t^2}))$. Since $f_{i,t}$ belong to the reproducing kernel Hilbert space $\hh_\eta$, define the operator $\mm_\eps$ as 
$$\mm_\eps = \sum_{i=1}^q f_{i,t}f_{i,t}^\top.$$
First note that $\mm_\eps \in \psd(\hh_\eta)$, moreover $\operatorname{rank}(\mm_\eps) = q$ and 
\eqals{
\tr(\mm_\eps) = \sum_{i=1}^q \|f_{i,t}\|^2_{\hh_\eta} \leq c_\eta 2^{2\beta}(1 + (t/3)^{2\beta} e^{\frac{89}{\eta_0 t^2}}) \sum_{i=1}^q \|f_i\|^2_{W^\beta_2(\R^d)}.
}

\paragraph{Step 4. Bounding $\|p -  \pp{x}{\mm_\eps,\phi_\eta}\|_{L^1(\R^d)}$}\\
Note that 
$$\pp{x}{\mm_\eps,\phi_\eta} = \phi_\eta(x)^\top \mm_\eps \phi_\eta(x) = \sum_{i=1}^q (f_{i,t}^\top \phi(x))^2 = \sum_{i=1}^q f_{i,t}(x)^2.$$
Then, since $a^2 - b^2 = (a-b)(a+b)$ for any $a,b \in \R$, by applying the H\"older inequality
\eqals{
\|p -  \pp{x}{\mm_\eps,\phi_\eta}\|_{L^1(\R^d)} & = \|\sum_{i=1}^q f_i^2 - f_{i,t}^2\|_{L^1(\R^d)} = \|\sum_{i=1}^q (f_i - f_{i,t})(f_i + f_{i,t})\|_{L^1(\R^d)}\\
& \leq \sum_{i=1}^q \|f_i - f_{i,t}\|_{L^2(\R^d)}(\|f_i\|_{L^2(\R^d)} + \|f_{i,t}\|_{L^2(\R^d)}),
}
finally, by the Young convolution inequality, 
$$\|f_{i,t}\|_{L^2(\R^d)} = \|f_i \star g_t\|_{L^2(\R^d)} \leq \|f_i\|_{L^2(\R^d)} \|g_t\|_{L^1(\R^d)}.$$
By the change of variable $x = z t$, $dx = t^d dz$, we have
\eqals{
\|g_t\|_{L^1(\R^d)} = \int |g_t(tx)| dx =  \int t^{-d}|g(x/t)|dx  = \int |g(z)| dz = \|g\|_{L^1(\R^d)}.
}
then we obtain
\eqals{
\|p -  \pp{x}{\mm_\eps,\phi_\eta}\|_{L^1(\R^d)} ~\leq~ (2t)^\beta ~ (1 + \|g\|_{L^1(\R^d)})\sum_{i=1}^q \|f_i\|_{W^\beta_2(\R^d)}\|f_i\|_{L^2(\R^d)}.
}

\paragraph{Step 5. Bounding $\|p -  \pp{x}{\mm_\eps,\phi_\eta}\|_{L^2(\R^d)}$}\\
With the same reasoning above, we have
\eqals{
\|p -  \pp{x}{\mm_\eps,\phi_\eta}\|_{L^2(\R^d)} & = \|\sum_{i=1}^q f_i^2 - f_{i,t}^2\|_{L^2(\R^d)} = \|\sum_{i=1}^q (f_i - f_{i,t})(f_i + f_{i,t})\|_{L^2(\R^d)}\\
& \leq \sum_{i=1}^q \|f_i - f_{i,t}\|_{L^2(\R^d)}(\|f_i\|_{L^\infty(\R^d)} + \|f_{i,t}\|_{L^\infty(\R^d)})
}
finally, by the Young convolution inequality, 
$$\|f_{i,t}\|_{L^\infty(\R^d)} = \|f_i \star g_t\|_{L^\infty(\R^d)} \leq \|f_i\|_{L^\infty(\R^d)} \|g_t\|_{L^1(\R^d)}.$$
Then,
\eqals{
\|p -  \pp{x}{\mm_\eps,\phi_\eta}\|_{L^2(\R^d)} \leq (2t)^\beta (1+\|g\|_{L^1(\R^d)})\sum_{i=1}^q \|f_i\|_{W^\beta_2(\R^d)}\|f_i\|_{L^\infty(\R^d)}
}

\paragraph{Step 6. Setting $t$ appropriately}\\
Finally, noting that by construction $\|f\|_{L^2(\R^d)} \leq \|f\|_{W^\beta_2(\R^d)}$ and setting
\eqals{
t = \left(\frac{\eps}{C_1}\right)^{\frac{1}{\beta}}, ~~~ C_1 = 2^\beta (1 + \|g\|_{L^1(\R^d)}) \sum_{i=1}^q \|f_i\|_{W^\beta_2(\R^d)} \max(\|f_i\|_{L^\infty(\R^d)}, \|f_i\|_{W^\beta_2(\R^d)})
}
then, $\|p -  \pp{x}{\mm_\eps,\phi_\eta}\|_{L^j(\R^d)} \leq \eps$, with $j=1,2$. By Littlewood's interpolation inequality, $\|\cdot\|_{L^r(\R^d)} \leq \|\cdot\|^{(2/r)-1}_{L^1(\R^d)}\|\cdot\|^{2-(2/r)}_{L^2(\R^d)}$ when $r \in [1,2]$ (see, e.g, Thm. 8.5 pag 316 of \cite{castillo2016introductory}), we have
\eqals{
\|p -  \pp{x}{\mm_\eps,\phi_\eta}\|_{L^r(\R^d)} \leq \eps, \quad \forall r \in [1,2].
}
By setting $C_2 = 2^{2\beta} \sum_{i=1}^q \|f_i\|^2_{W^\beta_2(\R^d)}$, we have
$$\tr(\mm_\eps) \leq c_\eta C_2(1 + e^{\frac{89}{\eta_0 t^2}}(t/3)^{2\beta}) \leq c_\eta C_2 (1 + \tfrac{3^{-2\beta}}{C_1^2}\eps^2e^{\frac{89}{\eta_0}(\frac{C_1}{\eps})^{2/\beta}}) \leq C |\eta|^{1/2}(1 +  \eps^2e^{\frac{C'}{\eta_0 \eps^{2/\beta}}}),$$
where $|\eta| = \det(\diag(\eta))$ and $C' = 89C_1^{2/\beta}$ and $C = \pi^{-d/2} C_2 \max(1, 3^{-2\beta}/C_1^2)$.
\end{proof}

\subsection{Proof of \cref{thm:approximation}}
\label{proof:thm:approximation}

We can now prove \cref{thm:approximation}. We will prove a more general result \cref{thm:approximation-advanced}, from which \cref{thm:approximation} follows when $R = 1$ and $\X = (-1,1)^d$ applied to $\tilde{f}_1,\dots, \tilde{f}_q \in W^\beta_2(\R^d) \cap L^\infty(\R^d)$ that are the extension to $\R^d$ of the functions $f_1,\dots,f_q$ characterizing $p$ via \cref{asm:psd-model}. The details of the extension are in \cref{cor:extension-intersection}

\begin{theorem}\label{thm:approximation-advanced}
Let $R > 0$ and let $\X \subseteq T = (R,R)^d$ be a non-empty open set with Lipschitz boundary.
Let $f_1,\dots, f_q \in W^\beta_2(\R^d) \cap L^\infty(\R^d)$ and let $p = \sum_{i=1}^q \tilde{f}_i^2$.
Then, for any $\eps \in (0, 1/e]$, there exists $m \in \N$, $\eta \in \R^d_{++}$, a base point matrix $\tilde{X}\in \R^{m \times d}$ and a matrix $A \in \psd^m$ such that $\|\pp{\cdot}{A, \tilde{X}, \eta} - p\|_{L^2(\X)} \leq 2\eps$, with
\eqal{
m^{1/d} ~\leq~ C ~+~ C'\log\tfrac{1+R}{\eps} ~+~ C''\,R\eps^{-\frac{1}{\beta}}(\log\tfrac{(1+R)}{\eps})^{\frac{1}{2}} 
}
where $C,C',C''$ depend only on $\X, \beta, d, \|f_j\|_{W^\beta_2(\R^d)}, \|f_j\|_{L^\infty(\R^d)}$ for $j=1,\dots,q$. This implies that there exists a model of dimension $m$ such that $\|\pp{\cdot}{A, \tilde{X}, \eta} - p\|_{L^2(\X)} \leq \eps$, 
\eqal{
m = O\left(R^d\eps^{-d/\beta} (\log\tfrac{1+R}{\eps})^{d/2}\right).
}
\end{theorem}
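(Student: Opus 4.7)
The strategy is to combine the two tools already established: \cref{thm:Meps}, which produces a rank-$q$ operator $\mm_\eps \in \psd(\hh_\eta)$ whose PSD model $\pp{\cdot}{\mm_\eps, \phi_\eta}$ approximates $p$ in $L^2(\R^d)$ (hence in $L^2(\X)$) up to $\eps$, and \cref{thm:compr-fill-dist}, which compresses such an operator into a Gaussian PSD model over a finite point set, at a cost controlled by the fill distance. The key step is the choice of the precision $\eta$, which must balance the trace of $\mm_\eps$ against the quantity $\sigma = \min(R, 1/\sqrt{\eta_+})$ that governs the compression error.

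First I would apply \cref{thm:Meps} with $\eta = \eta_+ \mathbf{1}_d$, where $\eta_+ = \kappa\, \eps^{-2/\beta}/\log((1+R)/\eps)$, choosing the constant $\kappa = \kappa(\beta, d)$ large enough so that the factor $\eps^2 \exp(C'\eta_0^{-1}\eps^{-2/\beta}) = \eps^{2-C'/\kappa}(1+R)^{C'/\kappa}$ stays below a mild polynomial in $(1+R)/\eps$ absorbable by subsequent logarithmic slack. This produces an operator $\mm_\eps$ satisfying
\begin{equation*}
\|\pp{\cdot}{\mm_\eps,\phi_\eta} - p\|_{L^2(\X)} \leq \eps, \qquad \tr(\mm_\eps) = O\!\left(\eps^{-d/\beta}\,(\log((1+R)/\eps))^{-d/2}\right).
\end{equation*}
The naive choice $\eta_+ \asymp \eps^{-2/\beta}$ would give $\tr(\mm_\eps) = O(\eps^{-d/\beta})$ but would cost an extra $(\log)^{d/2}$ in the final count, so this mild logarithmic shaving of $\eta_+$ is the crucial quantitative adjustment.

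Second I would place the base points $\tilde x_1, \dots, \tilde x_m$ on a regular grid in $T = (-R, R)^d$ with spacing $s$, so that the fill distance is $h = s\sqrt d/2$ and $m \leq (2R/s + 1)^d$; the compressed matrix is $A = K_{\tilde X, \tilde X, \eta}^{-1} \tilde Z \mm_\eps \tilde Z^{*} K_{\tilde X, \tilde X, \eta}^{-1}$, in analogy with \cref{eq:compression}. Squaring the pointwise bound of \cref{thm:compr-fill-dist} and integrating over $\X$, using $\vol(\X) \leq (2R)^d$, the identity $q_\eta = 1$ (since all $\eta_i$ are equal), $\|\mm_\eps\| \leq \tr(\mm_\eps)$, and $\|\pp{\cdot}{\mm_\eps,\phi_\eta}\|_{L^1(\X)} \leq \|p\|_{L^1(\R^d)} + \eps$, yields
\begin{equation*}
\|\pp{\cdot}{\mm_\eps,\phi_\eta} - \pp{\cdot}{A,\tilde X,\eta}\|_{L^2(\X)}^2 \leq C_1 \tr(\mm_\eps)\, e^{-2\tau\log\tau} + C_2 R^d\, \tr(\mm_\eps)^2\, e^{-4\tau\log\tau},
\end{equation*}
with $\tau = c\sigma/h$. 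Requiring this to be $\leq \eps^2$ reduces to $\tau\log\tau \geq C_3\log((1+R)/\eps)$, which is met by the choice $\tau = C_0 \log((1+R)/\eps)$, thereby fixing $h$ to be at most $c\sigma/\tau$.

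Finally, the triangle inequality gives $\|p - \pp{\cdot}{A, \tilde X, \eta}\|_{L^2(\X)} \leq 2\eps$, and $m^{1/d} \leq R\sqrt d/h + 1$ yields the claimed bound. In the regime $\sigma = 1/\sqrt{\eta_+} = \eps^{1/\beta}(\log((1+R)/\eps)/\kappa)^{1/2}$ (valid when $R \geq \eta_+^{-1/2}$), one gets $R/h = O(R\,\eps^{-1/\beta}\,(\log((1+R)/\eps))^{1/2})$, producing the dominant third term. In the complementary regime $\sigma = R$ the requirement collapses to $R/h = O(\log((1+R)/\eps))$, producing the second term; the additive $C$ accounts for the $+1$ of the discretization and for the applicability condition $h \leq \sigma/C'$ of \cref{thm:compr-fill-dist}. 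The main technical obstacle is precisely the $\eta$-balancing described above: decreasing $\eta$ inflates $\tr(\mm_\eps)$, while increasing it shrinks $\sigma$ and forces a smaller $h$; the logarithmic shaving of $\eta_+$ is exactly what trades one factor of $\log$ for its square root, matching the target $(\log((1+R)/\eps))^{d/2}$.
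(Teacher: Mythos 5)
Your proposal is correct and follows essentially the same route as the paper's proof: \cref{thm:Meps} with the precision $\eta_+ \asymp \eps^{-2/\beta}/\log\frac{1+R}{\eps}$ (the same logarithmic shaving the paper uses, with $\kappa = C_2/2$), followed by compression onto an $h$-net of $T$ via \cref{thm:compr-fill-dist} with $c\sigma/h \asymp \log\frac{1+R}{\eps}$, and the same two-regime analysis of $R/\sigma$ in the final count. The only cosmetic differences are that you square and integrate the pointwise compression bound where the paper applies the $L^2$ triangle inequality to $\alpha\sqrt{p_\eps}+\beta$ directly, and that you use a regular grid instead of a minimal covering; neither changes the argument.
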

\begin{proof}
Let $\eps \in (0,1/e]$ and $\eta = \tau {\bf 1}_d \in \R^d$ with $\tau > 0$ and $m \in \N$. Let $\mm_\eps \in \psd(\hh_{\eta})$ be the operator constructed in \cref{thm:Meps}. We consider the compression of the model $p_\eps = \pp{\cdot}{\mm_\eps, \phi_\eta}$ as in \cref{thm:compr-fill-dist}. In particular, let $\tilde{x}_1,\dots, \tilde{x}_m$ be a covering of $T$ with $\ell_2$.
We consider the following model $\tilde{p}_m = \pp{\cdot}{A_m, \tilde{X}_m, \eta}$ where $\tilde{X}_m \in \R^{m \times d}$ is the base point matrix whose $j$-th row  is the point $\tilde{x}_j$ for $j=1,\dots,m$,  and where $A_m \in \psd^m$ is defined as
\eqals{
A_m ~=~ K_{\tilde{X},\tilde{X},\eta}^{-1} \,\tilde{Z} \mm_{\eps} \tilde{Z}^*\, K_{\tilde{X},\tilde{X},\eta}^{-1},
}
where $\tilde{Z}:\hh_\eta \to \R^m$ is defined in \cref{eq:tildeZ-definition} and its adjoint $\tilde{Z}^*$ has range equal to $\operatorname{span}\{\phi_{\eta}(\tilde{x}_1),\dots,\phi_{\eta}(\tilde{x}_1)\}$. Note that, by construction $\tilde{Z}\, \mm_{\eps} \tilde{Z}^* \in \psd^m$ and so $A_m \in \psd^m$.

\paragraph{Step 1. Approximation error decomposition}
We will split the approximation error as follows,
\eqals{
\begin{split}
\|\tilde{p}_m - p\|_{L^2(\X)} &~\leq~ \|\tilde{p}_m - p_\eps \|_{L^2(\X)} +  \|p_\eps - p\|_{L^2(\X)}. 
\end{split}
}
Note that for the second term, by \cref{thm:Meps}, we have
\eqal{\label{eq:normL12-appr}
\|p_\eps - p\|_{L^r(\X)} \leq \|p_\eps - p\|_{L^r(\R^d)} \leq \eps, \qquad \forall~r \in [1,2]
}

\paragraph{Step 2. Error induced by a projection}
Since an $h$-covering of a set has fill distance $h$, by definition of fill distance \cref{eq:fill-distance}, then we will choose the $m$ base points $\tilde{x}_1, \dots, \tilde{x}_m$ to be an $h$-covering of the hypercube $T$. Since the $\ell_2$-ball of diameter $1$ contains a cube of side $1/\sqrt{d}$, it is possible to cover a cube of side $2R$ with $m \leq (1+ 2R\sqrt{d}/h)^d$ balls of diameter $2h$ (and so of radius $h$), see, e.g., Thm. 5.3, page 76 of \cite{cucker2007learning}. 
Now, by \cref{thm:compr-fill-dist} applied to $\mm_\eps$, we have that when the fill distance $h$ (defined in \cref{eq:fill-distance}) satisfies $h \leq \sigma/C'$ with $\sigma = \min(R,1/\sqrt{\tau})$, then
\eqals{
|p_\eps(x) - \tilde{p}_m(x)| ~\leq~ 2C \sqrt{\|\mm_\eps\| p_\eps(x)}\, e^{-\frac{c\,\sigma}{h}\log\frac{c\,\sigma}{h}}  \,+\, C^2\|\mm_\eps\|\,e^{-\frac{2c\,\sigma}{h}\log\frac{c\,\sigma}{h}}, \quad \forall x \in \X
}
with $c, C, C'$ depending only on $d$. Now denoting by $\alpha = 2C \sqrt{\|\mm_\eps\|} e^{-\frac{c\,\sigma}{h}\log\frac{c\,\sigma}{h}}$ and $\beta = C^2\|\mm_\eps\|\,e^{-\frac{2c\,\sigma}{h}\log\frac{c\,\sigma}{h}}$, we have
\eqals{
\|p_\eps - \tilde{p}_m\|_{L^2(\X)} &\leq \|\alpha \sqrt{p_\eps} + \beta\|_{L^2(\X)} \leq \alpha \|\sqrt{p_\eps}\|_{L^2(\X)} + \beta \|{\bf}1\|_{L^2(\X)} \\
& \leq (2R)^{d/2}\beta + \alpha \|p_\eps^{1/2}\|^2_{L^2(\X)}.
}
where we used the fact that $\|{\bf 1}\|^2_{L^2(\X)} \leq \|{\bf 1}\|^2_{L^2(T)} = \int_T dx = (2R)^{d}$.

\paragraph{Step 3. Final bound}
First, note that by \cref{thm:Meps}
\eqals{
\tr(\mm_{\eps}) \leq  C_1 \tau^{d/2}(1 + \eps^2\exp(\tfrac{C_2}{\tau} \eps^{-\frac{2}{\beta}}))
}
where $C_1, C_2$ are independent on $\eps, \tau$ and depend only on $\beta, d, \|f_i\|_{W^\beta_2(\R^d)}, \|f_i\|_{L^\infty(\R^d)}$. By setting $\tau = \frac{C_2\eps^{-2/\beta}}{2\log\frac{1+R}{\eps}}$ we have
\eqals{
\|\mm_\eps\| \leq \tr(\mm_{\eps}) \leq  C_1 \tau^{d/2}(1 + \eps^2\exp(\tfrac{C_2}{\tau} \eps^{-\frac{2}{\beta}})) \leq (1 + R)^2 C_3 \eps^{-d/\beta}.
}
with $C_3 = 2^{-d/2} C_1 C_2^{d/2}$. Then, note that $\|p_\eps^{1/2}\|_{L^2(\X)} = \|p_\eps\|^{1/2}_{L^1(\X)}$, so, using \cref{eq:normL12-appr}, we have
\eqals{
\|p_\eps\|_{L^1(\X)} \leq \|p_\eps - p\|_{L^1(\X)} + \|p\|_{L^1(\X)} \leq 1 + \eps \leq 2.
}
By choosing $h = c \sigma/s$ with $s = \max(C', (1+\frac{d}{2\beta}) \log \frac{1}{\eps} \, + (1+\frac{d}{4}) \log(1+R) \, + \log(C \sqrt{C_3}) + e)$, since $s \geq e$, then $\log s \geq 1$, so
\eqals{
C e^{-\frac{c\,\sigma}{h}\log\frac{c\,\sigma}{h}} = Ce^{-s \log s} \leq Ce^{-s} \leq \tfrac{1}{15\sqrt{C_3}} (1+R)^{-d/4}\eps^{1 + \frac{d}{2\beta}}.
}
Gathering the results from the previous steps, we have
\eqals{
\|\tilde{p}_m - p\|_{L^2(\X)} &\leq \eps  + 4 \|\mm_\eps\|^{1/2} C e^{-\frac{c\,\sigma}{h}\log\frac{c\,\sigma}{h}}  ~+~ \|\mm_\eps\| R^{d/2} C^2 e^{-\frac{c\,\sigma}{h}\log\frac{c\,\sigma}{h}}\\
& \leq \eps + \tfrac{4}{15}(1+R)^{-(d/4)}\eps + \tfrac{1}{225} \big(\tfrac{R}{R+1})^{d/2} \eps^2 \\
& \leq 2\eps.
}
To conclude we recall the fact that $\tilde{x}_1, \dots, \tilde{x}_m$ is a $h$-covering of $T$, guarantees that the number of centers $m$ in the covering satisfies
\eqals{
m \leq (1 + \tfrac{2R\sqrt{d}}{h})^d.
}
Then, since $h \geq c\sigma/(C_4 \log\frac{C_5 \log (1+R)}{\eps})$ with $C_4 = 1 + d/\min(2\beta,4)$ and $C_5 = (C\sqrt{C_3}e)^{1/C_4}$, and since $\sigma = \min(R,1/\sqrt{\tau})$, then $R/\sigma = \max(1, R \sqrt{\tau}) \leq 1 + \sqrt{C_2/2} \eps^{-1/\beta}(\log\frac{1+R}{\eps})^{-1/2}$, so we have
\eqals{
m^{\frac{1}{d}} &\leq 1+ 2R\sqrt{d}/h \leq 1 + C_4 \big(1 +  R\sqrt{d} (\tfrac{C_2}{2})^{1/2}\eps^{-\frac{1}{\beta}}(\log\tfrac{1+R}{\eps})^{-\frac{1}{2}}\big) \log\tfrac{C_5(1+R)}{\eps}\\
& = (1 + C_4 \log C_5) + C_4\log\tfrac{1+R}{\eps} ~+~ R C_4\sqrt{d} (\tfrac{C_2}{2})^{\frac{1}{2}}\eps^{-\frac{1}{\beta}}(\log\tfrac{1+R}{\eps})^{-\frac{1}{2}}\log\tfrac{C_5(1+R)}{\eps} \\
&\leq C_6 ~+~ C_4\log\tfrac{1+R}{\eps} ~+~ RC_7\eps^{-\frac{1}{\beta}}(\log\tfrac{(1+R)}{\eps})^{\frac{1}{2}}
}
with $C_6 = 1 + C_4 \log C_5$, $C_7 = C_4\sqrt{d}(\tfrac{C_2}{2})^{1/2} \log(eC_5)$, since $\log(e C_5) \geq 1$, then \eqals{
\log\tfrac{C_5(1+R)}{\eps} & = \log eC_5 + \log\tfrac{(1+R)}{e\eps} \leq (\log e C_5) (1 + \tfrac{\log\tfrac{(1+R)}{e\eps}}{\log e C_5}) \\
& \leq (\log e C_5) (1 + \log\tfrac{(1+R)}{e\eps}) \leq  (\log e C_5) (1 + \log\tfrac{(1+R)}{\eps}).
}
The constants $C, C', C''$ in the statement of the theorem correspond to $C = C_6, C' = C_4, C'' = C_7$.
\end{proof}

\section{Learning a PSD model from examples}
\label{app:learning}

In this section we provide a proof for \cref{thm:learning}, which characterizes the learning capabilities of PSD models. We first provide intermediate results that will be useful for the proof. 

Let $\X$ be a compact space and let $p:\X \to \R$ be a probability density which we assume to belong to $p \in L^2(\X)$. Let $x_1,\dots, x_n$ sampled i.i.d. according to $p$. We will study an estimator for $p$ in terms of the squared $L2$ norm $\|\cdot\|_{L^2(\X)}$.
Let $\eta = \eta_0 1_d$ with $\eta_0 > 0$ and $\tilde{X} \in \R^{m \times d}$ the base point matrix whose rows are some points $\tilde{x}_1,\dots, \tilde{x}_m$. We will consider the following estimator $\hat{p}$ for $p$
\eqal{
\hat{p}(x) = \pp{x}{\hat{A}, \tilde{X}, \eta}, \qquad \hat{A} = \min_{A \in \psd^m} \hat{L}_\la(A),
}
and, denoting by $\tilde{R}$ the Cholesky decomposition of $K_{\tilde{X},\tilde{X},\eta}$, i.e. the upper triangular matrix such that $K_{\tilde{X},\tilde{X},\eta} = \tilde{R}^\top \tilde{R}$, we define
\eqal{\label{eq:estimator-L2}
\hat{L}_\la(A) = \int_{\X} \pp{x}{A, \tilde{X},\eta}^2 dx - \frac{2}{n} \sum_{i=1}^n {\bf 1}_\X(x_i)\pp{x_i}{A, \tilde{X},\eta} + \la \|\tilde{R} A \tilde{R}^\top\|^2_F.
}
Denote by $L_\la(A)$ the following functional
\eqals{
L_\la(A) = \int_\X \pp{x}{A, \tilde{X},\eta}^2 dx - \int_\X \pp{x}{A, \tilde{X},\eta} p(x) dx + \la \|\tilde{R} A \tilde{R}^\top\|^2_F.
}
and by $\bar{A}_{\eta,\la} \in \psd^m$ the matrix $\bar{A}_{\eta,\la} = \min_{A \in \psd} L_\la(A)$.

\subsection{Operatorial characterization of $\hat{L}_\la, L_\la$}

We can now rewrite the loss functions as follows
\begin{lemma}[Characterization of $\hat{L}_\la, L_\la$ in terms of $\hat{v}, v$]\label{lm:char-Lla}
For any $\la \geq 0$ the following holds
\eqals{
\hat{L}_\la(A) & = \|S \vec(\tilde{Z}^*A\tilde{Z})\|^2_{L^2(\X)} + \la \| \vec(\tilde{Z}^*A\tilde{Z})\|^2_{{\cal G}_\eta} -  2\scal{\hat{v}}{\vec(\tilde{Z}^*A\tilde{Z})}_{{\cal G}_\eta}\\
L_\la(A) & = \|S \vec(\tilde{Z}^*A\tilde{Z})\|^2_{L^2(\X)} + \la \| \vec(\tilde{Z}^*A\tilde{Z})\|^2_{{\cal G}_\eta} - 2\scal{v}{\vec(\tilde{Z}^*A\tilde{Z})}_{{\cal G}_\eta}.
}
\end{lemma}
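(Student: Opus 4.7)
The plan is to verify all four terms by reducing everything to the vectorized operator $\vec(\mm) \in {\cal G}_\eta$, where $\mm = \tilde{Z}^* A \tilde{Z} \in \psd(\hh_\eta)$ is the operator associated to $A$ via \cref{eq:ZAZ}. The proof is essentially a bookkeeping exercise: each of the four ingredients of $\hat L_\la$ and $L_\la$ is a linear or quadratic form in $\vec(\mm)$, and all we need is to recognize it using the inner product characterization $\scal{\vec(\mm)}{v \otimes u}_{{\cal G}_\eta} = v^\top \mm u$ recalled in \cref{sec:operators-definition}.

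First I would translate the PSD model into inner-product form. By \cref{eq:ZAZ}, $\pp{x}{A,\tilde X,\eta} = \phi_\eta(x)^\top \mm\, \phi_\eta(x) = \scal{\vec(\mm)}{\phi_\eta(x) \otimes \phi_\eta(x)}_{{\cal G}_\eta} = \scal{\vec(\mm)}{\psi_\eta(x)}_{{\cal G}_\eta}$, which by the definition of $S$ equals $(S\vec(\mm))(x)$ for $x \in \X$. Squaring and integrating over $\X$ then gives $\int_\X \pp{x}{A,\tilde X,\eta}^2 dx = \|S\vec(\mm)\|_{L^2(\X)}^2$, handling the quadratic data-fitting term.

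Next I would handle the linear terms. Since the $x_i$ are sampled from $p$ and therefore lie in $\X$ almost surely, $\tfrac{1}{n}\sum_i {\bf 1}_\X(x_i)\pp{x_i}{A,\tilde X,\eta} = \tfrac{1}{n}\sum_i \scal{\vec(\mm)}{\psi_\eta(x_i)}_{{\cal G}_\eta} = \scal{\vec(\mm)}{\hat v}_{{\cal G}_\eta}$ by linearity and the definition of $\hat v$. Similarly, $\int_\X \pp{x}{A,\tilde X,\eta}\, p(x)\,dx = \scal{\vec(\mm)}{v}_{{\cal G}_\eta}$ by definition of $v$. Finally, for the regularizer, use $K_{\tilde X,\tilde X,\eta} = \tilde R^\top \tilde R = \tilde Z \tilde Z^*$ and cyclicity of the trace to get $\|\tilde R A \tilde R^\top\|_F^2 = \tr(A K A K) = \tr(\tilde Z^* A \tilde Z \tilde Z^* A \tilde Z) = \|\mm\|_{\rm HS}^2 = \|\vec(\mm)\|_{{\cal G}_\eta}^2$, where the last equality uses that $\vec$ is a unitary map from Hilbert--Schmidt operators to ${\cal G}_\eta$.

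Summing the four identifications yields the two displayed formulas for $\hat L_\la(A)$ and $L_\la(A)$. I do not anticipate any obstacle: the statement is a direct rewriting of definitions, and the only small point that deserves an explicit line is the almost-sure equality ${\bf 1}_\X(x_i) = 1$ that allows the empirical term to be written simply as $\scal{\hat v}{\vec(\mm)}_{{\cal G}_\eta}$.
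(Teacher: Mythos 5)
Your proposal is correct and follows essentially the same route as the paper's proof: rewrite $\pp{x}{A,\tilde X,\eta}$ as $\scal{\psi_\eta(x)}{\vec(\tilde Z^*A\tilde Z)}_{{\cal G}_\eta}$, identify the quadratic term with $\|S\vec(\tilde Z^*A\tilde Z)\|^2_{L^2(\X)}$ via $Q=S^*S$, the linear terms with $\hat v$ and $v$, and the regularizer with $\tr(AKAK)=\|\vec(\tilde Z^*A\tilde Z)\|^2_{{\cal G}_\eta}$ using $\tilde Z\tilde Z^*=K$ and cyclicity of the trace. Your explicit remark about ${\bf 1}_\X(x_i)=1$ almost surely is a minor point the paper glosses over, and is handled correctly.
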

\begin{proof}
With the notation \cref{sect:notation} and by using the operators defined in \cref{sec:operators-definition} for any $\mm \in \psd(\hh_\eta)$ we have
\eqal{
\pp{x}{\mm, \phi_\eta} = \scal{\psi_\eta(x)}{\vec(\mm)}_{{\cal G}_\eta}, \quad \forall x \in \R^d
}
and in particular for any $A \in \psd^m$, we have
\eqals{\label{eq:equivalence-pp}
\pp{x}{A,\tilde{X},\eta} = \pp{x}{\tilde{Z}^*A\tilde{Z}, \phi_\eta} = \scal{\psi_\eta(x)}{\vec(\tilde{Z}^*A\tilde{Z})}_{{\cal G}_\eta}, \quad \forall x \in \R^d
}
Now note that, by cyclicity of the trace, for any matrix $A,B \in \R^{m\times m}$ we have
\eqals{
\|B^{1/2}AB^{1/2}\|^2_F = \tr(B^{1/2}AB^{1/2}B^{1/2}AB^{1/2}) = \tr(ABAB).
}
This implies that $\|K_{\tilde{X},\tilde{X},\eta}^{1/2}AK_{\tilde{X},\tilde{X},\eta}^{1/2}\|_F^2 = \tr(AK_{\tilde{X},\tilde{X},\eta}AK_{\tilde{X},\tilde{X},\eta})$. Moreover, by cyclicity of the trace, definition of Frobenius norm and since $\tilde{Z}\tilde{Z}^* = K_{\tilde{X},\tilde{X},\eta}$ we have
\eqals{
\tr(AK_{\tilde{X},\tilde{X},\eta}AK_{\tilde{X},\tilde{X},\eta}) &= \tr(A\tilde{Z}\tilde{Z}^*A\tilde{Z}\tilde{Z}^*) = \tr(\tilde{Z}^*A\tilde{Z}\tilde{Z}^*A\tilde{Z}) \\
& = \scal{\vec(\tilde{Z}^*A\tilde{Z})}{\vec(\tilde{Z}^*A\tilde{Z})}_{{\cal G}_\eta} = \|\vec(\tilde{Z}^*A\tilde{Z})\|^2_{{\cal G}_\eta}.
}

By linearity of the integral and the inner product and since $\phi_\eta$ and so $\psi_\eta$ are uniformly bounded,
\eqals{
\int_\X \pp{x}{\mm, \phi_\eta}^2 dx &= \int_\X \scal{\vec(\mm)}{(\psi_\eta(x)\psi_\eta(x)^\top)\vec(\mm)}_{{\cal G}_\eta} dx \\
& =  \scal{\vec(\mm)}{\left(\int_\X \psi_\eta(x)\psi_\eta(x)^\top dx\right)\vec(\mm)}_{{\cal G}_\eta} \\
& = \scal{\vec(\mm)}{Q \vec(\mm)}_{{\cal G}_\eta} =  \scal{\vec(\mm)}{S^*S \vec(\mm)}_{{\cal G}_\eta} \\
& =\scal{S\vec(\mm)}{S \vec(\mm)}_{{\cal G}_\eta} =  \|S \vec(\mm)\|^2_{L^2(\X)}
}
Then, we have
\eqals{
\hat{L}_\la(A) &= \|S \vec(\tilde{Z}^*A\tilde{Z})\|^2_{L^2(\X)} - 2\scal{\hat{v}}{\vec(\tilde{Z}^*A\tilde{Z})}_{{\cal G}_\eta}  + \la \|\vec(\tilde{Z}^*A\tilde{Z})\|^2_{{\cal G}_\eta}.
}
The identical reasoning holds for $L_\la(A)$, with respect to $v$.
\end{proof}

\begin{theorem}[Error decomposition]\label{thm:error-dec}
Let $\hat{A}$ be a minimizer of $\hat{L}_\la$ over a set $S \subseteq \R^{m\times m}$ (non-necessarily convex). Denote by $\mu(A)$ the vector $\mu(A) = \vec(\tilde{Z}^*A\tilde{Z}) \in {\cal G}_\eta$ for any $A \in S$. Then for any $A \in S$ the following holds
\eqal{\label{eq:error-dec-final}
\begin{split}
\big(\|S \mu(\hat{A}) - p\|^2_{L^2(\X)} + \la\|\mu(\hat{A})\|^2_{{\cal G}_\eta}\big)^{1/2} &~~\leq~~ \big(\|S \mu(A) - p\|^2_{L^2(\X)} + \la\|\mu(A)\|^2_{{\cal G}_\eta}\big)^{1/2}\\
&\qquad\quad~+~ 3\sqrt{2}\|(Q+\la I)^{-1/2}(\hat{v} - v)\|_{{\cal G}_\eta}.
\end{split}
}
\end{theorem}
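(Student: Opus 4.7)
The plan is to exploit the near-quadratic structure of both $\hat L_\la$ and $L_\la$ (as revealed by \cref{lm:char-Lla}) so that the optimality of $\hat A$ against $\hat L_\la$ can be transferred to an optimality statement about the ``true'' regularized population risk, modulo a noise term controlled by $\hat v - v$. Let $\mu(A) = \vec(\tilde Z^* A\tilde Z)\in{\cal G}_\eta$ and $\Phi_\la(A) := \|S\mu(A)-p\|_{L^2(\X)}^2 + \la\|\mu(A)\|_{{\cal G}_\eta}^2$. First I would use $v = S^*p$ and $Q = S^*S$ to rewrite $L_\la$ as $L_\la(A) = \Phi_\la(A) - \|p\|_{L^2(\X)}^2$, so that controlling $\Phi_\la(\hat A)-\Phi_\la(A)$ is the same as controlling $L_\la(\hat A)-L_\la(A)$.

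Next I would use the minimality of $\hat A$: $\hat L_\la(\hat A)\le \hat L_\la(A)$ on $S$, together with the identity $\hat L_\la(A') - L_\la(A') = -2\scal{\hat v - v}{\mu(A')}_{{\cal G}_\eta}$ (visible from \cref{lm:char-Lla}), to deduce
\begin{equation*}
\Phi_\la(\hat A) \;\le\; \Phi_\la(A) \;+\; 2\scal{\hat v - v}{\mu(\hat A)-\mu(A)}_{{\cal G}_\eta}.
\end{equation*}
Then I would control the cross term by inserting $(Q+\la I)^{1/2}(Q+\la I)^{-1/2}$ and applying Cauchy--Schwarz:
\begin{equation*}
\scal{\hat v - v}{\mu(\hat A)-\mu(A)}_{{\cal G}_\eta} \;\le\; \|(Q+\la I)^{-1/2}(\hat v - v)\|_{{\cal G}_\eta}\cdot \|(Q+\la I)^{1/2}(\mu(\hat A)-\mu(A))\|_{{\cal G}_\eta}.
\end{equation*}
Since $\|(Q+\la I)^{1/2}\mu\|^2_{{\cal G}_\eta} = \|S\mu\|^2_{L^2(\X)} + \la\|\mu\|^2_{{\cal G}_\eta}$, the triangle inequality together with $(a+b)^2\le 2a^2+2b^2$ applied to the two pieces gives
\begin{equation*}
\|(Q+\la I)^{1/2}(\mu(\hat A)-\mu(A))\|_{{\cal G}_\eta} \;\le\; \sqrt{2}\bigl(\Phi_\la(\hat A)^{1/2} + \Phi_\la(A)^{1/2}\bigr).
\end{equation*}

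Writing $X=\Phi_\la(\hat A)^{1/2}$, $Y=\Phi_\la(A)^{1/2}$, $\delta=\|(Q+\la I)^{-1/2}(\hat v - v)\|_{{\cal G}_\eta}$, the chain above yields $X^2 \le Y^2 + 2\sqrt{2}\,\delta\,(X+Y)$. Completing the square gives $(X-\sqrt{2}\delta)^2 \le (Y+\sqrt{2}\delta)^2$, hence $X \le Y + 2\sqrt{2}\delta$, which is stronger than (and therefore implies) the claimed bound with constant $3\sqrt{2}$. The only mildly delicate step is step~4, since $X$ appears on both sides of the inequality before completing the square; this is exactly what forces the quadratic-in-$X$ / linear-in-$X$ algebra and the reason the final constant is not simply $2$. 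Everything else is bookkeeping using the operator identities from \cref{sec:operators-definition} and the rewriting in \cref{lm:char-Lla}; in particular, convexity of the feasible set $S$ is not needed, as $\hat A$ only needs to achieve the minimum of $\hat L_\la$ on $S$.
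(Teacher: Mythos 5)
Your proof is correct and reaches the stated conclusion (in fact with the sharper constant $2\sqrt{2}$ in place of $3\sqrt{2}$), but the middle of the argument is organized differently from the paper's. The paper first compares $\hat{A}$ to the population minimizer $\bar{A}$ of $L_\la$, expands $\|S\hat\mu-p\|^2_{L^2(\X)}$ and $\|\hat\mu\|^2_{{\cal G}_\eta}$ around $\bar\mu$ to isolate $\|S(\hat\mu-\bar\mu)\|^2_{L^2(\X)}+\la\|\hat\mu-\bar\mu\|^2_{{\cal G}_\eta}$, bounds three cross terms separately by Cauchy--Schwarz, solves the two-variable inequality $x^2+y^2\le 2\alpha x+2\beta y$ for $x=\|S(\hat\mu-\bar\mu)\|_{L^2(\X)}$ and $y=\sqrt{\la}\|\hat\mu-\bar\mu\|_{{\cal G}_\eta}$, substitutes the resulting bounds back into the basic inequality, and only at the very end passes from $\bar{A}$ to an arbitrary $A$ via $L_\la(\bar A)\le L_\la(A)$. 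You instead compare $\hat A$ to an arbitrary $A\in S$ from the outset, bound the single cross term $\|(Q+\la I)^{1/2}(\mu(\hat A)-\mu(A))\|_{{\cal G}_\eta}$ directly by $\sqrt{2}\bigl(\Phi_\la(\hat A)^{1/2}+\Phi_\la(A)^{1/2}\bigr)$ via the triangle inequality applied to $S\mu-p$ and to $\mu$, and then solve a one-variable quadratic inequality in $X=\Phi_\la(\hat A)^{1/2}$. Your route is shorter, avoids the slight awkwardness in the paper of requiring $\bar A\in S$ as an intermediate step, and yields a better constant; the paper's route buys the explicit intermediate bounds on $\|S(\hat\mu-\bar\mu)\|_{L^2(\X)}$ and $\|\hat\mu-\bar\mu\|_{{\cal G}_\eta}$ (its display (\ref{eq:error-dec-3})), which could be of independent use but are not needed for the stated theorem. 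All the individual steps you invoke (the identity $L_\la(A)+\|p\|^2_{L^2(\X)}=\Phi_\la(A)$, the identity $\hat L_\la(A')-L_\la(A')=-2\scal{\hat v-v}{\mu(A')}_{{\cal G}_\eta}$ from \cref{lm:char-Lla}, the identity $\|(Q+\la I)^{1/2}w\|^2_{{\cal G}_\eta}=\|Sw\|^2_{L^2(\X)}+\la\|w\|^2_{{\cal G}_\eta}$, and the completion of the square) check out.
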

\begin{proof}
We start noting that since $\hat{A}$ is the minimizer over $S$ of $\hat{L}_\la$, then $\hat{L}_\la(\hat{A}) \leq \hat{L}_\la(A)$ for any $A \in S$. In particular, since $\bar{A} \in S$ this means that $\hat{L}_\la(\hat{A}) \leq \hat{L}_\la(\bar{A})$, then $\hat{L}_\la(\hat{A}) - \hat{L}_\la(\bar{A}) \leq 0$, this implies that
\eqals{\label{eq:bound-losses-hatA-A}
L_\la(\hat{A}) - L_\la(A) &= L_\la(\hat{A}) - \hat{L}_\la(\hat{A}) + \hat{L}_\la(\hat{A}) - \hat{L}_\la(A) + \hat{L}_\la(A) - L_\la(A) \\
& \leq L_\la(\hat{A}) - \hat{L}_\la(\hat{A}) + \hat{L}_\la(A) - L_\la(A).
}
Denote $\hat{\mu} = \vec(\tilde{Z}^*\hat{A}\tilde{Z})$ and $\bar{\mu} = \vec(\tilde{Z}^*\bar{A}\tilde{Z})$.
Now note that, by the characterization of $L_\la, \hat{L}_\la$ in \cref{lm:char-Lla}, we have 
\eqals{\label{eq:char-bound-losses}
L_\la(\hat{A}) - \hat{L}_\la(\hat{A}) + \hat{L}_\la(A) - L_\la(A) = 2\scal{\hat{v}-v}{\hat{\mu} - \bar{\mu}}_{{\cal G}_\eta}.
}

\paragraph{Step 1. Decomposing the error}\\
Note that, since $v = S^*p$, then $\scal{v}{w}_{{\cal G}_\eta} = \scal{S^*p}{w}_{{\cal G}_\eta} = \scal{p}{Sw}_{L^2(\X)}$ for any $w \in {\cal G}_\eta$. Then, for any $A \in \psd^m$, denoting by $\mu = \vec(\tilde{Z}^*A\tilde{Z})$, and substituting $\scal{v}{\mu}_{{\cal G}_\eta}$ with $\scal{p}{S\mu}_{L^2(\X)}$ in the definition of $L_\la(A)$, we have
\eqal{\label{eq:Lla+p}
L_\la(A) + \|p\|^2 &  = \|S\mu\|^2_{L^2(\X)} - 2\scal{p}{S\mu}_{L^2(\X)} +  \|p\|^2_{L^2(\X)} ~+~ \la\|\mu\|^2_{{\cal G}_\eta}\\
& = \|S\mu - p\|^2_{L^2(\X)} + \la\|\mu\|^2_{{\cal G}_\eta}.
}
From \cref{eq:bound-losses-hatA-A,eq:char-bound-losses} and the equation above, we obtain
\eqal{\label{eq:error-dec-0}
\begin{split}
\|S\hat{\mu} - p\|^2_{L^2(\X)} + \la\|\hat{\mu}\|^2_{{\cal G}_\eta} &= L_\la(\hat{A}) + \|p\|^2_{L^2(\X)} \\
&\leq L_\la(\bar{A}) + \|p\|^2_{L^2(\X)} + 2\scal{\hat{v}-v}{\hat{\mu} - \bar{\mu}}_{{\cal G}_\eta}\\
 &= \|S\bar{\mu} - p\|^2_{L^2(\X)} + \la\|\bar{\mu}\|^2_{{\cal G}_\eta} ~+~ 2\scal{\hat{v}-v}{\hat{\mu} - \bar{\mu}}_{{\cal G}_\eta} 
\end{split}
}
Now, note that
\eqals{
\|S\hat{\mu} - &p\|^2_{L^2(\X)} = \|S(\hat{\mu} - \bar{\mu}) + (S\bar{\mu} - p)\|^2_{L^2(\X)} \\
& = \|S(\hat{\mu} - \bar{\mu})\|^2_{L^2(\X)} + 2\scal{S(\hat{\mu} - \bar{\mu})}{(S\bar{\mu} - p)}_{L^2(\X)} + \|S\bar{\mu} - p\|^2_{L^2(\X)}\\
\|\hat{\mu}\|^2_{{\cal G}_\eta} &= \|(\hat{\mu} - \bar{\mu})+\bar{\mu}\|^2_{{\cal G}_\eta} = \|\hat{\mu} - \bar{\mu}\|^2_{{\cal G}_\eta} + 2\scal{\bar{\mu}}{\hat{\mu} - \bar{\mu}}_{{\cal G}_\eta}+ \|\bar{\mu}\|^2_{{\cal G}_\eta}.
}
Expanding $\|S\hat{\mu} - p\|^2_{L^2(\X)}$ and $\|\hat{\mu}\|^2_{{\cal G}_\eta}$ in \cref{eq:error-dec-0} and reorganizing the terms, we obtain
\eqal{\label{eq:error-dec-1}
\begin{split}
\|S(\hat{\mu} - \bar{\mu})\|^2_{L^2(\X)} + \la \|\hat{\mu} - \bar{\mu}\|^2_{{\cal G}_\eta} &\leq -2\scal{S(\hat{\mu} - \bar{\mu})}{(S\bar{\mu} - p)}_{L^2(\X)} \\
& \qquad - 2\la\scal{\bar{\mu}}{\hat{\mu} - \bar{\mu}}_{{\cal G}_\eta}  \\
& \qquad + 2\scal{\hat{v}-v}{\hat{\mu}-\bar{\mu}}_{{\cal G}_\eta}.
\end{split}
}

\paragraph{Step 2. Bounding the three terms of the decomposition}\\
The proof is concluded by bounding the three terms of the right hand side of the equation above, indeed
\eqal{
\label{eq:error-dec-term1} -2\scal{S(\hat{\mu} - \bar{\mu})}{(S\bar{\mu} - p)}_{L^2(\X)} &\leq 2\|S(\hat{\mu} - \bar{\mu})\|_{L^2(\X)}\|S\bar{\mu} - p\|_{L^2(\X)}\\
\label{eq:error-dec-term2} -2\la\scal{\bar{\mu}}{\hat{\mu} - \bar{\mu}}_{{\cal G}_\eta} & \leq 2\la\|\bar{\mu}\|_{{\cal G}_\eta}\|\hat{\mu} - \bar{\mu}\|_{{\cal G}_\eta}.
}
For the third term we multiply and divide by $Q + \la$ (it is invertible since $Q \in \psd({\cal G}_\eta)$ and $\la > 0$), so
\eqal{\label{eq:error-dec-vvmm}
\begin{split}
2\scal{\hat{v}-v}{\hat{\mu}-\bar{\mu}}_{{\cal G}_\eta} &= 2\scal{(Q+\la)^{-1/2}(\hat{v}-v)}{(Q+\la)^{1/2}(\hat{\mu}-\bar{\mu})}_{{\cal G}_\eta}\\
& \leq 2\|(Q+\la)^{-1/2}(\hat{v}-v)\|_{{\cal G}_\eta}\|(Q+\la)^{1/2}(\hat{\mu}-\bar{\mu})\|_{{\cal G}_\eta}.
\end{split}
}
Note that for any $w \in {\cal G}_\eta$, since $Q$ is characterized as $Q=S^*S$, we have
\eqal{
\begin{split}
\|(Q+\la)^{1/2} w\|^2_{{\cal G}_\eta} &= \scal{w}{(Q+\la) w}_{{\cal G}_\eta} = \scal{w}{Q w}_{{\cal G}_\eta} + \la \scal{w}{w}_{{\cal G}_\eta}\\
& = \scal{Sw}{Sw}_{{\cal G}_\eta} + \la \scal{w}{w}_{{\cal G}_\eta} = \|Sw\|^2_{{\cal G}_\eta} + \la\|w\|^2_{{\cal G}_\eta}.
\end{split}
} 
By applying the equation above to $w = \hat{\mu}-\bar{\mu}$, since $\sqrt{a^2 + b^2} \leq a + b, \forall a, b \geq 0$, we have
\eqal{\label{error-dec-2}
\|(Q+\la)^{1/2}(\hat{\mu}-\bar{\mu})\|_{{\cal G}_\eta} &\leq  \|S (\hat{\mu}-\bar{\mu})\|_{L^2(\X)}+\sqrt{\la}\|(\hat{\mu}-\bar{\mu})\|_{{\cal G}_\eta}.
}
Combining \cref{eq:error-dec-1} with the bounds in \cref{eq:error-dec-term1,eq:error-dec-term2} for the first two terms of its right hand side and with the bounds in \cref{eq:error-dec-vvmm,error-dec-2} for the third term, we have
\eqals{
\|S(\hat{\mu} - \bar{\mu})\|^2_{L^2(\X)} + \la \|\hat{\mu} - \bar{\mu}\|^2_{{\cal G}_\eta} & ~~\leq~~ 2\alpha \|S(\hat{\mu} - m)\|_{L^2(\X)} ~+~ 2\beta\,\sqrt{\la}  \|\hat{\mu} - \bar{\mu}\|_{{\cal G}_\eta},
}
with $\alpha = \|S\bar{\mu} - p\|_{L^2(\X)} + \|(Q+\la)^{-1/2}(\hat{v}-v)\|_{{\cal G}_\eta}$, $\beta = \sqrt{\la}\|\bar{\mu}\|_{{\cal G}_\eta} + \|(Q+\la)^{-1/2}(\hat{v}-v)\|_{{\cal G}_\eta}$. \\

\paragraph{Step 3. Solving the inequality associated to the bound of the three terms}\\
Now denoting by $x = \|S(\hat{\mu} - \bar{\mu})\|_{L^2(\X)}$ and $y = \sqrt{\la} \|\hat{\mu} - \bar{\mu}\|_{{\cal G}_\eta}$, the inequality above becomes
\eqals{
x^2 + y^2 \leq 2\alpha x + 2 \beta y.
}
By completing the squares it is equivalent to
$(x-\alpha)^2 + (y - \beta)^2 \leq \alpha^2 + \beta^2$, 
from which we derive that $(x-\alpha)^2 \leq (x-\alpha)^2 + (y - \beta)^2 \leq \alpha^2 + \beta^2$. This implies $x \leq \alpha + \sqrt{\alpha^2 + \beta^2} \leq 2\alpha + \beta$. With the same reasoning we derive $y \leq \alpha + 2\beta$, that corresponds to
\eqals{\label{eq:error-dec-3}
\|S(\hat{\mu} - \bar{\mu})\|_{L^2(\X)} &\leq 2\|S\bar{\mu} - p\|_{L^2(\X)} + 3\|(Q+\la)^{-1/2}(\hat{v}-v)\|_{{\cal G}_\eta} + \sqrt{\la}\|\bar{\mu}\|_{{\cal G}_\eta}\\
\|\hat{\mu} - \bar{\mu}\|_{{\cal G}_\eta} & \leq \tfrac{1}{\sqrt{\la}}\|S\bar{\mu} - p\|_{L^2(\X)} + \tfrac{3}{\sqrt{\la}}\|(Q+\la)^{-1/2}(\hat{v}-v)\|_{{\cal G}_\eta} + 2\|\bar{\mu}\|_{{\cal G}_\eta}
}

\paragraph{Step 4. The final bound}\\
The final result is obtained by bounding the term $\scal{\hat{v}-v}{\hat{\mu}-\bar{\mu}}_{{\cal G}_\eta}$ in \cref{eq:error-dec-0}. In particular, we will bound it by using \cref{eq:error-dec-vvmm}, and by bounding the resulting term $\|(Q+\la)^{1/2}(\hat{\mu} - \bar{\mu})\|_{{\cal G}_\eta}$ with \cref{error-dec-2} and the resulting terms $\|S(\hat{\mu} - \bar{\mu})\|_{L^2(\X)}, \|\hat{\mu} - \bar{\mu}\|_{{\cal G}_\eta}$ via \cref{eq:error-dec-3}. This leads to
\eqal{\label{eq:error-dec-final0}
\begin{split}
\|S\hat{\mu} - p\|^2_{L^2(\X)} + \la\|\hat{\mu}\|^2_{{\cal G}_\eta} &\leq \|S\bar{\mu} - p\|^2_{L^2(\X)} + \la\|\bar{\mu}\|^2_{{\cal G}_\eta} \\
&\qquad ~+~ 6 \tau (\|S\bar{\mu} - p\|_{L^2(\X)} + \sqrt{\la}\|\bar{\mu}\|_{{\cal G}_\eta} ~+~ 2\tau).
\end{split}
}
with $\tau = \|(Q+\la)^{-1/2}(\hat{v}-v)\|_{{\cal G}_\eta}$. We can optimize the writing of the theorem by noting that since $2ab\leq a^2 + b^2$ for any $a,b \in \R$, we have $a+b \leq \sqrt{2(a^2+b^2)}$ and so 
$\|S\bar{\mu} - p\|_{L^2(\X)} + \sqrt{\la}\|\bar{\mu}\|_{{\cal G}_\eta} \leq \sqrt{2 (\|S\bar{\mu} - p\|^2_{L^2(\X)} + \la\|\bar{\mu}\|^2_{{\cal G}_\eta})}$ and the bound in \cref{eq:error-dec-final0} becomes
\eqal{\label{eq:error-dec-final1}
\begin{split}
\|S\hat{\mu} - p\|^2_{L^2(\X)} + \la\|\hat{\mu}\|^2_{{\cal G}_\eta} &\leq z^2 + 6 \tau (\sqrt{2}z + 2\tau) \leq (z + 3\sqrt{2} \tau)^2
\end{split}
}
where $z^2 = \|S\bar{\mu} - p\|^2_{L^2(\X)} + \la\|\bar{\mu}\|^2_{{\cal G}_\eta}$. The final result is obtained  by noting that, since $\bar{A}$ is the minimizer of $L_\la$, then for any $A \in \psd^m$ the following holds
\eqals{
z^2 &= \|p\|^2 + L_\la(\bar{A}) = \|p\|^2 + \min_{A' \in \psd^m} L_\la(A') \\
& \leq \|p\|^2 + L_\la(A) = \|S\mu(A) - p\|^2_{L^2(\X)} + \la\|\mu(A)\|^2_{{\cal G}_\eta}.
}

\end{proof}

\begin{lemma}\label{lm:StoL2}
Let $\X \subseteq \R^d$ and let $\eta \in \R^d_{++}$. Let $\tilde{x}_1, \dots, \tilde{x}_m$ be a set of points in $\R^d$ and let $\tilde{X} \in \R^{m \times d}$ be the associated base point matrix, i.e., the $j$-th row of $\tilde{X}$ corresponds to $\tilde{x}_j$.
With the notation and the definitions of \cref{sec:operators-definition} denote by $\mu(A) = \vec(\tilde{Z}^* A \tilde{Z})$. Then, for any $A \in \psd^m$ and any $p \in L^2(\X)$
\eqals{
\|S\mu(A) - p\|^2_{L^2(\X)}  = \|\pp{\cdot}{A, \tilde{X}, \eta} - p\|^2_{L^2(\X)},
}
and moreover
\eqals{
\|\mu(A)\|^2_{{\cal G}_\eta} = \|\tilde{Z}^*A\tilde{Z}\|^2_{F}.
}
\end{lemma}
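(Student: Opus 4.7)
Both identities are definitional unpackings, so the plan is simply to trace the chain of definitions introduced in \cref{sec:operators-definition} and match the two sides pointwise (for the first) and via the unitarity of $\vec$ (for the second).

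For the first identity, I would start from the definition $(S f)(x) = \scal{\psi_\eta(x)}{f}_{{\cal G}_\eta}$ with $f = \mu(A) = \vec(\tilde Z^* A \tilde Z)$. Using the identity $\scal{\vec(\mm)}{v\otimes u}_{{\cal G}_\eta} = v^\top \mm u$ from \cref{sec:operators-definition} with $v = u = \phi_\eta(x)$ (so that $v \otimes u = \psi_\eta(x)$), one obtains
\[
(S\mu(A))(x) \;=\; \phi_\eta(x)^\top (\tilde Z^* A \tilde Z)\,\phi_\eta(x).
\]
Now expand $\tilde Z^* A \tilde Z = \sum_{i,j=1}^m A_{ij}\,\phi_\eta(\tilde x_i)\phi_\eta(\tilde x_j)^\top$ using \cref{eq:ZAZ}, and apply the reproducing property $\phi_\eta(x)^\top \phi_\eta(\tilde x_i) = k_\eta(x,\tilde x_i)$, which gives exactly $\sum_{i,j} A_{ij} k_\eta(x,\tilde x_i) k_\eta(x,\tilde x_j) = \pp{x}{A,\tilde X,\eta}$ by \cref{eq:psd-models-gaussian}. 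Since this equality holds pointwise for every $x \in \X$ (and both sides are measurable), squaring and integrating $(S\mu(A))(x) - p(x)$ over $\X$ with respect to Lebesgue measure yields the first claim.

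For the second identity, the argument is a one-liner: by construction $\vec$ is a unitary map from the space of Hilbert-Schmidt operators on $\hh_\eta$ to ${\cal G}_\eta$, so
\[
\|\mu(A)\|^2_{{\cal G}_\eta} \;=\; \|\vec(\tilde Z^* A \tilde Z)\|^2_{{\cal G}_\eta} \;=\; \|\tilde Z^* A \tilde Z\|^2_{\mathrm{HS}} \;=\; \tr\!\big((\tilde Z^* A \tilde Z)^\top (\tilde Z^* A \tilde Z)\big),
\]
and since $\tilde Z^* A \tilde Z$ is a finite-rank self-adjoint operator on $\hh_\eta$, its Hilbert-Schmidt norm coincides with the Frobenius norm $\|\tilde Z^* A \tilde Z\|_F$ used in the statement.

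There is no real obstacle here; this is a bookkeeping lemma meant to translate the operatorial objects $S$, $\mu$, $\vec$ used in \cref{thm:error-dec} into the concrete quantities that appear in the definition of the estimator \cref{eq:learning-hatA,eq:estimator-L2}. The only point to be careful about is the compatibility between the two notations for the norm of a finite-rank operator (HS versus Frobenius of the matrix $A$ representation), which follows from the finite-rank structure of $\tilde Z^* A \tilde Z$.
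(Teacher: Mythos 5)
Your proposal is correct and follows essentially the same route as the paper's proof: unpack $(S\mu(A))(x)$ via the identity $\scal{\vec(\mm)}{\psi_\eta(x)}_{{\cal G}_\eta} = \phi_\eta(x)^\top\mm\,\phi_\eta(x)$, expand $\tilde Z^*A\tilde Z$ and use the reproducing property to identify $S\mu(A)$ with $\pp{\cdot}{A,\tilde X,\eta}$ pointwise, then use the unitarity of $\vec$ for the norm identity. No gaps.
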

\begin{proof}
Denote by $\mu(A) = \vec(\tilde{Z}^* A \tilde{Z})$. We recall that the operator used in the rest of the section are defined in \cref{sec:operators-definition}.
First we recall from \cref{eq:Lla+p} that, by definition of $S:{\cal G}_\eta \to L^2(\X)$, we have $S \mu(A) = \scal{\psi_\eta(\cdot)}{\mu(A)}_{{\cal G}_\eta} \in L^2(\X)$. In particular, by definition of $\vec$, for any $x \in \X$
\eqals{
(S \mu(A))(x) = \scal{\mu(A)}{\psi_\eta(x)}_{{\cal G}_\eta} = \scal{\vec(\tilde{Z}^* A \tilde{Z})}{\phi_\eta(x) \otimes \phi_\eta(x)}_{{\cal G}_\eta} = \phi_\eta(x)^\top \tilde{Z}^* A \tilde{Z} \phi_\eta(x).
}
Now, since $\tilde{Z}^* A \tilde{Z} = \sum_{i,j=1}^m A_{i,j} \phi_\eta(x_i) \phi_\eta(x_j)^\top$, and, by the representer property of the kernel $k_\eta$ we have $k_\eta(x,x') = \phi_\eta(x)^\top \phi_\eta(x')$ (see \cref{eq:reproducing-property,ex:gaussian-rkhs}), then
\eqals{
\phi_\eta(x)^\top \tilde{Z}^* A \tilde{Z} \phi_\eta(x) &= \sum_{i,j=1}^m A_{i,j} (\phi_\eta(x)^\top\phi_\eta(x_i)) (\phi_\eta(x_j)^\top\phi_\eta(x)) \\
&= \sum_{i,j=1}^m A_{i,j} k_\eta(x,x_i) k_\eta(x,x_j) = \pp{x}{A, \tilde{X}, \eta}.
}
Then $S \mu(A) = \pp{\cdot}{A, \tilde{X}, \eta} \in L^2(\X)$. So
\eqals{
\|S\mu(A) - p\|^2_{L^2(\X)} = \|\pp{\cdot}{A, \tilde{X}, \eta} - p\|^2_{L^2(\X)}.
}
To conclude note that, by the properties of $\vec$ (see \cref{sec:operators-definition}), we have
\eqals{
\|\mu(A)\|^2_{{\cal G}_\eta} = \scal{\vec(\tilde{Z}^*A\tilde{Z})}{\vec(\tilde{Z}^*A\tilde{Z})}_{{\cal G}_\eta} = \|\tilde{Z}^*A\tilde{Z}\|^2_{F}
}
\end{proof}

\begin{lemma}\label{lm:Qla-v}
Let $s \in \N$ and $\delta \in (0,1]$, $\la > 0$ let $\X \subset \R^d$ be an open set with Lipschitz boundary and let $p \in L^1(\X) \cap L^\infty(\X)$. Then the following holds with probability $1-\delta$
\eqals{
\|(Q+\la I)^{-1/2}(\hat{v} - v)\|_{{\cal G}_\eta} \leq \frac{C \tau^{d/4}\log \frac{2}{\delta}}{n (\la \tau^d)^{d/4s}} ~+~ \sqrt{\frac{2 \tr(Q_\la^{-1}Q) \log\frac{2}{\delta}}{n}},
}
where $C$ is a constant that depends only on $\X, s, d$.
\end{lemma}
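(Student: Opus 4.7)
The plan is to write $(Q+\la I)^{-1/2}(\hat v - v) = \frac{1}{n}\sum_{i=1}^n \xi_i$ with
\[
\xi_i \;:=\; (Q+\la I)^{-1/2}\bigl(\psi_\eta(x_i) - v\bigr), \qquad i=1,\dots,n,
\]
and apply a Hilbert-space Bernstein inequality. Since $v=\EE[\psi_\eta(x_i)]$, the $\xi_i$ are i.i.d., zero-mean, and take values in ${\cal G}_\eta$, so the standard vector-valued Bernstein inequality (Pinelis--Sakhanenko) yields, with probability at least $1-\delta$,
\[
\Big\|\tfrac{1}{n}\textstyle\sum_i \xi_i\Big\|_{{\cal G}_\eta} \;\leq\; \frac{2B\log(2/\delta)}{n} \,+\, \sqrt{\tfrac{2\sigma^2\log(2/\delta)}{n}},
\]
whenever $\|\xi_i\|\leq B$ almost surely and $\EE\|\xi_i\|^2\leq \sigma^2$. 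Matching this with the statement of the lemma reduces everything to computing $B$ and $\sigma^2$ with the advertised scaling.

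The variance $\sigma^2$ is the easy part: by Jensen and cyclicity of the trace,
\[
\EE\|\xi_i\|_{{\cal G}_\eta}^2 \;\leq\; \EE\|(Q+\la)^{-1/2}\psi_\eta(x)\|_{{\cal G}_\eta}^2 \;=\; \tr\bigl(Q_\la^{-1}\Sigma_p\bigr), \qquad \Sigma_p \;:=\; \EE_{x\sim p}[\psi_\eta(x)\psi_\eta(x)^\top].
\]
The hypothesis $p\in L^\infty(\X)$ gives $\Sigma_p \preceq \|p\|_{L^\infty(\X)} Q$, hence $\sigma^2 \leq \|p\|_\infty\,\tr(Q_\la^{-1}Q)$; the $L^\infty$ norm of $p$ gets absorbed into the leading constant and produces the variance term in the lemma.

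The main obstacle is the almost-sure bound $B$. Concretely, I need to show that $\sup_{x\in\X}\|(Q+\la)^{-1/2}\psi_\eta(x)\|_{{\cal G}_\eta} \lesssim \tau^{d/4}(\la\tau^d)^{-d/(4s)}$ (giving $B$ up to the factor $2$ from the triangle inequality with $v$). My approach is to study the ``leverage function''
\[
g_\la(x) \;:=\; \|(Q+\la)^{-1/2}\psi_\eta(x)\|_{{\cal G}_\eta}^2 \;=\; \scal{\psi_\eta(x)}{(Q+\la)^{-1}\psi_\eta(x)}_{{\cal G}_\eta}
\]
and bound its $L^\infty(\X)$ norm by interpolation. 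Two pieces of information are available: first, $\int_\X g_\la(x)\,dx = \tr(Q_\la^{-1}Q)$, so $g_\la$ is controlled in $L^1(\X)$; second, because $\scal{\psi_\eta(x)}{\psi_\eta(y)}_{{\cal G}_\eta} = k_\eta(x,y)^2 = k_{2\eta}(x,y)$, the map $x\mapsto (Q+\la)^{-1}\psi_\eta(x)$ inherits the smoothness of a Gaussian kernel of precision $2\eta$, so $g_\la \in W^s_2(\X)$ with an explicit bound on $\|g_\la\|_{W^s_2(\X)}$ in terms of $\la$ and $\tau$ obtained by differentiating the Gaussian and invoking the extension in \cref{cor:extension-intersection}. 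Feeding these two bounds into the Sobolev interpolation inequality
\[
\|g\|_{L^\infty(\X)} \;\lesssim\; \|g\|_{L^1(\X)}^{\,1-d/(2s)}\,\|g\|_{W^s_2(\X)}^{\,d/(2s)} \qquad (s>d/2),
\]
valid on domains with Lipschitz boundary, yields the required bound. The isotropic rescaling $x\mapsto \sqrt{\tau}\,x$, which maps the $\eta=\tau\mathbf{1}_d$ Gaussian onto the unit-bandwidth one, accounts for both the prefactor $\tau^{d/4}$ and for the fact that the effective regularization in the denominator is $\la\tau^d$ rather than $\la$. The delicate, technical core of the argument is precisely the clean derivation of the $W^s_2$ estimate of $g_\la$ that produces exactly the exponent $d/(4s)$.
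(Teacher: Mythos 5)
Your skeleton matches the paper's for the first two steps: the same i.i.d.\ decomposition, the same Pinelis-type Bernstein inequality in ${\cal G}_\eta$, and the same variance bound via $\Sigma_p \preceq \|p\|_{L^\infty(\X)} Q$. The divergence — and the gap — is in the almost-sure bound $B$. The paper does \emph{not} interpolate the leverage function $g_\la$. It first applies the duality lemma (\cref{lm:sup-phi-to-Linfty}) to reduce $\sup_{x\in\X}\|Q_\la^{-1/2}\psi_\eta(x)\|_{{\cal G}_\eta}$ to $\sup_{\|f\|_{{\cal G}_\eta}\le1}\|Q_\la^{-1/2}f\|_{L^\infty(\X)}$, and then interpolates the \emph{function} $Q_\la^{-1/2}f$ between $W^s_2(\X)$ (controlled by $\|Q_\la^{-1/2}\|\le\la^{-1/2}$ together with the embedding ${\cal G}_\eta\hookrightarrow W^s_2$ of \cref{eq:Wk-norm-Heta}, which carries the factor $\tau^{(s-d)/2}$) and $L^2(\X)$, where $\|S Q_\la^{-1/2}f\|_{L^2(\X)}\le 1$ because $Q_\la^{-1/2}QQ_\la^{-1/2}\preceq I$. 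This produces exactly $\tau^{(s-d)d/(4s)}\la^{-d/(4s)}=\tau^{d/4}(\la\tau^d)^{-d/(4s)}$ with no residual trace factor.

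Your route has two concrete problems. First, the interpolation inequality you invoke,
$\|g\|_{L^\infty(\X)}\lesssim\|g\|_{L^1(\X)}^{1-d/(2s)}\|g\|_{W^s_2(\X)}^{d/(2s)}$, is not the Adams/Gagliardo--Nirenberg inequality used in the paper (which has $L^2$, not $L^1$, in the second factor); with an $L^1$ endpoint the exponents $1-d/(2s)$ and $d/(2s)$ are dimensionally inconsistent (a scaling check $g\mapsto g(\cdot/r)$ forces the $W^s_2$ exponent to be $2d/(d+2s)$, not $d/(2s)$). Second, even with a corrected $L^1$--$W^s_2$ interpolation, your bound on $\|g_\la\|_{L^\infty}$ necessarily retains a positive power of $\|g_\la\|_{L^1(\X)}=\tr(Q_\la^{-1}Q)$, which by \cref{lm:eff-dim-gaussian} is of order $\tau^{d/2}\polylog(1/\la)$; after taking the square root, the resulting $B$ does not reduce to the advertised $\tau^{d/4}(\la\tau^d)^{-d/(4s)}$ — the exponents do not match. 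Since you also defer the $W^s_2(\X)$ estimate of $g_\la$ (differentiating $x\mapsto\scal{\psi_\eta(x)}{Q_\la^{-1}\psi_\eta(x)}_{{\cal G}_\eta}$ and tracking the $\la^{-1}$ and $\tau^{s}$ factors) as "the delicate technical core," the step that actually yields the exponent $d/(4s)$ is missing. The fix is to abandon the leverage-function interpolation and follow the duality-plus-function-interpolation route above.
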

\begin{proof}
We are going to use here a Bernstein inequality for random vectors in separable Hilbert spaces (see, e.g., Thm 3.3.4 of \cite{yurinsky1995sums}). 
Define the random variable $\zeta = Q_\la^{-1/2} \psi_\eta(x){\bf 1}_{\X}(x)$ with $x$ distributed according to $\rho$ and $Q_\la = Q + \la I$. To apply such inequality, we need to control the second moment and the norm of $\zeta$. First note that $\zeta $

\paragraph{Step 1. Bounding the variance of $\zeta$}
 We have
\eqal{
\mathbb{E} \|\zeta\|^2 &= \tr\left(\int \zeta \zeta^\top p(x) dx \right) = \tr\left(\int_\X  Q_\la^{-1/2}(\psi_\eta(x) \psi_\eta(x)^\top)Q_\la^{-1/2} p(x) dx \right) \\
& \leq \|p\|_{L^\infty(\X)}\tr\left(Q_\la^{-1/2} \left(\int_\X \psi_\eta(x) \psi_\eta(x)^\top dx \right) Q_\la^{-1/2} \right)\\
&=  \|p\|_{L^\infty(\X)} \tr(Q_\la^{-1/2} Q Q_\la^{-1/2}) = \tr(Q Q_\la^{-1}).
}

\paragraph{Bounding the norm of $\zeta$}
 Applying \cref{lm:sup-phi-to-Linfty} the the operator $Q_\la^{-1/2}$ we have that
\eqals{
\textrm{ess-sup} \|\zeta\|_{{\cal G}_\eta} \leq \sup_{x \in \X} \|Q_\la^{-1/2} \phi_\eta(x)\| \leq \sup_{\|f\|_{{\cal G}_\eta} \leq 1} \|Q_\la^{-1/2}f\|_{L^\infty(\X)}.
}
Now, according to the definitions in \cref{sec:operators-definition}, note that the reproducing kernel Hilbert space ${\cal G}_\eta$ is associated to the kernel $h(x,x') = \psi_\eta(x)^\top \psi_\eta(x)^\top$, that corresponds to
\eqals{\label{eq:kernel-Geta}
h(x,x') &= \psi_\eta(x)^\top \psi_\eta(x)^\top = \scal{ \phi_\eta(x) \otimes \phi_\eta(x)}{\phi_\eta(x') \otimes \phi_\eta(x')}_{\hh_\eta \otimes \hh_\eta} \\
& = (\phi_\eta(x)^\top \phi_\eta(x'))^2 = k_\eta(x,x')^2 = k_{2\eta}(x,x').
}
Then ${\cal G}_\eta$ is still a RKHS associated to a Gaussian kernel, in particular $k_{2\eta}$, so ${\cal G}_\eta \subset W^s_2(\R^d)$ for any $s \geq 0$. In particular, by \cref{eq:Wk-norm-Heta} and the fact that $\X \subset \R^d$, we have
\eqals{
\|f\|_{W^s_2(\X)} \leq \|g\|_{W^s_2(\R^d)} \leq C \|f\|_{{\cal G}_\eta} \tau^{(s-d)/2}, \quad \forall f \in {\cal G}_{\eta},
}
where $C$ depends only on $s, d$.
Now by the interpolation inequality for Sobolev spaces (see, e.g. Thm 5.9, page 139 of \cite{adams2003sobolev}), we have that for any $g \in W^s_2(\X)$ the following holds
\eqals{
\|g\|_{L^\infty(\X)} \leq C' \|g\|^{d/(2s)}_{W^s_2(\X)}\|g\|^{1-d/(2s)}_{L^2(\X)}.
}
Applying the inequality above to the function $g = Q_\la^{-1/2} f$, with $f \in {\cal G}_{\eta}$ and $\|f\|_{{\cal G}_{\eta}} \leq 1$, we have 
\eqal{
\|Q_\la^{-1/2} f\|_{L^\infty(\X)} &\leq C' \|Q_\la^{-1/2} f\|^{d/(2s)}_{W^s_2(\X)}\|Q_\la^{-1/2} f\|^{1-d/(2s)}_{L^2(\X)} \\
& \leq C' C^{d/2s} \tau^{\frac{(s-d)d}{4s}}\|Q_\la^{-1/2} f\|^{d/2s}_{{\cal G}_\eta} \|Q_\la^{-1/2} f\|^{1-d/(2s)}_{L^2(\X)},\\
& \leq C' C^{d/2s} \tau^{\frac{(s-d)d}{4s}}\la^{-d/4s} \|Q_\la^{-1/2} f\|^{1-d/(2s)}_{L^2(\X)}.
}
Finally note that, since by the reproducing property $g(x) = \scal{\psi_\eta(x)}{g}_{{\cal G}_\eta}$ for any $g \in {\cal G}_\eta$, we have that for any $f \in {{\cal G}_\eta}$ such that $\|f\|_{{\cal G}_\eta} \leq 1$, we have
\eqals{
\|Q_\la^{1/2}f\|^2_{L^2(\X)} &= \int_\X (Q_\la^{1/2}f)(x)^2 dx \\
&= \int_\X \scal{Q_\la^{1/2}f}{(\psi_\eta(x)\psi_\eta(x)^\top Q_\la^{1/2}f}_{{\cal G}_\eta} \\
& = \int \scal{f}{Q_\la^{1/2} \left(\int_\X \psi_\eta(x)\psi_\eta(x)^\top dx\right) Q_\la^{1/2}f}_{{\cal G}_\eta} \\
&\leq \|f\|^2 \|Q_{\la}^{-1/2} Q Q_\la^{-1/2}\| \leq 1.
}
Then, to recap
\eqals{
\textrm{ess-sup} \|\zeta\|_{{\cal G}_\eta} &\leq \sup_{\|f\|_{{\cal G}_\eta} \leq 1} \|Q_\la^{-1/2}f\|_{L^\infty(\X)} \leq \sup_{\|f\|_{{\cal G}_\eta} \leq 1} \|Q_\la^{-1/2} f\|^{d/(2s)}_{W^s_2(\X)}\|Q_\la^{-1/2} f\|^{1-d/(2s)}_{L^2(\X)} \\
& \leq C' C^{d/2s} \tau^{\frac{(s-d)d}{4s}}\la^{-d/4s}.
}

\paragraph{Step 3. Bernstein inequality for random vectors in separable Hilbert spaces}
The points $x_1, \dots, x_n$ are independently and identically distributed according to $p$. Define the random variables $\zeta_i = Q_\la^{-1/2} \phi_\eta(x_i) {\bf 1}_\X(x_i)$ for $i = 1,\dots, n$. Note that $\zeta_i$ are independent and identically distributed with the same distribution as $\zeta$. Now note that 
\eqals{
\mathbb{E} \zeta_i = \mathbb{E} \zeta = Q_\la^{-1/2} \int_\X p(x) \psi_\eta(x) dx =Q_\la^{-1/2} v,
}
moreover $\frac{1}{n}\sum_{i=1}^n \zeta_i = Q_\la^{-1/2} \hat{v}$. Then, given the bounds on the variance and on the norm for $\zeta$, by applying a Bernstein inequality for random vectors in separable Hilbert spaces, as, e.g., Thm 3.3.4 of \cite{yurinsky1995sums} (we will use the notation of Prop.~2 of \cite{rudi2016rf}), the following holds with probability $1-\delta$
\eqals{
\|Q_\la^{-1/2}(\hat{v} - v)\|_{{\cal G}_\eta} = \|\frac{1}{n} \sum_{i=1}^n \zeta_i - \mu\|_{{\cal G}_\eta} \leq \frac{M \log \frac{2}{\delta}}{n} ~+~ \sqrt{\frac{S\log\frac{2}{\delta}}{n}},
}
with $M = C' C^{d/2s} \tau^{\frac{(s-d)d}{4s}}\la^{-d/4s}$ and $S = 2 \tr(Q_\la^{-1}Q)$.
\end{proof}

\begin{lemma}\label{lm:eff-dim-gaussian}
Let $\eta = \tau {\bf 1}_d$ with $\tau \geq 1$ and $\la \leq 1/2$. Let $\X \subset [-1,1]^d$. Then
\eqals{
\tr((Q + \la I)^{-1} Q) \leq C \tau^{d/2} \left(\log\tfrac{1}{\la}\right)^{d},
}
\end{lemma}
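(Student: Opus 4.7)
The plan is to combine a standard effective-dimension inequality with the compression estimate of \cref{thm:(I-P)phi}. Recall from \cref{eq:kernel-Geta} that ${\cal G}_\eta$ coincides with the Gaussian RKHS $\hh_{2\eta}$ of precision $2\tau$, and that $Q = \int_\X \psi_\eta(x)\psi_\eta(x)^\top dx$ as an operator on ${\cal G}_\eta$. The first step is the standard reduction: for any rank-$m$ orthogonal projection $\tilde P$ on ${\cal G}_\eta$, splitting $\tr(Q(Q+\la I)^{-1})$ along $\tilde P$ and $I-\tilde P$, using $\|Q(Q+\la I)^{-1}\|\leq 1$ on the first piece and $(Q+\la I)^{-1}\preceq\la^{-1}I$ on the second, yields
\begin{equation*}
\tr(Q(Q+\la I)^{-1}) \;\leq\; m + \tfrac{1}{\la}\tr((I-\tilde P)\,Q\,(I-\tilde P)) \;=\; m + \tfrac{1}{\la}\int_\X\|(I-\tilde P)\psi_\eta(x)\|^2\,dx.
\end{equation*}
Since $\X\subseteq[-1,1]^d$, the integral is in turn bounded by $2^d \sup_{x\in\X}\|(I-\tilde P)\psi_\eta(x)\|^2$.

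Next I would pick $\tilde x_1,\dots,\tilde x_m$ to be an $h$-covering of the cube $[-1,1]^d$, so that $m \leq (1+2\sqrt d/h)^d$, and let $\tilde P$ be the projection in ${\cal G}_\eta$ onto the span of $\psi_\eta(\tilde x_1),\dots,\psi_\eta(\tilde x_m)$. Applying \cref{thm:(I-P)phi} inside $\hh_{2\eta}$ with $R=1$ and precision $2\eta = 2\tau\,\mathbf 1_d$ gives $q_{2\eta}=\det(I)^{-1/4}=1$ and, using $\tau\geq 1$, $\sigma = \min(1,1/\sqrt{2\tau}) = 1/\sqrt{2\tau}$; therefore, whenever $h \leq \sigma/C'$,
\begin{equation*}
\sup_{x\in\X}\|(I-\tilde P)\psi_\eta(x)\|_{{\cal G}_\eta} \;\leq\; C_1\,\exp\!\bigl(-(c\sigma/h)\log(c\sigma/h)\bigr),
\end{equation*}
with $c, C_1, C'$ constants depending only on $d$.

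Finally, I would balance the two contributions by setting $s := c\sigma/h = \max(e,\,C',\,\log(1/\la))$, which enforces $s\log s \geq \log(1/\la)$ and hence $e^{-2s\log s} \leq \la^2$, so the residual term $\la^{-1}\cdot 2^d C_1^2\,e^{-2s\log s}$ is bounded by $2^d C_1^2$. Plugging $h = c/(s\sqrt{2\tau})$ into the covering bound gives $m \leq (1 + C_2 s\sqrt{\tau})^d \leq C_3\,\tau^{d/2}(\log(1/\la))^d$, where the hypothesis $\la\leq 1/2$ is used to absorb the $\max$-constants inside $s$ into $\log(1/\la)$. Combining produces $\tr(Q(Q+\la I)^{-1}) \leq C\,\tau^{d/2}(\log(1/\la))^d$ with $C$ depending only on $d$, as required. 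The delicate point will be this balancing: the exponential compression decay must beat the $\la^{-1}$ prefactor, and the exponent $d/2$ in $\tau$ emerges precisely from the $\sigma=1/\sqrt{2\tau}$ scale appearing in \cref{thm:(I-P)phi}, so the calibration of $s$ as $\log(1/\la)$ is essential.
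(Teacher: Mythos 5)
Your argument is correct, and it reaches the bound by a genuinely different route than the paper in the key combinatorial step. The paper controls the individual singular values of $Q$: it applies the Eckart--Young theorem to get $\sigma_{k+1}(Q) \leq \|Q - \tilde{P}_k Q\|_F \leq 2^d \sup_x \|(I-\tilde{P}_k)\psi_\eta(x)\|_{{\cal G}_\eta}$ for coverings at every resolution $k$, then sums the exponentially decaying tail $\sum_{t \geq k+2}\sigma_t(Q)$ via an incomplete-Gamma estimate, and finally uses $\sigma_t/(\sigma_t+\la) \leq \min(1, \sigma_t/\la)$. You instead invoke the projection decomposition $\tr(Q(Q+\la I)^{-1}) \leq \operatorname{rank}(\tilde{P}) + \la^{-1}\tr((I-\tilde{P})Q(I-\tilde{P}))$ for a single projection at a single scale, which requires only one application of \cref{thm:(I-P)phi} and entirely avoids the tail summation and the Gamma-function machinery; moreover the residual enters squared ($e^{-2s\log s}$ rather than $e^{-s\log s}$), which makes the balancing against $\la^{-1}$ if anything easier. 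Both proofs share the same ingredients downstream: the identification ${\cal G}_\eta = \hh_{2\eta}$, the covering of $[-1,1]^d$ with $m \leq (1+2\sqrt{d}/h)^d$ balls, \cref{thm:(I-P)phi}, and the calibration $m \sim (\sqrt{\tau}\log(1/\la))^d$. Two small points to make rigorous: (i) the trace splitting needs a word of justification to rule out cross terms --- e.g.\ write $Q(Q+\la I)^{-1} = Q^{1/2}(Q+\la I)^{-1}Q^{1/2}$ by commutativity, so that $\tr((I-\tilde{P})Q(Q+\la I)^{-1}) = \tr((I-\tilde{P})Q^{1/2}(Q+\la I)^{-1}Q^{1/2}(I-\tilde{P})) \leq \la^{-1}\tr((I-\tilde{P})Q(I-\tilde{P}))$, while $\tr(\tilde{P}Q(Q+\la I)^{-1}) = \tr(\tilde{P}Q(Q+\la I)^{-1}\tilde{P}) \leq \operatorname{rank}(\tilde{P})$; and (ii) the condition $h \leq \sigma/C'$ translates to $s = c\sigma/h \geq cC'$, so your choice of $s$ should be $\max(e, cC', \log\tfrac{1}{\la})$ rather than $\max(e, C', \log\tfrac{1}{\la})$ in case $c > 1$ --- a cosmetic fix that does not affect the final constant's dependence on $d$ only.
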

\begin{proof}
Let $\sigma_j(Q)$ with $j \in \N$ be the sequence of singular values of $Q$ in non-increasing order.  First note that for any $k \in \N$,
\eqals{
\sigma_k(Q) \leq \sum_{j \in \N} \sigma_j(Q) \leq \tr(Q) = \tr\left(\int_\X \psi_\eta(x)\psi_\eta(x)^\top dx\right) = \int_\X \|\psi_\eta(x)\|^2_{{\cal G}_\eta} dx \leq 2^d,
}
since $\|\psi_\eta(x)\|^2_{{\cal G}_\eta} = 1$ and $\X \subseteq [-1,1]^d$. Moreover,
for $k \in \N$, let $\bar{x}_{1,k},\dots, \bar{x}_{k,k}$ be a minimal covering of $[-1,1]^d$. Let $\tilde{P}_k$ be the projection operator whose range is $\textrm{span}\{\bar{x}_{1,k}, \dots, \bar{x}_{k,k}\}$. Note that $\tilde{P}_k$ has rank $k$. By the Eckart-Young theorem, we have that $\sigma_{k+1}(Q) = \inf_{\operatorname{rank}(A) = k} \|Q - A\|^2_{F}$, then
\eqals{
\sigma_{k+1}(Q) &= \inf_{\operatorname{rank}(A) = k} \|Q - A\|^2_{F} \leq \|Q - \tilde{P}_k Q\|_{F} \\
& \leq \Big\|\int_\X (I - \tilde{P}_k)\psi_\eta(x)\psi_\eta(x) dx \Big\|_F \leq \int_\X \|(I - \tilde{P}_k)\psi_\eta(x)\psi_\eta(x)\|_F dx  \\
&  = \int_\X \|(I - \tilde{P}_k)\psi_\eta(x)\|_{{\cal G}_\eta} \|\psi_\eta(x)\|_{{\cal G}_\eta} dx  \leq 2^d \sup_{x \in \X } \|(I - \tilde{P}_k)\psi_\eta(x)\|_{{\cal G}_\eta}.
}
Since $\bar{x}_{1,k},\dots, \bar{x}_{k,k}$ is a minimal covering of $[-1,1]^d$ and since the $\ell_2$-ball of diameter $1$ contains a cube of side $1/\sqrt{d}$, it is possible to cover a cube of side $2$ with $k$ balls of diameter $2h$ (and so of radius $h$) with $h \leq 2\sqrt{d}/(k^{1/d}-1)$, see, e.g., Thm. 5.3, page 76 of \cite{cucker2007learning}. Since ${\cal G}_\eta$ is a reproducing kernel Hilbert space with Gaussian kernel $k_{2\eta}$ as discussed in \cref{eq:kernel-Geta}, by applying \cref{thm:(I-P)phi}, we have that when $h \leq 1/(C'\sqrt{\tau})$ and 
\eqals{
\sup_{x \in \X } \|(I - \tilde{P}_k)\psi_\eta(x)\|_{{\cal G}_\eta} \leq C e^{-\frac{c}{\sqrt{\tau} h} \log \frac{c }{\sqrt{\tau} h}} \leq C e^{-\frac{c(k^{1/d} - 1)}{2 \sqrt{d}\sqrt{\tau} } \log \frac{c (k^{1/d} - 1)}{2 \sqrt{d}\sqrt{\tau} }},
}
with $c,C,C'$ depending only on $d$.
Take $k_\tau \geq (1 + 2 \max(1,e/c,C')\sqrt{d \tau})^d$. When $k \geq k_\tau$, we have $h \leq 1/(C'\sqrt{\tau})$, $c/(h\sqrt{t}) \leq e$, and $k \geq 2^d$, so $(k^{1/d}-1) \geq k^{1/d}/2$, then
\eqals{
\sup_{x \in \X } \|(I - \tilde{P}_k)\psi_\eta(x)\|_{{\cal G}_\eta} \leq C e^{-\frac{c_1 k^{1/d}}{\sqrt{\tau} }}, \qquad \forall k \geq k_\tau.
}
with $c_1 = c/(4\sqrt{d})$. Let $g(x) = Ce^{-\frac{c_1}{\sqrt{\tau}}x^{1/d}}$ for $x \geq 0$. Since $g$ is non-increasing then $g(n+1) \geq \int_{n}^{n+1} g(x) dx$. Since $\int_{k}^\infty e^{-\frac{c_1}{\sqrt{\tau}} x^{1/d}} \leq d (\frac{c_1}{\sqrt{\tau}})^{-d} \Gamma(d, k^{1/d}\frac{c_1}{\sqrt{\tau}})$ where $\Gamma(a, x)$ is the {\em incomplete Gamma function} and is bounded by $\Gamma(a,x) \leq 2 x^a e^{-x}$ for any $x, a \geq 1$ and $x \geq 2a$ (see Lemma P, page 31 of \cite{altschuler2018massively}). The condition to apply the bound on the incomplete Gamma function in our case corresponds to require $k$ to satisfy $k \geq (2d/c_1)^d \tau^{d/2}$.
Then, for any $k \in \N$, we have 
\eqals{
\sum_{t \geq k+2} \sigma_{t}(Q) \leq  2^d 
\begin{cases}
t &  t \leq k'_\tau \\
d k e^{-\frac{c_1}{\sqrt{\tau}}k^{1/d}}  & t > k'_\tau,
\end{cases}
}
where $k'_\tau = \max(k_\tau, 2 + (2d/c_1)^d \tau^{d/2})$. The bound on $\tr((Q+\la I)^{-1} Q)$ is obtained by considering the characterization of the trace of an operator in terms of its singular values and the fact that $\frac{z}{z+\la} \leq \min(1, \frac{z}{\la})$. For any $k \geq k'_\tau$, we have
\eqals{
\tr((Q+\la)^{-1} Q) &= \sum_{t \in \N} \frac{\sigma_t(Q)}{\sigma_t(Q) + \la} = \sum_{t \leq (k+1)} \frac{\sigma_t(Q)}{\sigma_t(Q) + \la} + \sum_{t \geq k+2} \frac{\sigma_t(Q)}{\sigma_t(Q) + \la} \\
& \leq 1 + k ~+~ \tfrac{1}{\la} \sum_{t \geq k+2}\sigma_t(Q) \leq 1 + k +  \tfrac{dk}{\la} 2^d   e^{-\frac{c_1}{\sqrt{\tau}}k^{1/d}}.
}
In particular, choosing $k = (\sqrt{\tau} C_2 \log(1+\frac{1}{\la}))^d$ with $C_2 = 2d/c_1 + \max(1,e/c,C')\sqrt{d}$, then $k \geq k'_\tau$ and so
\eqals{
\tr((Q+\la)^{-1} Q) \leq 1 + (\sqrt{\tau} C_2 \log(1+\tfrac{1}{\la}))^d \leq (\sqrt{\tau} 4C_2 \log(\tfrac{1}{\la}))^d.
}
\end{proof}

\subsection{Proof of \cref{thm:learning}}\label{proof:thm:learning}

We are finally ready to prove \cref{thm:learning}. We restate here the theorem for convenience. 

\TLearning*

\begin{proof}
Let $\eps \in (0,1], h > 0$ and $\eta = \tau {\bf 1}_d$ with $\tau \in [1,\infty)$, to be fixed later.
Denote by $\hat{p}$ the model associated to matrix $\hat{A} \in \psd^m$ that minimizes $\hat{L}_\la$ over the set $\psd^m$, i.e., $\hat{p} = \pp{\cdot}{\hat{A}, \tilde{X},\eta}$. The goal is then to bound $\|\hat{p} - p\|_{L^2(\R^d)}$. First we introduce the probabilities $p_\eps, \tilde{p}_\eps$, useful to perfom the analysis. Let $\mm_\eps \in \psd(\hh_\eta)$ be the operator such that $p_\eps = \pp{\cdot}{\mm_\eps, \phi_\eta}$ approximates $p$ with error $\eps$ as defined in \cref{thm:Meps}, on the functions $\tilde{f}_1,\dots, \tilde{f}_q \in W^\beta_2(\R^d) \cap L^\infty(\R^d)$ that are the extension to $\R^d$ of the functions $f_1,\dots,f_q$ characterizing $p$ via \cref{asm:psd-model}. The details of the extension are in \cref{cor:extension-intersection}. Now, consider the model the compressed model $\tilde{p}_\eps = \pp{\cdot}{\tilde{A}_\eps, \tilde{X}_m, \eta}$ with
\eqals{
\tilde{A}_\eps ~=~ K_{\tilde{X},\tilde{X},\eta}^{-1} \,\tilde{Z} \mm_\eps \tilde{Z}^*\, K_{\tilde{X},\tilde{X},\eta}^{-1},
}
where $\tilde{Z}:\hh_\eta \to \R^m$ is defined in \cref{eq:tildeZ-definition} in terms of $\tilde{X}_m$.

\paragraph{Step 1. Decomposition of the error}\\
By applying \cref{thm:error-dec} with $A = \tilde{A}_\eps$ and \cref{lm:StoL2} to simplify the notation, we derive
\eqals{
\|\hat{p} - p\|_{L^2(\X)} & \leq \|\tilde{p}_\eps - p\|_{L^2(\X)} + \sqrt{\la}\|\tilde{M}_\eps\|_{F} +  5\|Q_\la^{-\frac{1}{2}}(\hat{v} - v)\|_{{\cal G}_\eta},
}
where $Q_\la = Q + \la I$ and $\tilde{M}_\eps = \tilde{Z}^* \tilde{A}_\eps \tilde{Z}_\eps$. Note that
\eqals{
\tilde{M}_\eps = \tilde{Z}^* \tilde{A}_\eps \tilde{Z}_\eps = \tilde{Z}^* K_{\tilde{X},\tilde{X},\eta}^{-1} \tilde{Z} \mm_\eps \tilde{Z}^* K_{\tilde{X},\tilde{X},\eta}^{-1} \tilde{Z}_\eps = \tilde{P} M_\eps \tilde{P},
}
where $\tilde{P}:\hh_\eta \to \hh_\eta$ is defined in \cref{sec:operators-definition} and is the projection operator on the range of $\tilde{Z}^*$, so
\eqals{
\|\tilde{M}_\eps\|_{F} = \|\tilde{P} M_\eps \tilde{P}\|_{F} \leq \|\tilde{P}\|^2 \|M_\eps\|_F \leq \|M_\eps\|_F.
}
Bounding $\|\tilde{p}_\eps - p\|_{L^2(\X)}$ with $\|\tilde{p}_\eps - p\|_{L^2(\X)} \leq \|\tilde{p}_\eps - p_\eps\|_{L^2(\X)} + \|p_\eps - p\|_{L^2(\X)}$, we obtain
\eqal{\label{eq:dec-learning-0}
\|\hat{p} - p\|_{L^2(\X)} & \leq \|\tilde{p}_\eps - p_\eps\|_{L^2(\X)} + \|p_\eps - p\|_{L^2(\X)} + \sqrt{\la}\|M_\eps\|_{F} +  5\|Q_\la^{-\frac{1}{2}}(\hat{v} - v)\|_{{\cal G}_\eta},
}

\paragraph{Step 2. Bounding the terms of the decomposition}

Let $h$ be the fill distance (defined in \cref{eq:fill-distance}) associated to the points $\tilde{x}_1,\dots, \tilde{x}_m$. By \cref{thm:compr-fill-dist}, there exist three constants $c, C, C'$ depending only on $d$ such that, when $h \leq \sigma/C'$, with $\sigma = \min(1, 1/\sqrt{\tau})$, then for any $x \in \X$
\eqals{
|\tilde{p}_\eps(x) - p_\eps(x)| ~\leq~ 2C\sqrt{\|\mm_\eps\| p_\eps(x)}\, e^{-\frac{c\,\sigma}{h}\log\frac{c\,\sigma}{h}}  \,+\, C^2 \|\mm_\eps\|\,e^{-\frac{2c\,\sigma}{h}\log\frac{c\,\sigma}{h}}.
}
Since $q_\eta = 1$, $p_\eps(x) = \phi_\eta(x)^\top M_\eps \phi_\eta(x) \leq \|M_\eps\| \|\phi_\eta(x)\|^2_{\hh_\eta} = \|M_\eps\|$ for any $x \in \R^d$, since $\|\phi_\eta(x)\|^2_{\hh_\eta} = \phi_\eta(x)^\top\phi_\eta(x) = k_\eta(x,x) = 1$, then 
\eqals{
\|\tilde{p}_\eps - p_\eps\|_{L^2(\X)} \leq \operatorname{vol}(\X) (2C + C^2) \|\mm_\eps\|e^{-\frac{c\,\sigma}{h}\log\frac{c\,\sigma}{h}}
}
where $\vol(\X)$ is the volume of $\X$ and is $\vol(\X) = 2^d$. By \cref{thm:Meps} we know also that
$\|p_\eps - p\|_{L^2(\X)} \leq \|p_\eps - p\|_{L^2(\R^d)} \leq \eps$. Moreover, we also know that there exists two constants $C_1, C_2$ depending on $\X, \beta, d$ and the norms of $\tilde{f}_1,\dots, \tilde{f}_q$ (and so on the norms of $f_1,\dots, f_q$ via \cref{cor:extension-intersection}) such that
\eqals{
\|\mm_\eps\| \leq \|\mm_\eps\|_F \leq \tr(\mm_\eps) \leq C_1 \tau^{d/2}(1 + \eps^2\exp(\tfrac{C_2}{\tau} \eps^{-\frac{2}{\beta}})) \leq 2C_1 \tau^{d/2} \exp(\tfrac{C_2}{\tau} \eps^{-\frac{2}{\beta}})
}
By bounding $\|Q_\la^{-\frac{1}{2}}(\hat{v} - v)\|_{{\cal G}_\eta}$ via \cref{lm:Qla-v}, with $s = d$ and \cref{lm:eff-dim-gaussian}, then
\cref{eq:dec-learning-0} is bounded by 
\eqal{
\|\hat{p} - p\|_{L^2(\X)} & \leq \eps + C_3\tau^{d/2} e^{\frac{C_2}{\tau} \eps^{-\frac{2}{\beta}}}\left(\sqrt{\la} + e^{-\frac{c\,\sigma}{h}\log\frac{c\,\sigma}{h}}\right)  \\
& \qquad +  \frac{C_4 \log \frac{2}{\delta}}{n \la^{1/4}} ~+~ \left(\frac{C_5\tau^{d/2} (\log\frac{1}{\la})^{d} \log\frac{2}{\delta}}{n}\right)^{1/2}
}
$C_3 = 2 C_1\operatorname{vol}(\X) (2C + C^2)$,$C_4 = 5 C_4'$ where $C_4'$ is from \cref{lm:Qla-v} and depends only on $d$, while $C_5 = 50 C_5'$, where $C_5'$ is from \cref{lm:eff-dim-gaussian} and depends only on $d$. 
Setting $\eps = n^{-\frac{\beta}{2\beta + d}}$, $\tau = \eps^{-2/\beta}$ and $\la = \eps^{2 + 2d/\beta} = n^{-(2\beta+2d)/(2\beta + d)}$, since $1/(n \la^{1/4}) = \eps$ and $\eps^{d/\beta}/n = \eps^2$, then
\eqal{
\|\hat{p} - p\|_{L^2(\X)} & \leq (1 + C_3e^{C_2})\eps + C_3e^{C_2}\eps^{d/\beta}e^{-\frac{c\,\sigma}{h}\log\frac{c\,\sigma}{h}}\\
& \qquad + C_4 \log\tfrac{2}{\delta}\eps + C_5^{1/2} (\tfrac{2\beta+2d}{2\beta+d})^{d/2}\, \eps (\log n)^{d/2} (\log \tfrac{2}{\delta})^{1/2}.
}

\paragraph{Step 3. Controlling the number $m$ in terms of $h$}

The final result is obtained by controlling the number of points $m$ such that $h \leq \frac{1}{C'\sqrt{\tau}}$ (in order to be able to apply \cref{thm:compr-fill-dist}) and such that $h \leq  \frac{c}{\sqrt{\tau}}\max(e, (1+\frac{d}{\beta}) \log\frac{1}{\eps})^{-1}$, so $\log\frac{c\,\sigma}{h} \geq 1$ and 
\eqals{
\eps^{d/\beta}e^{-\frac{c\,\sigma}{h}\log\frac{c\,\sigma}{h}} \leq \eps^{d/\beta}e^{-\frac{c\,\sigma}{h}} \leq \eps.
}
By, e.g. Lemma~12, page 19 of \cite{vacher2021dimension} and the fact that $[-1,1]^d$ is a convex set, we have that there exists two constants $C_6, C_7$ depending only on $d$ such that $h \leq C_6 m^{-1/d} (\log(C_7 m / \delta))^{1/d}$, with probability at least $1-\delta$. In particular, we want to find $m$ that satisfy
\eqals{
C_6 C_8^d \tau^{d/2} (\log 1/\eps)^d \, m \geq \log \tfrac{C_7 m}{\delta},
}
with $C_8 = \max(\frac{1}{c},C', e, 1+ d/\beta)$. With the same reasoning as in \cref{eq:mlogm}, we see that any $m$ satisfying $m \geq 2B \log(2A B)$ with $A = C_6 C_8^d \tau^{d/2} (\log 1/\eps)^d$ and $B = C_7/\delta$ suffices to guarantee the inequality above. In particular, since $\eps \leq n$ and $\eps^{d/\beta} = n^\frac{d}{2\beta+d} \leq n^d$ we choose
\eqals{
m = C_9 n^\frac{d}{2\beta+d} (\log n)^{d} ~ d \log(C_{10}^{1/d} n (\log n)).
}
with $C_9 = C_6 C_8^d$, $C_{10} = 2 C_9 C_7$. With this choice, we have
\eqals{
\|\hat{p} - p\|_{L^2(\X)} & \leq (1 + 2C_3e^{C_2})\eps + C_4\eps \log\tfrac{2}{\delta} + C_5^{\frac{1}{2}}(\tfrac{2\beta+2d}{2\beta+d})^{\frac{d}{2}}\, \eps (\log n)^{\frac{d}{2}} (\log \tfrac{2}{\delta})^{\frac{1}{2}}\\
& \leq (1 + 2C_3e^{C_2} + C_4 + C_5^{\frac{1}{2}} (\tfrac{2\beta+2d}{2\beta+d})^{\frac{d}{2}}) \, \eps \, (\log n)^{\frac{d}{2}} (\log \tfrac{2}{\delta})\\
& = C_{11} ~ n^{-\frac{\beta}{2\beta + d}} ~ (\log n)^{\frac{d}{2}} (\log \tfrac{2}{\delta}).
}
with $C_{11} = 1 + 2C_3e^{C_2} + C_4 + C_5^{\frac{1}{2}}  (\tfrac{2\beta+2d}{2\beta+d})^{\frac{d}{2}}$.
\end{proof}

\section{Operations}\label{app:operations}

We report here the derivation of the operations discussed in \cref{sec:operations} for PSD models. For simplicity in the following, given a vector $x\in\R^d$ and a positive vector $\eta\in\R_{++}^n$, with some abuse of notation, in the following we will denote $\eta\nor{x}^2$ for $x^\top\diag(\eta) x$ when clear from the context.

\subsection{Properties of the Gaussian Function}

We recall the following classical properties of Gaussian functions, which are key to derive the results in the following. For any two points $x_1,x_2\in\R^d$ and $\eta_1,\eta_2\in\R^{d}_{++}$, with the notation introduced above, let $c_\eta = \pi^{d/2} \det(\diag(\eta))^{-1/2}$ and $x_3 = \frac{\eta_1 x_1 + \eta_2 x_2}{\eta_1 +\eta_2}$ and $\eta_3 = \frac{\eta_1\eta_2}{\eta_1 + \eta_2}$. We have
\eqal{\label{eq:properties-gaussian-function}
    k_{\eta_1}(x,x_1) k_{\eta_2}(x,x_2)  ~=~ k_{\eta_1 + \eta_2}(x,x_3)k_{\eta_3}(x_1,x_2) ~~\quad\textrm{and}~~\quad \int k_\eta(x,x_1) dx  ~=~ c_\eta.
}
Additionally, we recall that the joint Gaussian kernel corresponds to the product kernels in the two variables, namely $k_{(\eta_1,\eta_2)}((x,y),(x',y')) = k_{\eta_1}(x,x') k_{\eta_2}(y,y')$.

We begin by recalling how the equalities in \cref{eq:properties-gaussian-function} can be derived. First, we recall that for any positive definite matrix $A\in\mathbb{S}_{++}^n$ (namely an invertible positive semi-definite matrix), the integral of the Gaussian function $e^{-\scal{x}{Ax}}$ is
\eqal{
    \int_{\R^d} e^{-\eta\nor{x}^2}~dx = \pi^{d/2} \det{A}^{-1/2},
}
which yields the required equality in \cref{eq:properties-gaussian-function} for $A = \diag(\eta)$ and $\eta\in\R_{++}^d$. 

For the second property in \cref{eq:properties-gaussian-function}, let $x_1,x_2\in\R^d$ and $\eta_1,\eta_2\in\R_{++}^d$. For any $x\in\R^d$ we have
\eqal{
    k_{\eta_1}(x,x_1)k_{\eta_2}(x,x_2) & = e^{-\eta_1\nor{x-x_1}^2 -\eta_2\nor{x-x_2}^2}.
}
By expanding the argument in the exponent we have
\eqal{
    \eta_1\nor{x-x_1}^2 & +\eta_2\nor{x-x_2}^2 \\
    & = (\eta_1+\eta_2)\nor{x}^2 - 2 \scal{x}{\eta_1x_2+\eta_2x_2} + \eta_1\nor{x_1}^2+\eta_2\nor{x_2}^2\\
    & = (\eta_1+\eta_2)\Big(\nor{x}^2 - 2 \scal{x}{\tfrac{\eta_1 x_1 + \eta_2 x_2}{\eta_1+\eta_2}}\pm\nor{\tfrac{\eta_1 x_1 + \eta_2 x_2}{\eta_1+\eta_2}}^2\Big)\\
    & \quad+ \eta_1\nor{x_1}^2+\eta_2\nor{x_2}^2\\
    & = (\eta_1+\eta_2)\nor{x - x_3}^2 +\eta_1\nor{x_1}^2 + \eta_2\nor{x_2}^2  -(\eta_1+\eta_2)\nor{x_3}^2,
}
where $x_3 = \tfrac{\eta_1 x_1 + \eta_2 x_2}{\eta_1+\eta_2}$. Now, 
\eqal{
\eta_1\nor{x_1}^2 + \eta_2\nor{x_2}^2 & -(\eta_1+\eta_2)\nor{x_3}^2 
\\& = \eta_1\nor{x_1}^2 + \eta_2\nor{x_2}^2  -\frac{\nor{\eta_1 x_1 + \eta_2 x_2}^2}{\eta_1+\eta_2}\\
& = \eta_1\nor{x_1}^2 + \eta_2\nor{x_2}^2  -\frac{\eta_1^2\nor{x_1}^2 +2\eta_1\eta_2\scal{x_1}{x_2} + \eta_2^2\nor{x_2}^2 }{\eta_1+\eta_2}\\
& = \frac{\eta_1\eta_2\nor{x_1}^2 - 2\eta_1\eta_2\scal{x_1}{x_2} + \eta_1\eta_2\nor{x_2}^2}{\eta_1+\eta_2}\\
& = \frac{\eta_1\eta_2}{\eta_1+\eta_2}\nor{x_1-x_2}.
}
We conclude that, for $\eta_3 = \tfrac{\eta_1\eta_2}{\eta_1+\eta_2}$, we have
\eqal{
    k_{\eta_1}(x,x_1)k_{\eta_2}(x,x_2) & = e^{-\eta_1\nor{x-x_1}^2 -\eta_2\nor{x-x_2}^2}\\
    & = e^{-(\eta_1+\eta_2)\nor{x-x_3}^2 - \frac{\eta_1\eta_2}{\eta_1+\eta_2}\nor{x_1-x_2}^2 }\\
    & = k_{(\eta_1+\eta_2)}(x,x_3)k_{\eta_3}(x_1,x_2),
}
as required.

Finally, we recall that for any vectors $x,x'\in\R^{d_1}$ and $y,y'\in\R^{d_2}$ and positive weights $\eta_1\in\R_{++}^{d_1},\eta_2\in\R_{++}$, we will make use of the fact that 
\eqal{
    k_{(\eta_1,\eta_2)}((x,y),(x',y')) & = e^{-(x-x',y-y')^\top \diag(\eta_1,\eta_2)(x-x',y-y')}\\
    & = e^{-(x-x')^\top \diag(\eta_1) x - y^\top \diag(\eta_2) y'}\\
    & = k_{\eta_1}(x,x')k_{\eta_2}(y,y')
}

\subsection{Evaluation}
We recall that, given a PSD model $\pp{x}{A,X,\eta}$ evaluating it in a point $x_0$ writes as
\eqal{
    \pp{x=x_0}{A,X,\eta} = K_{X,x_0,\eta}^\top A K_{X,x_0,\eta}.
}
Given a PSD model of the form $\pp{x,y}{A,[X,Y],(\eta_1,\eta_2)}$ we denote partial evaluation in a vector $y_0\in\Y$ as
\eqal{
    \pp{x,y=y_0}{A,[X,Y],(\eta_1,\eta_2)} & = \sum_{i,j=1}^n A_{i,j}k_{\eta_1}(x_i,x)k_{\eta_1}(x_j,x)k_{\eta_2}(y_i,y_0)k_{\eta_2}(y_j,y_0)\\
    & = \sum_{i,j=1}^n \bigg[k_{\eta_2}(y_i,y_0)A_{ij}k_{\eta_2}(y_j,y_0)\bigg]k_{\eta_1}(x_i,x)k_{\eta_1}(x_j,x)\\
    & = \pp{x}{A\circ(K_{Y,y_0,\eta_2}K_{Y,y_0,\eta_2}^\top), X,\eta_1}
}

\subsection{Integration and Marginalization}

We begin by showing the result characterizing the marginalization of a PSD model with respect to a number of random variables.

\PMarginalization*

\begin{proof}
The result is obtained as follows \eqal{
    \int \pp{x,y}{A,[X,Y],(\eta,\eta')} ~dy & = \sum_{i,j=1}^n A_{ij}k_{\eta'}(y_i,y)k_{\eta'}(y_j,y)\int k_{\eta}(x_i,y)k_{\eta}(x_j,y)~dx\\
    & = \sum_{i,j=1}^n A_{ij}k_{\frac{\eta}{2}}(x_i,x_j)k_{\eta'}(y_i,y)k_{\eta'}(y_j,y)\int k_{2\eta}(\tfrac{x_i+x_j}{2},x)~dx\\
    & = c_{2\eta} \sum_{i,j=1}^n \Big[A_{ij} k_{\frac{\eta_2}{2}}(x_i,x_j)\Big]k_{\eta'}(y_i,y)k_{\eta'}(t_j,y)\\
    & = \pp{y}{c_{2\eta}(A \circ K_{X,X,\frac{\eta}{2}}), Y, \eta' }
}
\end{proof}

\paragraph{Integration}
Analogously, we can write in matrix form the full integral of a PSD model (with respect to all its variables): let $\pp{x}{A,X,\eta}$, we have
\eqal{
    \int \pp{x}{A,X,\eta}~dx & = \sum_{i,j=1}^n A_{ij}\int k_\eta(x_i,x)k_\eta(x_j,x)~dx\\
    & = \sum_{i,j=1}^n A_{ij} k_{\frac{\eta}{2}}(x_i,x_j)\int k_{2\eta}(\tfrac{x_i+x_j}{2},x)~dx\\
    & = c_{2\eta}\sum_{i,j=1}A_{ij}k_{\frac{\eta}{2}}(x_i,x_j)\\
    & = c_{2\eta}\tr(A K_{X,X,\frac{\eta}{2}}),
}
which yields \cref{eq:integration}.

\paragraph{Integration on the Hypercube}
In \cref{rem:integration-on-hypercube} we commented upon restricting integration and marginalization on the hypercube $H = \prod_{t=1}^d [a_t,b_t]$. Both operations can be performed by slightly changing the integrals above. In particular, let $G\in\R^{n \times n}$ be the matrix with $i,j$-th entry equal to
\eqal{
    G_{ij} = c_{2\eta} \prod_{t=1}^d\operatorname{erf}\left(\sqrt{2\eta_t}\bigg(b_t - \frac{x_{i,t}+x_{j,t}}{2}\bigg)\right )-\operatorname{erf}\left(\sqrt{2\eta_t}\bigg(a_t - \frac{x_{i,t}+x_{j,t}}{2}\bigg)\right)
}
Then, integration becomes
\eqal{
    \int_H \pp{X;A,X,\eta} = \tr((A\circ K_{X,X,\frac{\eta}{2}}) G)
}
and marginalization
\eqal{
    \int \pp{x,y}{A,[X,Y],(\eta,\eta')} ~dy & = \pp{y}{c_{2\eta}A\circ K_{X,X,\frac{\eta}{2}} \circ G. }
}
This result is a corollary of \cref{prop:mean-variance-characteristic} that we prove below. 
\PMeanVariance*

\begin{proof}
Following the same proof for \cref{prop:marginalization}, we have
\eqal{
    \int g(x)\pp{x}{A,[X,Y],(\eta)} ~dy & = \sum_{i,j=1}^n A_{ij}\int g(x)k_{\eta}(x_i,y)k_{\eta}(x_j,y)~dx\\
    & = \sum_{i,j=1}^n A_{ij}k_{\frac{\eta}{2}}(x_i,x_j)\int g(x) k_{2\eta}(\tfrac{x_i+x_j}{2},x)~dx\\
    & = \sum_{i,j=1}^n \Big[A_{ij} k_{\frac{\eta_2}{2}}(x_i,x_j)\Big] c_{g,2\eta}\big(\frac{x_i+x_j}{2}\big)\\
    & = \tr((A\circ K_{X,X,\eta/2}) G).
}

\end{proof}

\subsection{Multiplication}

\PMultiplication*

\begin{proof}
We begin by explicitly writing the product between the two PSD models
\eqal{
& \pp{x,y}{A,[X,Y],(\eta_1,\eta_2)}
     \pp{y,z}{B,[Y',Z],(\eta_2',\eta_3)} \\
    & \quad~~ = \Big(\sum_{i,j=1}^{n} A_{ij} k_{\eta_1}(x_i,x)k_{\eta_1}(x_j,x)k_{\eta_2}(y_i,x)k_{\eta_2}(y_j,y)\Big) ~ \times\\
    &\quad~~\qquad \Big(\sum_{\ell,h=1}^m B_{\ell h} k_{\eta_2'}(y_\ell',y)k_{\eta_2'}(y_h',y)k_{\eta_3}(z_\ell,z)k_{\eta_3}(z_h,z)\Big)\\
    & \quad~~= \sum_{i,j,\ell,h=1}^{n,m} \big[A_{ij} B_{\ell h} k_{\tilde\eta_2}(y_i,y_\ell')k_{\tilde\eta_2}(y_j,y_h')\big]k_{\eta_2+\eta_2'}(\tfrac{\eta_2y_i+\eta_2'y_\ell'}{\eta_2+\eta_2'},y)k_{\eta_2+\eta_2'}(\tfrac{\eta_2y_j+\eta_2'y_h'}{\eta_2+\eta_2'},y)\\
    & \quad~~\qquad\qquad\quad k_{\eta_1}(x_i,x)k_{\eta_1}(x_j,x)k_{\eta_3}(z_\ell,z)k_{\eta_3}(z_h,z),
}
where we have coupled together the pairs $(i,\ell)$ and $(j,h)$ using the product rule between Gaussian functions. Let
\eqal{
C = (A\otimes B)\circ(\vec(K_{Y',Y,\tilde\eta_2})\vec(K_{Y',Y,\tilde\eta_2})^\top),
}
and denote by  $\mathfrak{i}:\N\times\N\to\N$ now the indexing function such that $\mathfrak{i}(i,\ell) = (i-1)m + \ell$. It follows that the term 
\eqal{
A_{ij}B_{\ell h}k_{\tilde\eta_2}(y_i,y_\ell')k_{\tilde\eta_2}(y_j,y_h') = C_{\mathfrak{i}(i,\ell)\mathfrak{i}(j,h)},
}
corresponds to the $(\mathfrak{i}(i,\ell)\mathfrak{i}(j,h))$-th entry of the matrix $C$. Analogously, let: 
\begin{itemize}
    \item $\tilde y_{\mathfrak{i}(i,\ell)} = \frac{\eta_2 y_i + \eta_2' y_\ell'}{\eta_2+\eta_2'}$ is the $\mathfrak{i}(i,\ell)$-th row of the matrix $\widetilde{Y}= (Y\frac{\eta_2}{\eta_2+\eta_2'}) \otimes \mathbf{1}_m +  \mathbf{1}_n\otimes (Y'\frac{\eta_2'}{\eta_2+\eta_2'})$, which is the $nm \times d_2$ matrix whose rows correspond to all possible pairs from $Y$ and $Y'$ respectively. 
     
    \item $\tilde x_{\mathfrak{i}(i,\ell)} = x_i$ is the $\mathfrak{i}(i,\ell)$-th row of the matrix $\widetilde{X} = X\otimes\mathbf{1}_m$, namely the $nm\times d_1$ matrix containing $m$ copies of each row of $X$. 
    
    \item $\tilde z_{\mathfrak{i}(i,\ell)} = z_\ell$ is the $\mathfrak{i}(i,\ell)$-th row of the matrix $\widetilde{Z}= \mathbf{1}_n\otimes Z$, namely the $nm\times d_3$ matrix containing $n$ copies of $Z$. 
\end{itemize}
Then we have that 
\eqal{
& \sum_{i,j,\ell,h=1}^{n,m}  \big[A_{ij} B_{\ell h} k_{\tilde\eta_2}(y_i,y_\ell') k_{\tilde\eta_2}(y_j,y_h')\big] k_{\eta_2+\eta_2'}(\tfrac{\eta_2y_i+\eta_2'y_\ell'}{\eta_2+\eta_2'},y)k_{\eta_2+\eta_2'}(\tfrac{\eta_2y_j+\eta_2'y_h'}{\eta_2+\eta_2'},y)\\
    & \quad\qquad\qquad\quad k_{\eta_1}(x_i,x)k_{\eta_1}(x_j,x)k_{\eta_3}(z_\ell,z)k_{\eta_3}(z_h,z)\\
    & \quad= \sum_{i,j,\ell,h=1}^{n,m} C_{\mathfrak{i}(i,\ell)\mathfrak{i}(j,h)} k_{\eta_2+\eta_2'}(\tilde y_{\mathfrak{i}(i,\ell)},y)k_{\eta_2+\eta_2'}(\tilde y_{\mathfrak{i}(j,h)},y)k_{\eta_1}(\tilde x_{\mathfrak{i}(i,\ell)},x)k_{\eta_1}(\tilde x_{\mathfrak{i}(j,h)},x)\\
    & \quad\qquad\qquad\qquad k_{\eta_3}(\tilde z_{\mathfrak{i}(i,\ell)},x)k_{\eta_3}(\tilde z_{\mathfrak{i}(j,h)},x)\\
    & \quad= \sum_{s,t}^{nm} C_{st} k_{\eta_2+\eta_2'}(\tilde y_{s},y)k_{\eta_2+\eta_2'}(\tilde y_{t},y)k_{\eta_1}(\tilde x_{s},x)k_{\eta_1}(\tilde x_{t},x)k_{\eta_3}(\tilde z_{s},x)k_{\eta_3}(\tilde z_{t},x)\\
    & \quad= \pp{x,y,z}{C,[\widetilde{X},\widetilde{Y},\widetilde{Z}],(\eta_1,\eta_2+\eta_2',\eta_3)},
}
as desired. 
\end{proof}

\subsection{Reduction}

The reduction operation leverages the structure of the base matrix $X \otimes \mathbf{1}_m$ to simplify the PSD model. To this end, fenote by $\widetilde X = X\otimes \mathbf{1}_m$ and consider again the indexing function $\mathfrak{i}(i,\ell) = (i-1)m + \ell$. Then (see also the proof of \cref{prop:multiplication} we have that the $\mathfrak{i}(i,\ell)$-th row of $\widetilde X$ is $\tilde x_{\mathfrak{i}(i,\ell)} = x_i$, the $i$-th row of $X$. Therefore we have
\eqal{
    \pp{x}{A,\widetilde{X},\eta} & = \sum_{s,t=1}^{nm} A_{st} k_\eta(\tilde x_s,x)k_\eta(\tilde x_t,x)\\
    & = \sum_{i,j,\ell,h=1}^{n,m} A_{\mathfrak{i}(i,\ell)\mathfrak{i}(j,h)} k_\eta(\tilde x_{\mathfrak{i}(i,\ell)},x)k_\eta(\tilde x_{\mathfrak{i}(j,h)},x)\\
    & = \sum_{i,j,\ell,h=1}^{n,m} A_{\mathfrak{i}(i,\ell)\mathfrak{i}(j,h)} k_\eta(x_i,x)k_\eta(x_j,x)\\
    & = \sum_{i,j=1}^n \Big[\sum_{\ell,h=1}^m A_{\mathfrak{i}(i,\ell)\mathfrak{i}(j,h)}\Big] k_\eta(x_i,x)k_\eta(x_j,x)\\
    & = \sum_{i,j=1}^n B_{ij} k_\eta(x_i,x)k_\eta(x_j,x),
}
where $B$ is a $n\times n$ PSD matrix and each of its entries is the sum of the entries of $A$ corresponding to the repeated rows in $X \otimes \mathbf{1}_m$. Therefore $B = (I_m \otimes \mathbf{1}_n^\top) A (I_m \otimes \mathbf{1}_n)$ as required. 

We give here the explicit form for the Markov transition in \cref{cor:markov}

\CMarkovTransition*

\begin{proof}
We first multiply the two PSD model to obtain, by \cref{prop:multiplication}
\eqal{
\pp{x,y}{A,[X,Y],(\eta_1,\eta_2)}\pp{y}{B,Y',\eta_2'} = \pp{x,y}{A_1,[\widetilde{X}, \widetilde{Y}],(\eta,\eta_2 + \eta_2')},
}
with 
\eqal{
    A_1 = (A \otimes B) \circ (\vec(K_{Y,Y',\tilde{\eta_2}})\vec(K_{Y,Y',\tilde{\eta_2}})^\top) \qquad \tilde\eta_2 = \frac{\eta_2\eta_2'}{\eta_2+\eta_2},
}
and $\tilde X = X \otimes\mathbf{1}_m$ and $\widetilde{Y} = (Y\frac{\eta_2}{\eta_2+\eta_2'}) \otimes \mathbf{1}_m +  \mathbf{1}_n\otimes (Y'\frac{\eta_2'}{\eta_2+\eta_2'})$. Then, we proceed with marginalization. By \cref{prop:marginalization}, we have
\eqal{
    \int \pp{x,y}{A_1,[\widetilde{X}, \widetilde{Y}],(\eta,\eta_2 + \eta_2')}~dy = \pp{x}{A_2,\widetilde{X},\eta},
}
where $A_2 = c_{2(\eta_2+\eta_2')}A_1 \circ K_{\widetilde Y,\widetilde Y, (\eta_2+\eta_2')/2}$. Finally, since $\widetilde X = X \otimes \mathbf{1}_m$, by reduction \cref{eq:reduction}, we have
\eqal{
    \pp{x}{A_2,\widetilde{X},\eta} = \pp{x}{C,X,\eta}
}
with $C = (I_n \otimes \mathbf{1}_m^\top) A_2 (I_n \otimes \mathbf{1}_m)$, which concludes the derivation. 
\end{proof}

\subsection{Hidden Markov Models}

We conclude this section by providing the derivation of the HMM inference in \cref{sec:hmm}. 

\PHMM*

\begin{proof}
Let $\pp{x_t}{A_{t-1},X_{t-1},\eta_{t-1}}$ be the estimate $\hat p(x_{t-1}|y_{1:t-1})$ obtained at the previous step, with $A_{t-1}\in\psd^{n_{t-1}}$ and $X_{t-1}\in\R^{n_{t-1}}$ We then proceed by performing the operations in \cref{eq:hmm-approximate}. 

{\itshape Observation $\hat\omega_t(x)$}. A new observation $y_t$ is received. By \cref{eq:partial-evaluation} we have
\eqal{
    \hat\omega_t(x) = \hat\omega(y=y_t,x) = \pp{y=y_t,x_+}{C,[Y,X'],(\eta_{obs},\eta')} = \pp{x}{C_t,X',\eta'},
}
with 
\eqal{
C_t = C \circ (K_{Y,y_t,\eta_obs}K_{Y,y_t,\eta_obs}^\top).
}

{\itshape Product $\hat\beta_t(x_+,x
) = \hat\tau(x_+,x)\hat p(x|y_{1:t-1})$.} We perform the product between the transition function and the previous state estimation
\eqal{
    \hat\beta(x_+,x) & = \pp{x_+,x}{B,[X_+,X],(\eta_+,\eta)}\pp{x}{A_{t-1},X_{t-1},\eta_{t-1}}\\
    & = \pp{x_+,x}{B_t,[X_+\otimes \mathbf{1}_{n_{t-1}}, \widetilde{X}_t], (\eta_+,\eta + \eta_t)},
}
with 
\eqal{
    B_t = (B \otimes A_{t-1})\circ(\vec(K_{X_t,X,\tilde\eta_t})\vec(K_{X_t,X,\tilde\eta_t})^\top) \qquad \tilde\eta_t = \frac{\eta \eta_t}{\eta +\eta_t},
}
and
\eqal{
\widetilde{X}_t \tfrac{\eta}{\eta+\eta_t} X\otimes \mathbf{1}_{n_{t-1}} + \tfrac{\eta_t}{\eta+\eta_t}\mathbf{1}_n\otimes X_t.
}

{\itshape Marginalization (+ Reduction) $\hat\beta_t(x_+) = \int \hat\beta_t(x_++,x)~dx$.} We perform marginalization by \cref{prop:marginalization} to obtain
\eqal{
    \hat\beta_t(x_+) & = \int \hat\beta_t(x_+,x)~dx\\
    & = \int \pp{x_+,x}{B_t,[X_+\otimes \mathbf{1}_{n_{t-1}}, \widetilde{X}_t], (\eta_+,\eta + \eta_t)}~dx\\
    & = \pp{x_+}{D_t',X_+\otimes \mathbf{1}_{n_{t-1}}, \eta_+}
}
with
\eqal{
    D_t' = c_{2(\eta + \eta_t)} B_t \circ K_{\widetilde X_t, \widetilde X_t, \tilde\eta_t/2}.
}
Since the PSD model has a redundant base point matrix, we can apply reduction from \cref{eq:reduction}, to obtain
\eqal{
    \hat\beta_t(x_+) & = \pp{x_+}{D_t',X_+\otimes \mathbf{1}_{n_{t-1}}, \eta_+}\\
    & = \pp{x_+}{D_t,X_+, \eta_+},
}
where
\eqal{
    D_t = (I_n \otimes \mathbf{1}_{n_{t-1}}^\top) D_t' (I_n \otimes \mathbf{1}_{n_{t-1}}).
}

{\itshape Multiplication $\hat\pi_t(x_+) = \hat\omega_t(x_+)\hat\beta_t(x_+)$.} We now multiply the observation term with the state estimation to obtain
\eqal{
    \hat\pi_t(x_+) & = \hat\omega_t(x_+)\hat\beta_t(x_+)\\
    & = \pp{x_+}{C_t,X',\eta'}\pp{x_+}{D_t,X_+, \eta_+}\\
    & = \pp{x_+}{E_t,\widetilde{X}, \eta'+\eta_+},
}
with 
\eqal{
    E_t = (C_t \otimes D_t) \circ (\vec(K_{X,X',\tilde\eta'})\vec(K_{X,X',\tilde\eta'})^\top) \qquad \tilde\eta' = \frac{\eta'\eta_+}{\eta'+\eta_+}
}
and $\tilde X = (X'\tfrac{\eta'}{\eta'+\eta_+})\otimes \mathbf{1}_n + \mathbf{1}_m\otimes (X_+ \tfrac{\eta_+}{\eta'+\eta_+})$. 

{\itshape Normalization $\hat p(x_t|y_{1:t}) = \hat\pi_t(x_+)/\int \hat\pi_t(x_+)~dx_+$.} We finally integrate $\hat\pi_t(x_+)$ in order to normalize it. By \cref{eq:integration} we have
\eqal{
    c_t = \int \hat\pi_t(x_+)~dx_+ & = \int \pp{x_+}{E_t,\widetilde{X}, \eta'+\eta_+}~dx_+ = c_{2(\eta'+\eta_+)} \tr(E_t K_{\tilde X,\tilde X,(\eta'+\eta_+)/2}),
}
and therefore
\eqal{
    \hat p(x_t|y_{1:t}) = \frac{\hat\pi_t(x_+)}{\int \hat\pi_t(x_+)~dx_+} = \pp{x_t}{A_t,\widetilde{X},\eta'+\eta_+},
}
with 
\eqal{
    A_t = E_t/c_t.
}
This concludes the proof showing that, at every step, $\hat p(x_t|y_{1:t})$ has always same base point matrix $\tilde X$ and parameters $\eta'+\eta_+$. Note that the proof above also recovers explicitly the steps in \cref{alg:psd-hmm}.
\end{proof}

\end{document}